\documentclass[a4paper,UKenglish,cleveref, autoref, thm-restate]{lipics-v2021}
\usepackage{apxproof}
\usepackage{todonotes}

\title{A canonical generalization of OBDD} 

\author{Florent Capelli}{Univ. Artois, CNRS, UMR 8188, CRIL, F-62300 Lens, France}{capelli@cril.fr}{https://orcid.org/0000-0002-2842-8223}{}

\author{YooJung Choi}{Arizona State University}{yj.choi@asu.edu}{https://orcid.org/0009-0009-8102-4170}{}

\author{Stefan Mengel}{Univ. Artois, CNRS, UMR 8188, CRIL, F-62300 Lens, France}{mengel@cril-lab.fr}{https://orcid.org/0000-0003-1386-8784}{}

\author{Martín Muñoz}{Univ. Artois, CNRS, UMR 8188, CRIL, F-62300 Lens, France}{munoz@cril.fr}{https://orcid.org/0009-0003-3294-6159}{}

\author{Guy Van den Broeck}{University of California, Los Angeles}{guyvdb@cs.ucla.edu}{https://orcid.org/0000-0003-3434-2503}{}

\authorrunning{F. Capelli, Y. Choi, S. Mengel, M. Muñoz and G. Van den Broeck} 
\Copyright{Florent Capelli, YooJung Choi, Stefan Mengel, Martín Muñoz and Guy Van den Broeck}

\ccsdesc[500]{Computing methodologies~Knowledge representation and reasoning}
\ccsdesc[500]{Theory of computation~Constraint and logic programming}

\keywords{Knowledge Compilation} 

\category{} 

\relatedversion{} 

\nolinenumbers 

\EventEditors{John Q. Open and Joan R. Access}
\EventNoEds{2}
\EventLongTitle{42nd Conference on Very Important Topics (CVIT 2016)}
\EventShortTitle{CVIT 2016}
\EventAcronym{CVIT}
\EventYear{2016}
\EventDate{December 24--27, 2016}
\EventLocation{Little Whinging, United Kingdom}
\EventLogo{}
\SeriesVolume{42}
\ArticleNo{23}

\usepackage{xifthen}

\usepackage{algorithm,algpseudocode}
\let\poly\relax
\DeclareMathOperator{\poly}{poly}
\DeclareMathOperator{\lit}{lit}

\NewDocumentCommand{\wrt}{}{with respect to}

\NewDocumentCommand{\N}{}{\ensuremath{\mathbb{N}}}

\NewDocumentCommand{\ua}{O{x} O{d}}{\ensuremath{\langle #1/#2 \rangle}}
\NewDocumentCommand{\restr}{m m}{\ensuremath{#1|_{#2}}}

\NewDocumentCommand{\calT}{}{\ensuremath{\mathcal{T}}}

\NewDocumentCommand{\calP}{}{\ensuremath{\mathcal{P}}}

\NewDocumentCommand{\sub}{}{\ensuremath{\mathsf{sub}}}

\NewDocumentCommand{\out}{O{C}}{\ensuremath{\mathsf{out}(#1)}}
\NewDocumentCommand{\sib}{O{g} O{g_1}}{\ensuremath{\mathsf{sib}(#2,#1)}}
\NewDocumentCommand{\HWB}{}{\ensuremath{\mathsf{HWB}}}
\NewDocumentCommand{\MUX}{}{\ensuremath{\mathsf{MUX}}}

\NewDocumentCommand{\hwidth}{m m m}{%
  \ifthenelse{\isempty{#2}}%
  {\ensuremath{\mathsf{#3}(#1)}}%
  {\ensuremath{\mathsf{#3}(#1, #2)}}%
}
\NewDocumentCommand{\how}{m O{}}{\hwidth{#1}{#2}{how}}
\NewDocumentCommand{\fhow}{m O{}}{\hwidth{#1}{#2}{fhow}}
\NewDocumentCommand{\bhow}{m O{}}{\beta\text{-}\hwidth{#1}{#2}{how}}
\NewDocumentCommand{\bfhow}{m O{}}{\beta\text{-}\hwidth{#1}{#2}{fhow}}
\NewDocumentCommand{\sfhow}{m O{}}{\hwidth{#1}{#2}{sfhow}}
\RenewDocumentCommand{\show}{m O{}}{\hwidth{#1}{#2}{show}}
\NewDocumentCommand{\htw}{m O{}}{\hwidth{#1}{#2}{htw}}
\NewDocumentCommand{\fhtw}{m O{}}{\hwidth{#1}{#2}{fhtw}}
\NewDocumentCommand{\bfhtw}{m O{}}{\beta\text{-}\hwidth{#1}{#2}{fhtw}}
\NewDocumentCommand{\bhtw}{m O{}}{\beta\text{-}\hwidth{#1}{#2}{htw}}
\NewDocumentCommand{\bpfhtw}{m O{}}{\beta\text{-}\hwidth{#1}{#2}{fhtw'}}
\NewDocumentCommand{\bphtw}{m O{}}{\beta\text{-}\hwidth{#1}{#2}{htw'}}
\NewDocumentCommand{\nsw}{m O{}}{\hwidth{#1}{#2}{nsw}}
\NewDocumentCommand{\tw}{m O{}}{\hwidth{#1}{#2}{tw}}
\NewDocumentCommand{\ptw}{m O{}}{\hwidth{#1}{#2}{ptw}}
\NewDocumentCommand{\itw}{m O{}}{\hwidth{#1}{#2}{itw}}
\NewDocumentCommand{\fw}{m O{}}{\hwidth{#1}{#2}{fw}}

\NewDocumentCommand{\ccw}{m O{}}{\hwidth{#1}{#2}{ccw}}

\NewDocumentCommand{\gprim}{O{F}}{\ensuremath{{\mathsf{Prim}(#1)}}}
\NewDocumentCommand{\ginc}{O{F}}{\ensuremath{{\mathsf{Inc}(#1)}}}

\NewDocumentCommand{\syncdecdnnf}{O{\calP}}{\ensuremath{\mathsf{sync\text{-}decDNNF(#1)}}}

\usepackage{color}

\RenewDocumentCommand{\to}{}{\ensuremath{\rightarrow}}
\NewDocumentCommand{\var}{m}{\ensuremath{\mathsf{var}(#1)}}

\NewDocumentCommand{\BPO}{O{H} O{p}}{\ensuremath{\mathsf{BPO}(#1,#2)}}

\NewDocumentCommand{\prim}{O{F}}{\ensuremath{\mathsf{G_{prim}}(#1)}}
\NewDocumentCommand{\inc}{O{F}}{\ensuremath{\mathsf{G_{inc}}(#1)}}
\NewDocumentCommand{\neigh}{O{v} O{H}}{\ensuremath{N_{#2}(#1)}}
\NewDocumentCommand{\oneigh}{O{v} O{H}}{\ensuremath{N^*_{#2}(#1)}}

\NewDocumentCommand{\inputs}{O{v}}{\ensuremath{\mathsf{inputs}}(#1)}

\NewDocumentCommand{\rel}{O{C}}{\ensuremath{\mathsf{rel}({#1})}}
\NewDocumentCommand{\nrel}{O{g} O{d}}{\ensuremath{\mathsf{nrel}({#1,#2})}}

\NewDocumentCommand{\ine}{O{g}}{\ensuremath{\mathsf{in_E}}(#1)}
\NewDocumentCommand{\oute}{O{g}}{\ensuremath{\mathsf{out_E}}(#1)}
\NewDocumentCommand{\ing}{O{g}}{\ensuremath{\mathsf{in}}(#1)}
\NewDocumentCommand{\outg}{O{g}}{\ensuremath{\mathsf{out}}(#1)}
\NewDocumentCommand{\edges}{O{C} O{}}{\ensuremath{\mathsf{edges}_{#2}(#1)}}
\NewDocumentCommand{\gates}{O{C} O{}}{\ensuremath{\mathsf{gates}_{#2}(#1)}}

\DeclareMathOperator{\size}{size}

\RenewDocumentCommand{\size}{m}{\ensuremath{\|#1\|}}

\NewDocumentCommand{\ccq}{O{Q} O{\tau}}{\ensuremath{\mathsf{cc}({#1}, {#2})}}
\NewDocumentCommand{\pcc}{O{Q} O{i} O{\pi}}{\ensuremath{{#3}\text{-}\mathsf{cc}(#1,#2)}}
\NewDocumentCommand{\apcc}{O{Q} O{\pi}}{\ensuremath{{#2}\text{-}\mathsf{cc}(#1)}}

\NewDocumentCommand{\dpll}{O{Q} O{\tau} O{\pi}}{\ensuremath{\mathsf{DPLL}({#1}, {#2}, {#3})}}
\NewDocumentCommand{\rec}{O{Q} O{\pi}}{\ensuremath{\mathbf{R}_{#1,#2}}}
\NewDocumentCommand{\gatoms}{O{i} O{Q} O{\pi}}{\ensuremath{G_{#2,#3}^{#1}}}

\NewDocumentCommand{\bin}{m O{b}}{\ensuremath{\widetilde{#1}^{#2}}}
\NewDocumentCommand{\debin}{m O{b}}{\ensuremath{\bar{#1}^{#2}}}
\NewDocumentCommand{\ans}{O{Q}}{\ensuremath{{{\llbracket #1 \rrbracket}}}}
\NewDocumentCommand{\sans}{O{Q} O{D}}{\ensuremath{{{\llbracket #1 \rrbracket}_{#2}}}}

\NewDocumentCommand{\fN}{O{i} O{H} O{\pi}}{\ensuremath{\mathcal{N}_{#2,#3}(#1)}}

\NewDocumentCommand{\tup}{O{}}{\ensuremath{\langle #1 \rangle}}

\begin{document}

\maketitle

\begin{abstract}
  We introduce Tree Decision Diagrams (TDD) as a model for Boolean functions that generalizes OBDD. They can be seen as a restriction of structured d-DNNF; that is, d-DNNF that respect a vtree $T$. We show that TDDs
enjoy the same tractability properties as OBDD, such as model counting, enumeration, conditioning, and apply, and are more succinct. In particular, we show that CNF formulas of treewidth $k$ can be represented by TDDs of FPT size, which is known to be impossible for OBDD.  We study the complexity of compiling CNF formulas into deterministic TDDs via bottom-up compilation and relate the complexity of this approach with the notion of factor width introduced by Bova and Szeider.

\end{abstract}

\section{Introduction}

\emph{Knowledge compilation} is the systematic study of different representations of knowledge, often in the form of Boolean functions, but also for preferences~\cite{FargierMM24}, actions in planning~\cite{pliego2015decision}, product configuration~\cite{sundermann2024benefits,renault1}, databases~\cite{OlteanuZ12}, etc. 
To compare different data structures representing the same type of data, following the groundbreaking work of Darwiche and Marquis~\cite{DarwicheM2002}, one analyzes them with respect to a list of potential desirable properties that they might have, generally a set of tractable operations and queries on them. There is a general observed trade-off between usefulness (what can one do efficiently with a data structure?) and succinctness (how small is the representation in a specific form?): on one end of the spectrum, there are representations like OBDD that allow many useful operations, but are rather verbose. On the other end, there are, e.g.,~DNNF that are far more succinct but allow only few operations efficiently. Knowledge compilation explores the space between these two extremes and aims to provide representation languages with different trade-offs for different applications.

One important operation in knowledge compilation is the so-called {\em apply} operation which is, given two representations of the same format, to compute a representation of a target Boolean combination, most importantly, their conjunction. The most prominent knowledge compilation languages that support this operation efficiently are OBDD~\cite{Bryant92} 
and SDD~\cite{Darwiche11}. The apply operation is of special practical importance because it is often used as the basis of \emph{bottom-up} compilation of systems of constraints into a different target representation. The idea is to first compile the individual constraints into the target format and then iteratively conjoin them with the apply operation. In particular, this is the most common approach for constructing OBDD~\cite{cudd} and SDD~\cite{Choi_Darwiche_2013}. To avoid size blow-ups during bottom-up compilation, it is common to try to shrink the currently compiled form, which for OBDD is possible because they can be turned into a canonical minimal form, i.e., a form of minimal size and unique, up to isomorphism, among all equivalent OBDD with the same variable order. Canonicity is also useful to efficiently test equivalence between two given OBDD. For SDD, which are in general exponentially smaller than OBDD~\cite{Bova16}, the situation is more complicated~\cite{van2015role}: while they have a canonical form, it is not minimal---in fact, it can be exponentially larger than the smallest equivalent SDD. 
Moreover, canonical SDD are not stable under conjunction, 
as the conjunction of two canonical SDD can become exponentially larger than each after canonization.

In this paper, we introduce and analyze a new knowledge compilation language which we call Tree Decision Diagrams (TDD). We show that TDD have various desirable properties of OBDD, such as having an efficient apply algorithm, a canonical form that is also minimal, and an efficient algorithm to find it for any given non-minimal TDD. We show that, as is the case for OBDD, the size of a canonical TDD can be characterized by certain subfunction counts which gives a very clean understanding of which functions can efficiently be compiled into a TDD. 
We highlight that, in contrast to SDD, canonical TDD can be efficiently combined via apply into a new canonical TDD.



Since TDD have efficient apply and minimization algorithms, they are a good target language for bottom-up compilation.
As a proof of concept, we present a simple algorithm that allows compiling CNF formulas and circuits of bounded treewidth efficiently. While compilation results in this setting were known before~\cite{Darwiche04,PipatsrisawatD10,BovaCMS15,AmarilliCMS20,BovaS17}, our approach compiles into a more restricted language with better properties. We highlight that these results had rather involved dynamic programming solutions, in contrast to our compilation algorithm which simply performs apply and minimization in a bottom-up fashion.
Our results depend on the characterization by subfunction counts. However, crucially, this argument is only used in the analysis and not in the algorithm.


The paper is organized as follows: \cref{sec:prelim} introduces necessary preliminaries. \cref{sec:prop:dd} defines the notion of TDD, \cref{sec:transformations} presents the transformations that are tractable for TDDs. \cref{sec:prop:minimization} contains the minimization procedures for TDD and shows that they are canonical. \cref{sec:prop:struct-cnf} establishes bottom-up compilation of TDD and uses it to compile bounded treewidth formulas and circuits. Finally, \cref{sec:prop:tdd-vs-rest} compares TDDs with other representation languages. Due to page limit, we moved most of the proofs to the appendix. Statements whose proof can be found in this appendix are marked with a $(\star)$ symbol. 


\section{Preliminaries}\label{sec:prelim}

\textbf{Assignments and Boolean functions.} Given two sets $A$ and $B$, we denote by $B^A$ the set of functions from $A$ to $B$. When $B = \{0,1\}$, we will often write $2^A$ to denote the set of assignments from a set $A$ to $\{0,1\}$. An element $\tau \in 2^A$ is called a \emph{Boolean assignment over variables $A$}, and we will often just write ``assignment'' when it is clear from context that it is Boolean. A \emph{partial (Boolean) assignment over variables $X$} is an element of $2^Y$ for some $Y \subseteq X$. Given two assignments $\tau_1 \in 2^X$ and $\tau_2 \in 2^Y$ with $X \cap Y = \emptyset$, we denote by $\tau_1 \times \tau_2$ the assignment over variables $X \cup Y$ such that $(\tau_1 \times \tau_2)(z) = \tau_1(z)$ if $z \in X$ and $(\tau_1 \times \tau_2)(z) = \tau_2(z)$ if $z \in Y$. We denote by $\ua[x][0]$ (resp. $\ua[x][1]$) the assignment in $2^{\{x\}}$ mapping $x$ to $0$ (resp. to $1$). We will also use the notation $\tup[x_1 / b_1, \dots, x_k / b_k]$ to denote the assignment in $2^{\{x_1,\dots,x_k\}}$ mapping $x_i$ to $b_i$. Given $\tau \in 2^X$ and $Y \subseteq X$, we denote by $\restr{\tau}{Y}$ the assignment in $2^Y$ such that $\restr{\tau}{Y}(y) = \tau(y)$ for every $y \in Y$.

A \emph{Boolean function $f$ over variables $X$} is a mapping from $2^X$ to $\{0,1\}$. An assignment $\tau$ such that $f(\tau) = 1$ is said to \emph{satisfy} $f$ and is alternatively called a \emph{satisfying assignment} or a \emph{model}. Given a Boolean function $f$ over variables $X$ and $Y \subseteq X$, we denote by $\restr{f}{Y}$ the Boolean function over variables $Y$ whose models are $\{\restr{\tau}{Y} \mid f(\tau)=1\}$. We denote by $\neg f$ the negation of $f$ and by $f \wedge g$ (resp. $f \vee g$) the conjunction (resp. disjunction) of $f$ and $g$.

\textbf{Conjunctive Normal Form Formulas.} 
Given a set $X$ of variables, a literal over $X$ is either $x \in X$ or $\neg x$. We let $\lit(X)$ be the set of literals over $X$, and for $\ell \in \lit(X)$, we denote by $\var{\ell}$ its underlying variable (that is, $x = \var{x} = \var{\neg x}$). For an assignment $\tau \in 2^X$, we naturally extend it to literals by defining $\tau(\neg x) = 1-\tau(x)$.
A \emph{clause} $c$ is a set of literals, interpreted as their disjunction and written as $c = \ell_1 \vee \dots \vee \ell_k$;
we let $\var{c} = \{\var{\ell}\mid \ell \in c\}$. 
An assignment $\tau$ \emph{satisfies a clause $c$} if there exists $\ell \in c$ such that $\tau$ is defined on $\var{\ell}$ and $\tau(\ell) = 1$.
A \emph{Conjunctive Normal Form (CNF) formula} $F$ is a set of clauses, interpreted as their conjunction and often denoted $F=c_1 \wedge \dots \wedge c_m$. We let $\var{F} = \bigcup_{c \in F} \var{c}$. 
An assignment $\tau$ \emph{satisfies $F$}
if for every clause $c \in F$, $\tau$ satisfies $c$. The Boolean function defined by a CNF formula is the Boolean function over $\var{F}$ whose models are exactly the assignments over $\var{F}$ that satisfy $F$. We will often identify a CNF formula or a clause with the Boolean function it represents and use the notation we defined for Boolean functions directly on formulas. For example, we write $F \models c$ whenever every satisfying assignment of $F$ is also a satisfying assignment for $c$.
The \emph{size $\size{F}$} of $F$ is defined as $\sum_{c \in F} |\var{c}|$.

A CNF formula can be conditioned by a partial assignment: for $\tau \in 2^Y$, we let $F[\tau]$ be the CNF formula obtained as follows. We remove from $F$ every clause $c$ containing a literal $\ell$ such that $\tau(\ell)=1$. In the remaining clauses, we remove every literal $\ell$ such that $\tau(\ell)=0$. An assignment $\sigma \in 2^{\var{F} \setminus Y}$ satisfies $F[\tau]$ if and only if $\sigma \times \tau$ satisfies $F$.

\textbf{Graphs of CNF formulas.} We characterize the structure of CNF formulas using graphs. Given a CNF formula $F$ over variables $X$, the \emph{primal graph} of $F$, denoted by $\gprim = (X,E)$, is the graph whose vertices are the variables of $F$ and which has an edge $\{x,y\}$ if and only if there is a clause $c \in F$ such that $x,y \in \var{c}$.
The \emph{incidence graph} of $F$, denoted by $\ginc = (X \cup F, E)$ is the graph whose vertices are both the variables and the clauses of $F$ and which contains the edge $\{x,c\}$ for $x \in X$ and $c \in F$ if and only if $x \in \var{c}$.
Observe that $\ginc$ is bipartite. See \cref{fig:prop:priminc} for an example.


\begin{figure}
  \centering
  \includegraphics[page=1,width=5cm]{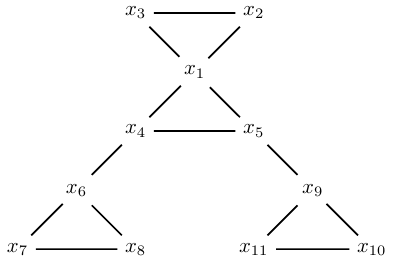} \hfill
  \includegraphics[page=2,width=8cm]{figs/graphcnf.pdf}
  \caption{The primal $\gprim$ and incidence $\ginc$ graphs of $F=C_1 \wedge C_2 \wedge C_3 \wedge C_4 \wedge C_5 \wedge C_6$ where $C_1 = (x_1 \vee x_2 \vee x_3), C_2 = (x_1 \vee x_4 \vee x_5), C_3 = (x_4 \vee x_6), C_4 = (\neg x_5 \vee x_9), C_5=(x_6 \vee x_7 \vee x_8), C_6=(x_9 \vee x_{10} \vee x_{11})$.}
  \label{fig:prop:priminc}
\end{figure}

\textbf{Treewidth.} We study the structure of $F$ by analyzing the structure of $\gprim$ or $\ginc$, using  the notion of \emph{treewidth}. A \emph{tree decomposition $\calT$ of a graph $G=(V,E)$} is a tree such that each node $t$ of $\calT$ is labeled by a subset $B_t$ of $V$, called a \emph{bag at node $t$}. Moreover, $\calT$ has the following properties: it is \textbf{connected}, that is, for every $x \in V$, the set $\{t \mid x \in B_t\}$ is connected in $\calT$ and it is \textbf{complete}, that is, for every edge $e$ of $G$, there exists a node $t$ such that $e \subseteq B_t$. \cref{fig:prop:tw} shows examples of tree decompositions.
The \emph{width of a tree decomposition $\calT$ of $G$}, denoted by $\tw{G}[\calT]$ is defined as $\max_{t\in\calT} |B_t|-1$ and the \emph{treewidth of $G$}, denoted by $\tw{G}$, is defined to be $\min_\calT \tw{G}[\calT]$, where $\calT$ runs over every valid tree decomposition of $G$.


\begin{figure}
  \centering
  \includegraphics[page=3,width=4cm]{figs/graphcnf.pdf}\hfill
  \includegraphics[page=4,width=4cm]{figs/graphcnf.pdf}
  \caption{Tree decompositions for $\gprim$ (left) and $\ginc$ (right) from \cref{fig:prop:priminc}. }
  \label{fig:prop:tw}
\end{figure}

We apply the notion of treewidth to CNF formulas as follows. 
The \emph{primal treewidth $\ptw{F}$ of a CNF formula $F$} is defined as the treewidth of $\gprim$, while the \emph{incidence treewidth $\itw{F}$ of a CNF formula $F$} is the treewidth of $\ginc$.
It is not hard to see that for every CNF formula $F$, we have $\itw{F} \leq \ptw{F}+1$ and that for every $n \in \N$, there exists a CNF formula $F_n$ such that $\itw{F_n}=1$ and $\ptw{F_n}=n-1$.

\textbf{OBDD.} An \emph{Ordered Binary Decision Diagram (OBDD) over variables $X$} is a directed acyclic graph $C$ such that:
\begin{itemize}
\item Every node with outdegree $0$ is labeled by a constant $0$ or $1$ and is called a \emph{sink}.
\item Every other node is called a decision-node. It is labeled by a variable $x \in X$ and has two outgoing edges labeled by $0$ and $1$ respectively. We say that the decision-node \emph{tests} the variable $x$.
  \item $C$ has a unique node with indegree $0$ called the \emph{source}.
\end{itemize}
Moreover, there is an order $(x_1,\dots,x_n)$ on $X$ such that if $g$ is a decision-node on $x_i$, then every decision node that can be reached from $g$ by a path tests a variable $x_j$ with $j>i$. 

An OBDD $C$ over variables $X$ represents a Boolean function over variables $X$ as follows: an assignment $\tau \in 2^X$ satisfies $C$ if and only if there is a path $P = (g_0, \dots, g_k)$ from the source $g_0$ to a $1$-sink $g_k$ of $C$ such that for every $i < k$, the edge $(g_i,g_{i+1})$ is labeled by $\tau(x)$ where $x$ is the variable tested by $g_i$. 

\textbf{DNNF.} We assume the reader to be familiar with the notion of Boolean circuits, see~\cite{AroraB09} for details. A Boolean circuit $C$ is in \emph{Negation Normal Form (NNF)} if it only contains $\wedge$-gates and $\vee$-gates, and its inputs are labeled by literals. Given a gate $g$ of $C$, we denote by $\var{g}$ the set of variables appearing in the subcircuit rooted in $g$. We say that an $\wedge$-gate $g$ with inputs $g_1,\dots,g_k$ is \emph{decomposable} if and only if $\var{g_i} \cap \var{g_j} = \emptyset$ for every $i<j$. A \emph{Decomposable NNF (DNNF) circuit} is a circuit where every $\wedge$-gate is decomposable.
An $\vee$-gate $g$ with inputs $g_1,\dots,g_k$ is said to be \emph{deterministic} if and only if for every $i<j$, the models of $g_i$ and
$g_j$ are disjoint. In other words, $g$ is deterministic if for every model $\tau \in 2^{\var{g}}$ of $g$, there exists a unique $i \leq k$ such that $\tau$ is a model of $g_i$. A \emph{deterministic DNNF (d-DNNF) circuit} is a DNNF where every $\vee$-gate is deterministic. Observe that determinism is a semantic notion. It is actually coNP-complete to decide whether a given $\vee$-gate in a DNNF is deterministic.

In this paper, we are interested in a restriction of DNNF called \emph{structured DNNF (SDNNF)}~\cite{PipatsrisawatD08}. Structuredness is a syntactic restriction of the way an $\wedge$-gate can split variables in a DNNF. It is based on the notion of \emph{variable trees} (vtree for short): a vtree over $X$ is a rooted binary tree $T$ such that the leaves of $T$ are in one-to-one correspondence with $X$.  Given a node $t$ of $T$, we denote by $\var{t} \subseteq X$ the set of variables labeling the leaves of the subtree of $T$ rooted at $t$. Let $t$ be a node of $T$ with children $t_1,t_2$. Given an $\wedge$-gate $g$ with two inputs $g_1,g_2$, we say that $g$ \emph{respects} $t$ if and only if it has two inputs $g_1,g_2$ and $\var{g_1} \subseteq \var{t_1}$ and $\var{g_2} \subseteq \var{t_2}$. A DNNF circuit \emph{respects a vtree $T$} if for every $\wedge$-gate $g$ of $C$, there is a node $t$ of $T$ such that $g$ respects $t$. If a DNNF circuit $C$ respects a vtree $T$, we say that $C$ is a \emph{structured DNNF circuit}.

\textbf{SDD.} SDD~\cite{Darwiche11} is a restriction of structured deterministic DNNF enjoying more tractable operations and some form of canonicity (though the canonical circuit is not the minimal one in this case). Most proofs regarding SDD in this paper have been moved to the appendix, hence we leave out the technical definitions which can be found in the appendix. 

\section{Tree Decision Diagrams}
\label{sec:prop:dd}

Let $T$ be a vtree whose leaves are labeled by a set of variables $X$. A \emph{Non-deterministic Tree Decision Diagram (nTDD for short) $C=(N,E)$ respecting the vtree $T$}, is defined as follows: 
\begin{itemize}
\item $N = \biguplus_{t \in T} N_t$ is a set of nodes, partitioned into disjoint sets $N_t$ for each node $t$ of $T$. The elements of $N_t$ are called \emph{$t$-nodes}.
\item If $t$ is a leaf labeled by $x$, then every node in $N_t$ is labeled by either $x$, $\neg x$, $1$ or $0$.
\item $E$ maps every $t$-node $g$ to its \emph{inputs}: if $t$ is a leaf, then $E(g) = \emptyset$. Otherwise, if $t$ has children $t_1,t_2$, $E(g) \subseteq N_{t_1} \times N_{t_2}$, that is, $E(g)$ is a set of pairs $(g_1,g_2)$ such that $g_1 \in N_{t_1}$ is a $t_1$-node and $g_2 \in N_{t_2}$ is a $t_2$-node.
\item There is one distinguished $r$-node $\out$, called \emph{the output of $C$}, where $r$ is the root of $T$.
\end{itemize}

An nTDD $C$ computes a Boolean function over $X$ defined inductively as follows. Each $t$-node $g$ computes a Boolean function $f_g$ over variables $X_t$ where $X_t = \var{t}$:
\begin{itemize}
\item If $t$ is a leaf, then $g$ computes the Boolean function defined by its label: that is, if $g$ is labeled by $0$ then $f_g$ has no model, if $g$ is labeled by $1$ then every assignment of $x$ is a model of $f_g$, and if $g$ is labeled by $\ell \in \{x,\neg x\}$, the only model of $f_g$ is the assignment $\tau \in 2^{\{x\}}$ such that $\tau(\ell)=1$. 
\item If $t$ is an internal node with children $t_1,t_2$, then $\tau$ is a model of $f_g$ if and only if there exists $(g_1,g_2) \in E(g)$ such that $\tau|_{X_{t_1}}$ is a model of $f_{g_1}$ and $\tau|_{X_{t_2}}$ is a model of $f_{g_2}$. If $E(g)$ is empty, we make the convention that $f_g=\emptyset$ is the $0$ constant function.
\end{itemize}
An nTDD $C$ computes the Boolean function $f_C$ defined as $f_{\out}$, the function computed in its output. We often abuse notation and say a model of $g$ instead of a model of $f_g$.
Another way of defining $f_g$ is as $f_g = \bigvee_{(g_1,g_2) \in E(g)} (f_{g_1} \wedge f_{g_2})$. This definition allows us to see that an nTDD is just a structured DNNF written in a slightly different way. We chose this presentation however because it is more convenient to define TDDs. In fact, every structured DNNF can also be rewritten as an nTDD by smoothing the circuit and ensuring that $\wedge$-gates and $\vee$-gates alternate. The \emph{size} $|C|$ of nTDD $C=(N,E)$ is defined as $\sum_{n \in N} |E(n)|$. The \emph{width} of nTDD $C=(N,E)$ respecting vtree $T$ is defined as $\max_{t \in T} |N_t|$.  

  \cref{fig:prop:TDD} shows a vtree $T$ over variables $X = \{x_1,\dots,x_4\}$, an nTDD $C$ respecting $T$, and the interpretation of $C$ as a DNNF. Its width is $2$, and its size is $8$. We grouped together the set of $t$-nodes for every node $t$ of $T$. The assignment defined as $\tau(x)=0$ for every $x \in X$ is a model of $C$ because it is a model of every node pictured in red. 

\begin{figure}
  \centering
    \includegraphics[width=3cm]{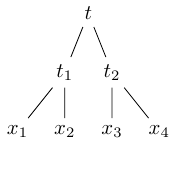}
    \includegraphics[width=6cm]{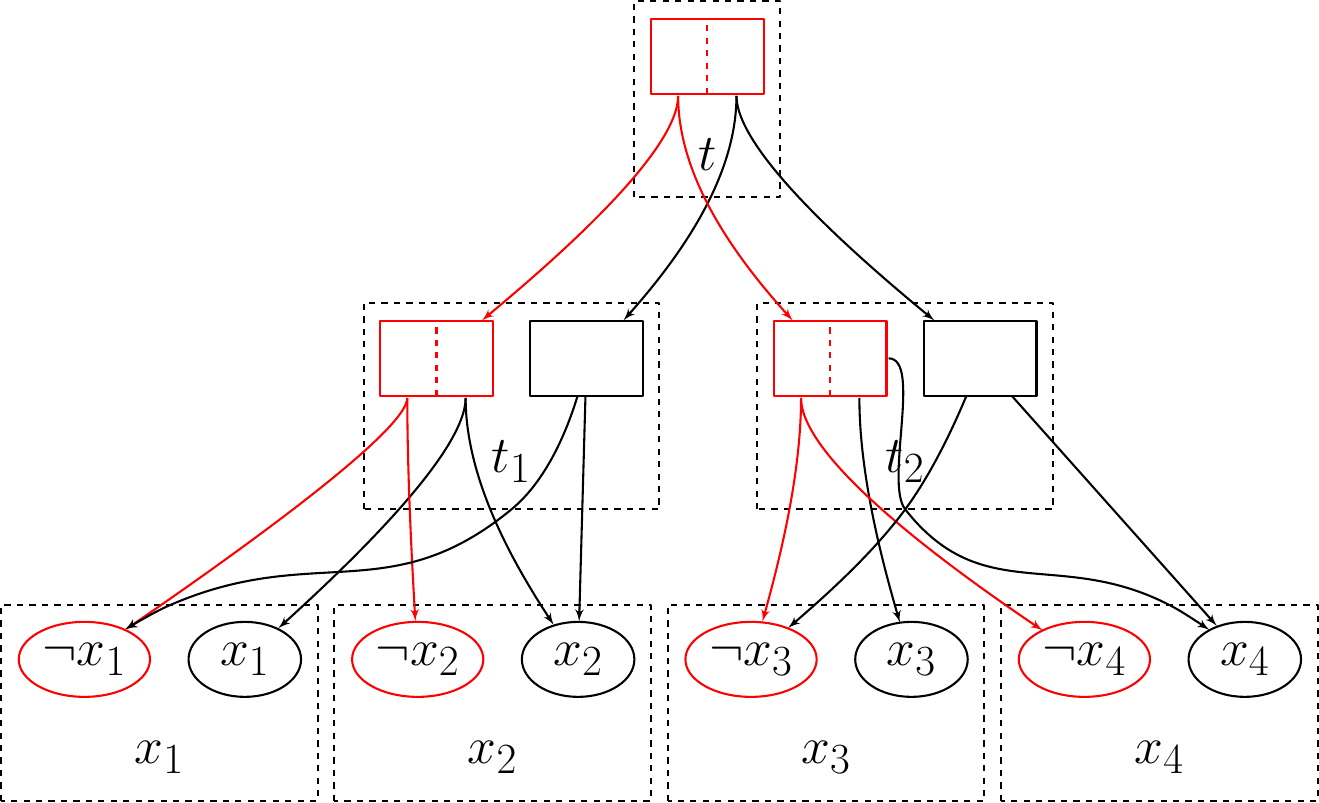}
    \includegraphics[width=4cm]{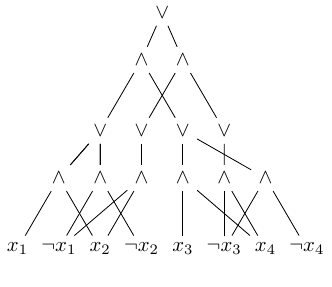}\hfill      
  \caption{A vtree,  an nTDD respecting it and the corresponding structured DNNF.}
  \label{fig:prop:TDD}
\end{figure}

Another way of characterizing the models of $C$ is via the notion of certificates. Given an assignment $\tau \in 2^X$,  a \emph{certificate for $\tau$ in $C$} is an nTDD $\calP$ formed by picking exactly one $t$-node $g^\calP_t$ of $C$ for every node $t$ of $T$ such that:
\begin{itemize}
\item If $t$ is a leaf of $T$, then $g^\calP_t$ is either labeled by $1$ or by a literal $\ell$ such that $\tau(\ell)=1$.
\item If $t$ is a node of $T$ with children $t_1,t_2$, then $(g^\calP_{t_1},g^\calP_{t_2}) \in E(g^\calP_t)$. 
\end{itemize}
The red part of \cref{fig:prop:TDD} represents the certificate for $\tau$, where $\tau$ is the assignment setting every variable to $0$, which is indeed a model of the circuit. More generally, a certificate for $\tau$ in $C$ is a witness of the fact that $\tau$ is a model of $C$:
\begin{proposition}[$\star$]
  \label{prop:prop:tdd-pt-models} Let $T$ be a vtree over $X$ and $C$ an nTDD respecting $T$. For every $\tau \in 2^X$, $\tau$ is a model of $C$ if and only if there exists a certificate $\calP$ for $\tau$ in $C$. In particular, for every node $t$ of $T$, $\tau|_{X_t}$ satisfies $g^\calP_t$.
\end{proposition}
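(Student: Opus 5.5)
We prove a stronger local statement by structural induction on the vtree $T$, from the leaves up. For every node $t$ of $T$, every $t$-node $g$ and every $\sigma \in 2^{X_t}$, we claim: $\sigma$ is a model of $f_g$ if and only if there is a \emph{partial certificate} for $\sigma$ rooted at $g$. Such a partial certificate is a choice of exactly one $t'$-node $g_{t'}$ for every node $t'$ in the subtree $T_t$ of $T$ rooted at $t$. It must satisfy $g_t = g$, the leaf condition of the definition (with $\sigma$ in place of $\tau$), and $(g_{t_1'},g_{t_2'}) \in E(g_{t'})$ for every internal $t'$ of $T_t$ with children $t_1',t_2'$. The proposition is then the case $t = r$, $g = \out$, since a certificate for $\tau$ in $C$ is exactly a partial certificate for $\tau$ rooted at $\out$.

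\textbf{Base case.} Let $t$ be a leaf labeled by $x$. A partial certificate is just the single node $g$, so the claim reduces to: $\sigma \in 2^{\{x\}}$ is a model of $f_g$ iff $g$ is labeled $1$ or by a literal $\ell$ with $\sigma(\ell)=1$. This is immediate from the semantics of leaf labels; note in particular that a node labeled $0$ has no model and is never allowed in a certificate.

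\textbf{Inductive step.} Let $t$ have children $t_1,t_2$, so that $X_t = X_{t_1} \uplus X_{t_2}$.
\begin{itemize}
\item ($\Rightarrow$) If $\sigma$ is a model of $f_g$, the definition gives some $(g_1,g_2) \in E(g)$ with $\sigma|_{X_{t_1}}$ a model of $f_{g_1}$ and $\sigma|_{X_{t_2}}$ a model of $f_{g_2}$. The induction hypothesis gives partial certificates $\calP_1$ rooted at $g_1$ and $\calP_2$ rooted at $g_2$. Their union together with $g$ at $t$ picks exactly one node per node of $T_t$, because $T_{t_1}$ and $T_{t_2}$ are disjoint and do not contain $t$. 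Edge conditions hold inside each part and at $t$, and the leaf conditions hold because the restrictions of $\sigma$ agree with $\sigma$ on the relevant variables.
\item ($\Leftarrow$) Conversely, restrict a partial certificate rooted at $g$ to $T_{t_1}$ and to $T_{t_2}$. This yields partial certificates rooted at $g_{t_1}$ and $g_{t_2}$ for $\sigma|_{X_{t_1}}$ and $\sigma|_{X_{t_2}}$, respectively. By induction these restrictions are models of $f_{g_{t_1}}$ and $f_{g_{t_2}}$, and since $(g_{t_1},g_{t_2}) \in E(g)$, $\sigma$ is a model of $f_g$.
\end{itemize}

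For the \emph{in particular} part, take a certificate $\calP$ for $\tau$. For any node $t$, the restriction of $\calP$ to $T_t$ is a partial certificate for $\tau|_{X_t}$ rooted at $g^\calP_t$. The ($\Leftarrow$) direction of the claim then shows that $\tau|_{X_t}$ satisfies $g^\calP_t$.

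The only point requiring care is this bookkeeping: defining partial certificates on subtrees so that both gluing them together and restricting them are well defined, and using the disjointness of the variable sets of sibling subtrees. No real obstacle is expected.
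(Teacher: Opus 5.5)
Your proof is correct and is essentially the paper's argument. The ($\Leftarrow$) direction and the ``in particular'' claim are the same bottom-up induction along $T$. For ($\Rightarrow$), the paper builds the certificate top-down from $\out$, picking at each chosen $t$-node a pair of $E(g^{\calP}_t)$ that witnesses that $\tau|_{X_t}$ satisfies it; this is the same unfolding of the definition of $f_g$ as your bottom-up gluing of partial certificates. Like the paper, you rely on the convention that a certificate chooses $g^{\calP}_r = \out$ at the root $r$. The definition leaves this unstated, but the statement needs it.
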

\begin{toappendix}
\begin{proof}[Proof of \cref{prop:prop:tdd-pt-models}]
  First assume $\tau$ has a certificate $\calP$ in $C$. By induction, we prove that $\tau|_{X_t}$ is a model of $g_t^\calP$ for every node $t$ of $T$. It is true for the leaves of $T$ by definition of certificates. Now if $t$ is a node with children $t_1,t_2$, then $X_t = X_{t_1} \uplus X_{t_2}$ and $\tau|_{X_t} = \tau|_{X_{t_1}} \times \tau|_{X_{t_2}}$. By induction, $\tau|_{X_{t_i}}$ satisfies $g_{t_i}^\calP$ for $i \in \{1,2\}$. By definition of certificates, $(g_{t_1}^\calP, g_{t_2}^\calP) \in E(g^\calP_{t})$, hence $\tau|_{X_t}$ satisfies $g_t^\calP$.
  In particular, $\tau$ is a model of $g_r^\calP = \out$, hence $\tau$ is a model of $C$.

  Now let $\tau$ be a model of $C$. We construct a certificate. We let $g_r^\calP$ be $\out$. Now assume that we have constructed $\calP$ for every node above some node $t$ in $T$, including $t$. Let $t_1,t_2$ be the children of $t$. Now, since $\tau$ is a model of $C$, we know that there exists $(g_1,g_2) \in E(g_t^\calP)$ such that $\tau|_{X_{t_i}}$ is a model of $g_i$ for $i \in \{1,2\}$. We let $g_{t_1}^\calP = g_1$ and $g_{t_2}^\calP = g_2$. By definition, $(g_{t_1}^\calP, g_{t_2}^\calP) \in E(g_t^\calP)$ which corresponds to the definition of certificates.

  To show that this process constructs a certificate, it remains to show that it is correct on input nodes. But by definition, if $t$ is a leaf and hence $g_t^\calP$ is an input, $\tau$ is a model of $g_t^\calP$. Hence, either $g_t^\calP$ is labeled by $1$ or by a literal $\ell$ such that $\tau(\ell)=1$, which concludes the proof.
\end{proof}
\end{toappendix}



\textbf{Determinism.} A TDD $C=(N,E)$ is an nTDD respecting the following extra properties (which we will sometimes refer to as \emph{determinism}) for every node $t$ of $T$:
\begin{itemize}  
\item If $t$ is a leaf labeled by $x$, then no two nodes of $N_t$ can be satisfied simultaneously. Syntactically, this is the case if and only if $N_t$ contains at most one node labeled by $x$, at most one node labeled by $\neg x$ and at most one node labeled by $1$. Moreover, if there is a node labeled by $1$, then all other nodes of $N_t$ are labeled by $0$. 
\item For all distinct $g,g' \in N_t$, we have $E(g) \cap E(g') = \emptyset$.  That is,  every pair of nodes $(g_1,g_2)$ is the input of \emph{at most} one node.
\end{itemize}

Our notion of determinism is similar to others in the literature. First, it resembles the notion of determinism for bottom-up tree automata~\cite{tata} where a pair of states from children nodes gives at most one state in the parent node. Similar constructions have also been used in probabilistic circuits to guarantee determinism, see for example~\cite{shih2020probabilistic} and MDNets in~\cite{wang2023compositional}. 

Contrary to the notion of determinism for DNNF, the notion of determinism for TDD is syntactic. Therefore, it can be checked in polynomial time whether a given non-deterministic TDD respects the determinism property. Moreover, it induces a very strong form of determinism. We prove this with a bottom-up induction along the vtree.
\begin{theorem}[$\star$]
  \label{thm:prop:tdd-double-det}
  Let $C=(N,E)$ be a TDD respecting a vtree $T$. For every node $t$ of $T$ and $t$-nodes $g,g'$, $f_g$ and $f_{g'}$ have disjoint models. As a consequence, for every model $\tau$ of $C$, there exists a unique certificate $\calP_C(\tau)$ for $\tau$ in $C$. 
\end{theorem}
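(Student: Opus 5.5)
We prove the first claim by bottom-up induction along $T$: for every node $t$ and every two distinct $t$-nodes $g \neq g'$, no assignment $\tau \in 2^{X_t}$ satisfies both $f_g$ and $f_{g'}$.

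\textbf{Base case.} Let $t$ be a leaf labeled by $x$. The syntactic leaf condition of determinism gives this directly.
\begin{itemize}
\item If $N_t$ contains a node labeled $1$, every other node is labeled $0$ and has no model.
\item Otherwise, the nodes are labeled by $x$, $\neg x$ or $0$, with at most one of each of the first two labels. The function $f_x$ has the single model $\ua[x][1]$ and $f_{\neg x}$ has the single model $\ua[x][0]$, so they are disjoint.
\item Several nodes labeled $0$ are harmless, since they have no models.
\end{itemize}

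\textbf{Inductive step.} Let $t$ have children $t_1,t_2$, and suppose $\tau \in 2^{X_t}$ is a model of both $f_g$ and $f_{g'}$. By the semantics, there are pairs $(g_1,g_2) \in E(g)$ and $(g_1',g_2') \in E(g')$ such that $\tau|_{X_{t_i}}$ is a model of both $f_{g_i}$ and $f_{g_i'}$ for $i \in \{1,2\}$. Since $g_i$ and $g_i'$ are both $t_i$-nodes, the induction hypothesis forces $g_i = g_i'$ (otherwise their models would be disjoint). Hence the same pair $(g_1,g_2)$ lies in $E(g) \cap E(g')$. Determinism says this intersection is empty for distinct $g,g'$, so $g = g'$, which is the contradiction we need.

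\textbf{Uniqueness of certificates.} Existence is given by \cref{prop:prop:tdd-pt-models}. For uniqueness, let $\calP$ and $\calP'$ be two certificates for a model $\tau$. By \cref{prop:prop:tdd-pt-models}, for every node $t$ the assignment $\tau|_{X_t}$ satisfies both $g^\calP_t$ and $g^{\calP'}_t$. By the first part, $g^\calP_t = g^{\calP'}_t$ for every $t$, so $\calP = \calP'$.

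The proof has no real obstacle. The one delicate point is the leaf case: the determinism definition allows several $0$-labeled nodes, and these must be treated as vacuously disjoint. With that observation, the same induction also shows that the pair chosen at each step is unique.
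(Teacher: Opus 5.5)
Your proof is correct and takes essentially the same route as the paper. The paper also uses induction along $T$ with the syntactic leaf condition as base case, combines the induction hypothesis with $E(g)\cap E(g')=\emptyset$ in the inductive step, and deduces uniqueness of certificates from each $g^\calP_t$ being the unique $t$-node satisfied by $\tau|_{X_t}$. The only difference is order in the inductive step: you apply the hypothesis first to force $g_i=g_i'$ and contradict determinism, while the paper applies determinism first to obtain a differing coordinate and contradict the hypothesis.
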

\begin{toappendix}
\begin{proof}[Proof of \cref{thm:prop:tdd-double-det}.]
  The proof is by induction on $T$. It is obvious if $t$ is a leaf by assumption. Indeed, either exactly one $t$-node is labeled by $1$ and all the other $t$-nodes are labeled by $0$. In this case the property is clear since there is a unique $t$-node having a model. Otherwise, we have at most one $t$-node labeled by $x$, at most one $t$-node labeled by $\neg x$ and all others labeled by $0$. Again, we have disjoint models.
  
  Now let $t$ be a node of $T$ with children $t_1,t_2$ and assume the property holds for $t_1$ and $t_2$. Let $g$ and $g'$ be two $t$-nodes. Let $\tau$ be a model of $g$ and let $(g_1,g_2) \in E(g)$ be such that $\tau|_{X_{t_1}}$ is a model of $g_1$ and $\tau|_{X_{t_2}}$ is a model of $g_2$. Now if $\tau$ is also a model of $g'$, there exists $(g'_1,g'_2) \in E(g')$
  such that $\tau|_{X_{t_1}}$ is a model of $g'_1$ and $\tau|_{X_{t_2}}$ is a model of $g'_2$. By determinism, $(g_1,g_2) \neq (g_1',g_2')$. Without loss of generality, assume $g_1 \neq g_1'$. In this case, $\tau|_{X_{t_1}}$ is a model of both $g_1$ and of $g_1'$. But this is not possible by induction. Hence $\tau$ is not a model of $g'$.

    We now show that there is a unique certificate for every model $\tau$ of $C$. Indeed, we have observed that if $\calP$ is a certificate for $\tau$ then $\tau|_{X_t}$ is a model of $g_t^\calP$. From what precedes, there is a unique $t$-node $\alpha_t$ that is satisfied by $\tau|_{X_t}$, hence $\alpha_t = g_t^\calP$ for every $t$, that is, $\calP$ is unique.
\end{proof}
\end{toappendix}

Observe that a TDD of width $k$ has size at most $2|X| \cdot k^2$. Indeed, $T$ has at most $2|X|$ nodes and each $t$-node can contain at most $k^2$ pairs.

Interestingly, we can construct the certificate for an assignment $\tau$ of $C$ efficiently in a bottom-up way, or report that $\tau$ is not a model of $C$. To do so, we select the unique leaf nodes satisfied by $\tau$ and construct the certificate bottom up by selecting the unique $t$-node whose input contains the pair $g_1,g_2$ of $t_1$-node and $t_2$-nodes inductively constructed so far. If no such node exists, we report that $\tau$ is not a model of $C$. Using appropriate data structures to represent the input of each $t$-node, we can find the right $t$-node in constant time. Hence we can construct a certificate for $\tau$ in time $O(|X|)$ if it exists.

A consequence of \cref{thm:prop:tdd-double-det} is that the DNNF interpretation of a TDD is deterministic, which proves that TDD is a subclass of structured d-DNNF:
\begin{theorem}[$\star$]
    \label{thm:prop:dtdd-vs-sddnnf} Given a TDD $C$ respecting a vtree $T$, one can construct a structured d-DNNF $C'$ respecting $T$ and computing the same function as $C$ in time $O(|C|)$. 
  \end{theorem}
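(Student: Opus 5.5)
We build $C'$ directly from the DNNF reading $f_g = \bigvee_{(g_1,g_2) \in E(g)} (f_{g_1} \wedge f_{g_2})$.

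\textbf{Construction.} For every leaf $t$ labeled by $x$ and every $t$-node $g$, we create an input gate of $C'$ labeled by the literal or constant that labels $g$. For every internal node $t$ of $T$ with children $t_1,t_2$ and every $t$-node $g$, we create an $\vee$-gate $v_g$. For each pair $(g_1,g_2) \in E(g)$, we create a fresh binary $\wedge$-gate $a_{g,g_1,g_2}$ with inputs $v_{g_1}$ and $v_{g_2}$, and connect it to $v_g$. An $\vee$-gate with no input stands for the constant $0$. The output of $C'$ is $v_{\out}$.

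The number of gates and wires is linear in $|N| + \sum_g |E(g)|$. Since we may discard $t$-nodes with empty input sets, replacing them by a $0$ constant, this is $O(|C|)$. The construction is a single traversal, so it runs in time $O(|C|)$.

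\textbf{Correctness.} By bottom-up induction on $T$ we show that $v_g$ computes $f_g$ over $X_t$. Leaves hold by definition of the labels. At internal nodes it is exactly the inductive definition of $f_g$ given in \cref{sec:prop:dd}, because models of $v_{g_i}$ are assignments over $X_{t_i}$ and $X_t = X_{t_1} \uplus X_{t_2}$. Hence $C'$ computes $f_C$.

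\textbf{Structuredness and decomposability.} Every gate below $v_{g_i}$ only mentions variables in $X_{t_i}$, since by induction $\var{v_{g}} \subseteq X_t$. So $a_{g,g_1,g_2}$ respects $t$. In particular it is decomposable because $X_{t_1} \cap X_{t_2} = \emptyset$. Therefore $C'$ is a DNNF respecting $T$.

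\textbf{Determinism.} This is the only step needing care, and it rests on \cref{thm:prop:tdd-double-det}. Take two distinct inputs $a_{g,g_1,g_2}$ and $a_{g,g_1',g_2'}$ of $v_g$. Then $(g_1,g_2)\neq(g_1',g_2')$; without loss of generality $g_1 \neq g_1'$, both being $t_1$-nodes. A common model $\tau$ of the two $\wedge$-gates would make $\tau|_{X_{t_1}}$ a model of both $f_{g_1}$ and $f_{g_1'}$. This contradicts \cref{thm:prop:tdd-double-det}, which says distinct $t_1$-nodes have disjoint models.

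One subtlety is that determinism in DNNF is defined over assignments to $\var{g}$, which may be a strict subset of $X_t$. We handle it by working with assignments over $X_t$ throughout: the circuit is smooth by construction up to constant-$1$ leaves, and disjointness over $X_t$ implies disjointness of the disjuncts as functions. If the paper's convention requires it, we instead keep $\var{v_g} = X_t$ by treating $1$-labeled leaves as $(x \vee \neg x)$. This is still linear size, and that $\vee$-gate is trivially deterministic.
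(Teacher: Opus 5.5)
Your proposal is correct and takes the same route as the paper, which also reads the TDD as the DNNF with $v_g = \bigvee_{(g_1,g_2)\in E(g)} (v_{g_1}\wedge v_{g_2})$ and derives determinism from \cref{thm:prop:tdd-double-det} by assuming without loss of generality $g_1 \neq g_1'$. Your extra checks only spell out what the paper leaves implicit, with one wording fix in the size bound: keep only nodes reachable from $\out$, or discard only internal nodes with $E(g)=\emptyset$, rather than all nodes with empty input sets, since every leaf node has $E(g)=\emptyset$ by definition.
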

  \begin{toappendix}
  \begin{proof}[Proof of \cref{thm:prop:dtdd-vs-sddnnf}.]
    Let $v_g$ be one $\vee$-gate corresponding to $t$-node $g$ in the DNNF corresponding to $C$. It is of the form $\bigvee_{(g_1,g_2) \in E(g)} (v_{g_1} \wedge v_{g_2})$. If $v$ is not deterministic, there exists $(g_1,g_2)$ and $(g_1',g_2')$ with $(g_1,g_2) \neq (g_1',g_2')$ and $\tau$ that is both a model of $(v_{g_1} \wedge v_{g_2})$ and of $(v_{g'_1} \wedge v_{g'_2})$. Assume wlog $g_1 \neq g_1'$. Then $\tau$ is a model of $g_1$ and of $g_1'$ which contradicts \cref{thm:prop:tdd-double-det}. 
  \end{proof}
\end{toappendix}

In particular, every tractable query for structured d-DNNF is also tractable for TDD. For example, we can efficiently compute the number of models of a TDD~\cite{DarwicheM2002}, enumerate them with delay $O(|X|)$~\cite{AmarilliBJM17} and so on. 

\section{Tractable Transformations}
\label{sec:transformations}

Since the publication of the knowledge compilation map~\cite{DarwicheM2002}, it is common in the field to compare newly introduced representations to others by analyzing them with respect to a set of standard queries and transformations. Since, due to \cref{thm:prop:dtdd-vs-sddnnf}, every TDD can be efficiently transformed into a deterministic, structured DNNF (d-SDNNF) and on those one can already perform all queries from~\cite{DarwicheM2002} efficiently~\cite{PipatsrisawatD08}, TDDs inherit all these efficient queries. So we here only focus on the transformations, showing that TDD allow more efficient transformations than d-SDNNF, and both canonical and general SDD. In fact, TDD allow the same efficient transformations as OBDD.

\begin{figure}[t]

  \begin{tabular}{l|l}

\textbf{Transformation Name}	& \textbf{Description} \\
	\hline
	Conditioning (CD) & given a variable $x$ and $a\in \{0,1\}$ compute representation for $f[x/a]$ \\
	Forgetting (FO) &  given a list $x_1, \ldots, x_\ell$ of variables compute $\exists x_1 \ldots \exists x_\ell \, f$\\
	Singleton Forgetting (SFO) &  same as FO, but only for a single variable\\
	Conjunction ($\land$C) &  compute representation for $\bigwedge_{i\in [\ell]} f_i$\\
	Bounded Conjunction ($\land$BC) &  same as $\land$C, but only two input representations\\
	Disjunction ($\lor$C) &  compute representation for $\bigvee_{i\in [\ell]} f_i$\\
	Bounded Disjunction ($\lor$BC)& same as $\lor$C, but only two input representations \\
	Negation ($\neg$C)& compute representation for $\neg f$ \\
\end{tabular}

\vspace{0.5cm}
\begin{tabular}{l|c|c|c|c|c|c|c|c|l}
			& CD           & FO          & SFO         & $\wedge$C    & $\wedge$BC   & $\vee$C     & $\vee$BC    & $\neg$C  & references     \\\hline
			TDD & $\checkmark$ & \textbullet  & $\checkmark$ & \textbullet & $\checkmark$ & \textbullet & $\checkmark$ & $\checkmark$ & this paper\\
			OBDD  & $\checkmark$ & \textbullet  & $\checkmark$ & \textbullet & $\checkmark$ & \textbullet & $\checkmark$ & $\checkmark$ & \cite{DarwicheM2002} \\
			SDD  & $\checkmark$ & \textbullet  & $\checkmark$ & \textbullet & $\checkmark$ & \textbullet & $\checkmark$ & $\checkmark$ & \cite{van2015role}\\
			canonical SDD  & \textbullet & \textbullet  & \textbullet & \textbullet & \textbullet & \textbullet & \textbullet & $\checkmark$ & \cite{van2015role}\\
			d-SDNNF  & $\checkmark$ & \textbullet & \textbullet & \textbullet  & $\checkmark$ & \textbullet & \textbullet & \textbullet & \cite{PipatsrisawatD08,Vinall-Smeeth24}\\
		\end{tabular}
                \caption{Overview of the transformations from the knowledge compilation map~\cite{DarwicheM2002} that can be performed efficiently on different representation languages. The first table describes the transformations. For all of them, either one input representation of a Boolean function $f$ or a list of representations of such functions $f_1, \ldots, f_\ell$ is given. Some transformations take additional inputs that are stated explicitly. In the second table, a $\checkmark$ means that the operation can be performed in polynomial time on representations from the language, whereas a \textbullet{} means that it takes super-polynomial time. All negative results are unconditionally true. For all transformations, we require that all inputs and outputs have the same vtree, resp.~variable order.}
	\label{tab:transformations}
\end{figure}

We give a compact description of the standard transformations in the first table of \cref{tab:transformations}; for additional discussion and justifications of these transformations see~\cite{DarwicheM2002}. The main result of this section is the following.

\begin{theorem}[$\star$]\label{thm:transformations}
	The efficient transformations that TDD allow are as described in \cref{tab:transformations}. 
\end{theorem}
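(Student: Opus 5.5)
The plan has two halves: positive constructions for each $\checkmark$ in the TDD row, and lower bounds for each \textbullet.

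\textbf{Positive results.}

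\emph{Bounded conjunction and disjunction.} We use a product construction, exactly as for deterministic bottom-up tree automata. Given TDDs $C=(N,E)$ and $C'=(N',E')$ respecting the same vtree $T$, we build $D$ with $D_t = N_t \times N'_t$.
\begin{itemize}
\item At a leaf $t$ labeled $x$, the pair $(g,g')$ is labeled by the conjunction of the two labels (a literal, $1$ or $0$).
\item At an internal node $t$ with children $t_1,t_2$, we put $((g_1,g'_1),(g_2,g'_2)) \in E_D((g,g'))$ iff $(g_1,g_2)\in E(g)$ and $(g'_1,g'_2)\in E'(g')$.
\end{itemize}
Determinism is inherited componentwise. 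One point needs care at the leaves: several pairs may get the same label, which breaks the syntactic leaf condition. We fix this by merging equal-labeled leaf nodes, taking unions of the edge sets of their parents. By \cref{thm:prop:tdd-double-det} these leaf nodes had disjoint models, so the merged node is the unique one satisfied; determinism above is preserved because a pair used by two parents would give a common model of those parents.

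Induction on $T$, using \cref{thm:prop:tdd-double-det}, shows that $(g,g')$ computes $f_g\wedge f_{g'}$. The output is $(\out[C],\out[C'])$. For disjunction we first complete both TDDs: at every $t$ we add a sink node so that the $t$-nodes partition $2^{X_t}$, which enlarges each level by at most one. The output is then the set of pairs with at least one accepting component. Since $D_r$ may contain several such pairs, we keep a single root and use as its inputs the union of the input sets of all accepting root pairs; these are disjoint, so determinism holds. The blow-up is quadratic.

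\emph{Negation.} We complete the TDD as above. At the root, the new output takes as inputs all pairs $(g_1,g_2)\in N_{t_1}\times N_{t_2}$ not in $E(\out)$. Completeness guarantees that every assignment reaches exactly one such pair, so this computes $\neg f$.

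\emph{Conditioning.} We relabel the leaf nodes for $x$: a literal consistent with $x/a$ becomes $1$, the opposite literal becomes $0$. If the result has both a $1$-node and a surviving non-$0$ node, we merge them as in the conjunction case. The output keeps $x$ as a free (don't-care) variable, matching the convention for OBDD.

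\emph{Singleton forgetting.} $\exists x\, f = f[x/0]\vee f[x/1]$, so this follows from CD and $\vee$BC.

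\textbf{Negative results.} Since OBDD are TDDs on right-linear vtrees, each lower bound for OBDD need not transfer directly; we need a function family hard for every vtree, or at least for a fixed vtree given as part of the input.
\begin{itemize}
\item \emph{Unbounded $\wedge$C and $\vee$C.} Repeated apply can blow up exponentially. We conjoin small clauses (each of constant TDD size under any vtree) to obtain a CNF whose every structured d-DNNF is exponential, for instance the known exponential lower bounds for d-SDNNF from the Pipatsrisawat--Darwiche line, such as $\mathsf{HWB}$-type or bipartite-permutation functions valid for all vtrees. $\vee$C then follows by negation and De Morgan.
\item \emph{FO.} Take a function with a small TDD whose projection is hard for any structured d-DNNF. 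Encoding a CNF with one auxiliary variable per clause gives a small TDD, and forgetting those variables yields the hard CNF.
\end{itemize}

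\textbf{Main obstacle.} I expect the hardest part to be the negative results: finding an unconditional exponential lower bound for d-SDNNF that holds for every vtree, and showing that the inputs admit small TDDs for the chosen vtree. I would cite the existing d-SDNNF and SDD lower bounds used for the corresponding rows of \cref{tab:transformations}.
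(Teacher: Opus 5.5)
Your positive half is sound and follows the paper's constructions: the product for $\wedge$BC, completion followed by complementing the root for $\neg$C, relabelling leaves for CD, and SFO from CD and $\vee$BC. Building $\vee$BC directly from the product of two completed TDDs, instead of via $\neg(\neg f\wedge\neg g)$, is an equally good variant. Your leaf-merging patch in the product is unnecessary. Only $0$-labels can repeat there, which the definition allows, and your disjoint-models argument would not justify merging $0$-nodes anyway. Your $\wedge$C argument is also valid, and even vtree-independent: take a CNF that is hard for structured (d-)DNNF under every vtree and combine it with \cref{thm:prop:dtdd-vs-sddnnf}. The same goes for $\vee$C via De Morgan and polynomial negation. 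But $\mathsf{HWB}$ is the wrong example: it has polynomial-size SDDs, hence polynomial-size structured d-DNNF, as the paper itself notes.

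The genuine gap is FO. Forgetting in a TDD of size $s$ yields a non-deterministic TDD, i.e.\ a structured DNNF, of size at most $s$ on $T\setminus Z$. So the CNF you want to obtain must have a small structured DNNF. This rules out Pipatsrisawat--Darwiche-type formulas, which are hard even without determinism. You would instead need a separation of structured DNNF from TDD on a fixed vtree, realised by an explicit small TDD $G$ with $\exists Z.\,G=F$. Neither is supplied, and ``one auxiliary variable per clause'' does not define $G$. Natural choices such as $\bigwedge_j (z_j\leftrightarrow c_j)$ fail outright, since conditioning on $Z=1$ returns $F$ without increasing size.

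The missing idea is the one you dismissed. The table fixes the same vtree for inputs and outputs, so \cref{thm:obdd-tdd-linear} gives linear-size translations in both directions between OBDDs with order $\pi$ and TDDs on $T_{\pi^{-1}}$. Removing leaves keeps this vtree linear for the restricted order. Hence the unconditional OBDD lower bounds of Darwiche and Marquis for FO, $\wedge$C and $\vee$C transfer directly; this is the route the paper's OBDD correspondence provides. For FO, for instance, take $G=\bigvee_{i=1}^n([Z=i]\wedge x_i\wedge y_i)$ with a binary selector $Z$ read first. It has an OBDD of size $O(n)$, whereas $\exists Z.\,G=\bigvee_i (x_i\wedge y_i)$ has $2^n$ distinct subfunctions after fixing $x_1,\dots,x_n$ in the order $x_1,\dots,x_n,y_1,\dots,y_n$.
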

The proof of \cref{thm:transformations} is not too hard but rather long and tedious, so we defer it to the appendix. We give some intuition here. Conditioning (CD) for a variable $x$ and $b \in \{0,1\}$ is obtained as usual in circuits, by replacing inputs labeled by $x$ with $b$ and inputs labeled by $\neg x$ with $1-b$. The important observation is to see that it preserves determinism: indeed, if $t$ is the node of the vtree labeled by $x$, and if there are inputs labeled by $x$ or $\neg x$, then we know that there is no $t$-node labeled by $1$. Then replacing inputs will create exactly one $t$-node labeled by $1$, which is consistent with the definition of determinism.

Bounded Conjunction ($\wedge$BC) is exactly the same algorithm as the one for structured d-DNNF circuits (see~\cite{PipatsrisawatD08}), and one just has to be careful that it preserves the syntactic properties of TDDs.
For negation ($\neg$C), the main idea is to first make the TDD \emph{complete}: if $C$ is a TDD respecting vtree $T$, we ensure that for every node $t$ of $T$ and assignment $\tau$ of $\var{t}$, there is exactly one $t$-node that is satisfied by $\tau$. This can be ensured bottom-up by creating a new $t$-node $n_t$ whose input is the list of pairs $(n_1,n_2)$ which are not inputs of any other $t$-node. In the end, if $r$ is the root of $T$, this creates an $r$-node which computes the negation of the TDD. 
The other transformations follow from those we have just described. For example, ($\vee$BC) follows from ($\neg$C) and ($\wedge$BC) since $f \vee g = \neg (\neg f \wedge \neg g)$. Similarly, SFO follows from ($\vee$BC) and (CD) since $\exists x.f = f[x/0] \vee f[x/1]$. 

\begin{toappendix}

\textbf{Elimination of constants.} Before studying transformations of TDD, we explain how one can remove constant inputs in TDD. This is necessary to properly explain some transformations such as conditioning or forgetting where some variables are removed from the vtree, which may induce a small ambiguity on the semantics of the circuit. 

Recall that by convention, a gate $g$ such that $E(g)=\emptyset$ computes the $0$-constant Boolean function. We refer to such gates as gates with empty inputs. We explain how one can remove $0$ constants and gates with empty inputs.

\begin{theorem}
  \label{thm:prop:zero-elim} Given a TDD $C$ respecting vtree $T$ with at least one model, one can build in linear time a TDD $C'$ respecting $T$, computing the same function as $C$ and such that $C'$ does not contain any $0$-labeled input nor gates with empty inputs. 
\end{theorem}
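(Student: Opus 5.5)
The plan is a single bottom-up pass along $T$ that prunes every node computing the $0$ function. Call a $t$-node $g$ \emph{dead} if it is $0$-labeled (when $t$ is a leaf), or if every pair $(g_1,g_2)\in E(g)$ has at least one dead component. This includes the case $E(g)=\emptyset$. By induction on $T$, a node is dead exactly when $f_g$ has no model. A $0$-labeled leaf has no model by definition. For an internal $g$, $f_g=\bigvee_{(g_1,g_2)\in E(g)} f_{g_1}\wedge f_{g_2}$, and each disjunct is unsatisfiable exactly when one conjunct is, because $f_{g_1}$ and $f_{g_2}$ are over disjoint variables. Deadness is computed by processing the nodes of $T$ bottom-up and scanning each pair in $E(g)$ once, so the total time is $O(|C| + |N|)$.

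Next we build $C'$. For every $t$, let $N'_t$ consist of the live $t$-nodes, and for each live $g$ set $E'(g)=\{(g_1,g_2)\in E(g)\mid g_1,g_2 \text{ both live}\}$. The output $\out$ is live because $C$ has a model, so it remains available as the output of $C'$.

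Three things must be checked. First, each live $g$ computes the same function in $C'$ as in $C$: by induction, the removed pairs contributed unsatisfiable disjuncts. Second, $C'$ contains no $0$-labeled leaf, since those are dead, and no gate with empty inputs, since a live internal node has at least one pair with both components live. Third, determinism is preserved. At leaves we only delete nodes, so the syntactic condition still holds; in particular, if a $1$-node exists, the remaining nodes had to be $0$-labeled and are now gone, which is fine. At internal nodes each $E'(g)\subseteq E(g)$, so the sets stay pairwise disjoint. Every pair in $E'(g)$ refers to nodes that remain in $C'$, so $E'$ is well-typed.

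The only step that needs care is the formal statement of the induction that $C'$ computes $f_C$. The cleanest route is through certificates, using \cref{prop:prop:tdd-pt-models}. A certificate in $C$ for a model $\tau$ uses only nodes $g_t^\calP$ satisfied by $\tau|_{X_t}$, so all of them are live and the certificate survives in $C'$. Conversely, every certificate in $C'$ is also a certificate in $C$. Hence $C$ and $C'$ have the same models.
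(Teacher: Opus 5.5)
Your proposal is correct and matches the paper's argument. The paper deletes $0$-inputs and empty-input gates one at a time and propagates upward: it removes every pair that mentions a deleted gate, then deletes any gate whose input set becomes empty. The gates it ends up removing are exactly your dead nodes, which you compute in one bottom-up pass; your certificate-based equivalence check and your remark that the output stays live because $C$ has a model spell out what the paper leaves implicit.
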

\begin{proof}
  Let $g$ be a gate that is either a $0$-labeled input or a gate with empty inputs. Assume that $g$ is a $u$-node for some node $u$ of $T$. If $u$ is the root of $T$, we simply remove $g$ from the circuit. Indeed, since $C$ has at least one model, $g$ is not the output of $C$ and it is not connected to any other gate, hence we can remove it without changing the function computed by the circuit.

  Otherwise, let $t$ be the parent of $u$ in $T$. We remove $g$ from the circuit and propagate the change upward: for every $t$-node $h$, we remove from $E(h)$ any pair containing $g$. Now for some $h$, $E(h)$ may become empty, in which case, we remove $h$ from the circuit similarly and propagate the change upward. We can apply this transformation until no $0$-input remains nor any gate with empty inputs. Observe that we only remove gates from $C$, hence the transformation preserves determinism.
\end{proof}

Removing $1$-inputs is however more delicate. Indeed, by construction, TDD have a rigid structure: if one node $h$ has a pair $(g,g')$ as input, with $g$ being a $1$-labeled input, then we cannot simply remove $g$ from the pair since it will change the definition of TDD. One way of doing it would be to allow TDD gates to have singletons in their input but it will make the definition of determinism slightly more delicate. We hence cannot always remove $1$-inputs in TDD.

There is a setting where it is possible. Assume that for some variable $x$, $C$ does not contain any input labeled by $x$ nor by $\neg x$. In this case, we say that $C$ \emph{does not syntactically depend on $x$}. Now, if $\ell$ is the leaf of $T$ labeled by $x$, every $\ell$-node of $C$ is labeled by a constant, and wlog by \cref{thm:prop:zero-elim}, we can assume these constants to be $1$. These constants can be removed from the circuit by removing $x$ from the vtree and renormalizing it along this new vtree.

Given a vtree $T$ over variables $X$ and $x \in X$, we let $T \setminus x$ be the vtree obtained as follows: we start by removing the leaf labeled by $x$ in $T$. Now the parent $t$ of this leaf in $T$ has only one child $u$ which is not allowed in a vtree. If $t$ is the root of the vtree, we simply remove $t$. Otherwise, we remove $t$ and connect $u$ directly to the parent $p$ of $t$ in $T$. See \cref{fig:prop:removing-leaves} for an illustration. 

\begin{figure}
  \centering
  \includegraphics[page=1]{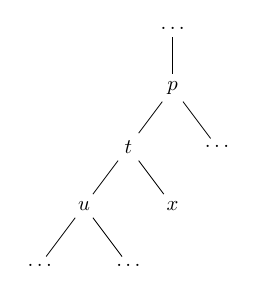}
  \includegraphics[page=2]{figs/vtree-minusx.pdf}
  \includegraphics[page=3]{figs/vtree-minusx.pdf}
  \caption{Removing $x$ from a vtree $T$: we first remove the $x$ leaf, then remove the parent of $x$ by plugging its only child $u$ with its parent $p$.}
  \label{fig:prop:removing-leaves}
\end{figure}

\begin{theorem}
  \label{thm:prop:remove-cst} Let $T$ be a vtree over $X$ and $C$ a TDD respecting $T$. Assume there is $x \in X$ such that $C$ does not syntactically depend on $x$. Then we can build a TDD $C'$ respecting $T \setminus x$ in linear time. 
\end{theorem}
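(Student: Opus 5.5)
We construct $C'$ by merging each $t$-node into a $u$-node, where $\ell$ is the leaf labeled by $x$, $t$ is its parent, and $u$ is the sibling of $\ell$. Implicitly, $C'$ computes $\exists x. f_C$, which equals $f_C$ viewed as a function of $X\setminus\{x\}$ since $C$ does not syntactically depend on $x$.

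\textbf{Preprocessing.} If $C$ has no model, we output a trivial TDD for the $0$ function over $T\setminus x$, for instance with every leaf node labeled $0$. Otherwise we apply \cref{thm:prop:zero-elim}, so every $\ell$-node is labeled by a constant $1$. By the determinism condition at leaves, a $1$-labeled node forces all other $\ell$-nodes to be $0$, so after elimination there is exactly one $\ell$-node, call it $h$, with label $1$. Assume wlog that $\ell$ is the left child of $t$. Every $t$-node $g$ then has $E(g) \subseteq \{h\} \times N_u$, so we write $E(g) = \{(h,g_2) \mid g_2 \in S_g\}$ with $S_g \subseteq N_u$. Determinism at $t$ says the sets $S_g$ are pairwise disjoint.

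\textbf{Construction.} Nodes at vtree nodes other than $\ell$ and $t$ are kept unchanged, and the $u$-nodes still form a valid layer of $T\setminus x$. We delete the $\ell$-node $h$. For each $t$-node $g$, we replace every occurrence of $g$ in a pair $E(p)$, where $p$ is a node at the parent $p_t$ of $t$, by the elements of $S_g$. Concretely, if $t$ is the left child of $p_t$, a pair $(g,g')$ is replaced by all pairs $(g_2,g')$ with $g_2\in S_g$, and symmetrically if $t$ is the right child. In $T\setminus x$ the node $u$ is a child of $p_t$, so these pairs have the right type. If $t$ is the root, there is no parent to rewrite; instead the new output is a single $u$-node $o$. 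Since the $u$-nodes have disjoint models by \cref{thm:prop:tdd-double-det}, the intended semantics is $f_o = \bigvee_{g_2 \in S_{\out}} f_{g_2}$. To realize this without violating determinism, we build $o$ by merging the nodes of $S_{\out}$ into one node, which means taking $E(o)=\bigcup_{g_2\in S_{\out}}E(g_2)$ (or the corresponding union of labels when $u$ is a leaf). Any other $u$-node sharing pairs with these is irrelevant to the output and can be deleted. The construction runs in linear time, because each pair $(h,g_2)$ and each parent pair is touched a bounded number of times, proportional to the sizes involved.

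\textbf{Correctness.} Semantics follow because $f_g = f_h \wedge \bigvee_{g_2\in S_g} f_{g_2} = \bigvee_{g_2\in S_g} f_{g_2}$ as a function that does not constrain $x$. Expanding $g$ into $S_g$ in the parent pairs therefore preserves every function computed above $t$.

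\textbf{Determinism, the main obstacle.} At $p_t$ it could fail in two ways. First, two different parent nodes $p\neq p'$ might receive the same new pair $(g_2,g')$. This would require $g_2 \in S_g\cap S_{\tilde g}$ with $(g,g')\in E(p)$ and $(\tilde g,g')\in E(p')$. Disjointness of the $S_g$ forces $g=\tilde g$, so both pairs equal $(g,g')$, contradicting determinism at $p_t$ in $C$. Second, within a single node $E$ is a set, so duplicates are harmless. The root case is the delicate one: merging gates at $u$ could in principle collide with pairs used by other $u$-nodes. I would handle it by the deletion argument above, or by handling leaf $u$ explicitly, where merging a $x'$-labeled node and a $\neg x'$-labeled node yields a $1$-labeled node while the others become $0$.
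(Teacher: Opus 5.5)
Your construction is correct as far as semantics and determinism go, but it does not run in linear time, so it does not prove the statement as written. The step that fails is the rewriting at the parent $p_t$. Each pair $(g,g')\in E(p)$ is replaced by $|S_g|$ pairs, so the new number of pairs at $p_t$ is $\sum_{g\in N_t}|S_g|\cdot d_g$, where $d_g$ is the number of parent pairs containing $g$. This is a product of quantities appearing in $C$, not a sum. For instance, suppose a single $t$-node $g$ has $S_g=\{v_1,\dots,v_k\}$ and a single $p_t$-node has $E(p)=\{(g,s_1),\dots,(g,s_k)\}$, which determinism allows. Then $O(k)$ input pairs become the $k^2$ pairs $(v_i,s_j)$. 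Your claim that ``each parent pair is touched a bounded number of times'' overlooks that handling one parent pair costs $|S_g|$, which is unbounded. What you actually get is an $O(|C|^2)$ construction. The paper also relies on the size not growing: conditioning is stated with output size at most $|C|$.

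The fix is to do everywhere what you already do at the root. Instead of distributing $g$ over $S_g$ in the parent, proceed as follows:
\begin{itemize}
\item Replace each $t$-node $g$ by a fresh $u$-node $o_g$ with $E(o_g)=\bigcup_{v\in S_g}E(v)$. If $u$ is a leaf, take the disjunction of labels, as you describe.
\item Delete the old $u$-nodes.
\item Rewrite each parent pair $(g,g')$ as $(o_g,g')$.
\end{itemize}
This works for three reasons:
\begin{itemize}
\item The $S_g$ are pairwise disjoint, and distinct $u$-nodes have disjoint input sets, so the sets $E(o_g)$ are pairwise disjoint. Hence determinism holds at $u$, including the leaf case.
\item The map $g\mapsto o_g$ is injective, so determinism at $p_t$ is inherited directly.
\item $\sum_g|E(o_g)|\le\sum_{v\in N_u}|E(v)|$ and the parent pairs are rewritten one-for-one, so the size does not increase and the construction is linear.
\end{itemize}
This is exactly the paper's proof: its $U_h$ is your $S_g$ and its $g_h$ is your $o_g$. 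Your explicit treatment of a leaf $u$ is a detail the paper glosses over. Finally, your worry at the root that other $u$-nodes might share pairs with the merged ones is vacuous, for the same reason: determinism at $u$ already makes all the sets $E(v)$ disjoint.
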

\begin{proof}
  We let $l$ be the leaf of $T$ labeled by $x$ and use the same notation as in \cref{fig:prop:removing-leaves} for the other nodes of $T$. First assume that every $l$-node is labeled by $\bot$. In this case, every $t$-node $h$ has no model and then, the whole circuit has no model and we can replace it by a TDD computing $\bot$.

  Otherwise, there exists exactly one $l$-node $g$ labeled by $\top$, all the other $l$-nodes being labeled by $\bot$. Let $h$ be a $t$-node and let $U_h$ be the set of $u$-nodes $g'$ such that $(g',g) \in E(h)$. The models of $h$ are exactly the disjoint union of the models of $g'$, for $g' \in U_h$. For every $t$-node $h$, we introduce a new $u$-node $g_h$ whose inputs are $\bigcup_{v \in U_h} E(v)$ and remove every other $u$-node. It is clear that $g_h$ and $h$ have the same models over $X \setminus \{x\}$. 

Now if $t$ is the root of $T$, the output of $C$ is some $t$-node $h$. We remove every $t$-node from $C$ and choose $g_h$ as the new output. We hence have a new TDD over $T \setminus x$ which computes the same function as $C$ but over $X \setminus \{x\}$.

If $t$ is not the root of $T$, let $p$ be its parent in $T$. Let $v$ be a $p$-node of $C$. Its inputs are of the form $(h, b)$ with $h$ a $t$-node. We replace each such input of $v$ by $(g_h,b)$. It does not change the function computed by $v$ but now $g_h$ is a $u$-node and we have a new TDD over $T \setminus x$ which computes the same function as $C$ but over $X \setminus \{x\}$.

The transformation preserves determinism because if $C$ is deterministic, then $U_h \cap U_{h'} = \emptyset$ for distinct $t$-nodes $h,h'$. Hence, $E(g_h) \cap E(g_{h'}) = \emptyset$.
\end{proof}

\textbf{Conditioning.} Conditioning TDDs works similarly as the DNNF conditioning. Let $C$ be a TDD respecting some vtree $T$ over variables set $X$. Let $x \in X$ be a variable and $b \in \{0,1\}$. Let $\ell = x$ if $b=1$ and $\ell = \neg x$ otherwise. We let $C'$ be the circuit obtained by simply relabelling every input labeled by $\ell$ with $1$ and every input labeled by $\neg \ell$ with $0$. This new circuit almost computes $C$ conditioned by $\tup[x/b]$, but not exactly. Indeed, $C'$ is still defined over $X$. Its models are hence  $\{\tau \times \tup[x/b'] \mid b' \in \{0,1\} \text{ and } \tau \times \tup[x/b] \in f_C \}$ while what we would like to have is a circuit whose models are $\{ \tau \in 2^{X \setminus \{x\}} \mid \tau \times \tup[x/b] \in f_C\}$. We get such a circuit by observing that $C'$ does not syntactically depend on $x$. Hence, we can apply \cref{thm:prop:remove-cst} to get a TDD structured over $T \setminus x$ whose models are exactly $\{ \tau \in 2^{X \setminus \{x\}} \mid \tau \times \tup[x/b] \in f_C\}$. 

The transformation does not affect the size nor the determinism of the TDD. Indeed, assume that $C$ is deterministic. Since the structure of $C'$ is the same as the structure of $C$, we still have that for every pair $g,g'$ of internal $t$-nodes, $E(g) \cap E(g') = \emptyset$. For $t$ a leaf, observe that either there is a $t$-node labeled by $1$ in $C$, in which case, we have nothing to change since by determinism, no input is labeled by $\ell$ nor $\neg \ell$. Otherwise, we introduce at most one $1$-input and there is no $t$-node labeled by a literal anymore so the circuit is still deterministic. Finally, the transformation from \cref{thm:prop:remove-cst} also preserves determinism. We have proven:

\begin{theorem}
  \label{thm:prop:conditioning} Given a TDD $C$ of width $k$, a vtree $T$ over variables $X$,  $x \in X$ and $b \in \{0,1\}$, one can build in time $O(|C|)$  a TDD $C'$ with size at most $|C|$ and width at most $k$ computing $f_C[x/b]$. Moreover,  $C'$ respects $T \setminus x$.
\end{theorem}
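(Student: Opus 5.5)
The plan is to use the relabelling construction described just before the statement and then call \cref{thm:prop:remove-cst}. Let $\ell = x$ if $b=1$ and $\ell = \neg x$ otherwise. Form $C'$ from $C$ by relabelling every input labeled $\ell$ with $1$ and every input labeled $\neg\ell$ with $0$, leaving nodes and inputs $E$ unchanged. This is a single linear pass. Since the node set and the maps $E$ are identical, $|C'| = |C|$ and the width is still at most $k$.

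\textbf{Semantics.} I would prove by bottom-up induction on $T$ that for every $t$-node $g$ and every $\tau \in 2^{X_t}$, $\tau$ is a model of $f'_g$ (the function in $C'$) if and only if $\tau$ with $x$ overwritten by $b$ is a model of $f_g$, whenever $x \in X_t$. Otherwise $f'_g = f_g$. At the leaf for $x$, the relabelling is exactly what makes this hold. At internal nodes it follows from the defining formula $f_g = \bigvee_{(g_1,g_2)\in E(g)} f_{g_1}\wedge f_{g_2}$, since $x$ lies in exactly one of $X_{t_1}, X_{t_2}$. Hence $C'$ computes $f_C[x/b]$, seen as a function over $X$ that does not depend on $x$.

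\textbf{Determinism.} This is the only delicate step. Internal nodes keep their inputs, so the condition $E(g)\cap E(g')=\emptyset$ is untouched. At the leaf $t$ for $x$ there are two cases. If some $t$-node of $C$ is labeled $1$, then by determinism every other $t$-node is labeled $0$, so nothing is relabelled. Otherwise $N_t$ contains at most one $\ell$-node and at most one $\neg\ell$-node, and all others are $0$-nodes. After relabelling there is at most one $1$-node and all remaining nodes are labeled $0$, which is exactly the leaf condition for determinism.

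\textbf{Removing $x$.} $C'$ now contains no input labeled $x$ or $\neg x$, so it does not syntactically depend on $x$. Applying \cref{thm:prop:remove-cst} gives, in linear time, a TDD respecting $T\setminus x$ with models $\{\tau\in 2^{X\setminus\{x\}} \mid \tau\times\tup[x/b] \models C\}$. To keep the size and width bounds, I would check that this construction does not increase them. It only merges $u$-nodes into one $g_h$ per $t$-node $h$ and drops the $t$-nodes. So the number of $u$-level nodes is at most $|N_t|\le k$, and the total number of input pairs does not grow, because the sets $U_h$ are disjoint. The earlier proof already shows that determinism is preserved.
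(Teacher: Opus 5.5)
Your proof is correct and follows the paper's own argument: relabel the $\ell$-inputs to $1$ and the $\neg\ell$-inputs to $0$, use the same two-case analysis at the $x$-leaf to show determinism is preserved, and then invoke \cref{thm:prop:remove-cst} to move to $T\setminus x$. Your explicit check that the construction in \cref{thm:prop:remove-cst} increases neither size nor width (via disjointness of the sets $U_h$) fills in a step the paper only asserts.
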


\textbf{Negation.} TDD can be negated in polynomial time. This is a great advantage against structured deterministic DNNF, making them more akin to OBDD in this respect. Intuitively, we can transform a TDD such that we can add a new $t$-node that accepts every assignment of $X_t$ that is not accepted by another $t$-node. If we have this property at the root, then we can merge every $t$-node that is not $\out$ and set it as the new output to compute $\neg C$.

We say that a TDD $C=(N,E)$ respecting vtree $T$ is \emph{$t$-full} if $f_t := \bigvee_{g \in N_t} f_g$ is the $1$-constant Boolean function (every assignment $\tau \in 2^{X_t}$ is a model of $f_t$). It is \emph{full} if it is $t$-full for every node $t$ of $T$.

\begin{proposition}
  \label{prop:prop:tdd-full} Let $T$ be a vtree over $X$. For every TDD $C=(N,E)$ of width $k$ respecting $T$, one can build in time $O(k^2|X|)$  a full TDD $C'$ of width $k+1$ respecting $T$, computing the same function and of size at most $|C| + 2|X|\cdot k^2$. 
\end{proposition}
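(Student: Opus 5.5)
We add one fresh ``complement'' node $n_t$ at every node $t$ of $T$, working bottom-up along $T$. The invariant after processing $t$ is that the TDD is $u$-full for every $u$ in the subtree of $t$ and still deterministic. By \cref{thm:prop:tdd-double-det}, this means every $\tau\in 2^{X_u}$ satisfies exactly one $u$-node.

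\textbf{Leaves.} Let $t$ be a leaf labeled by $x$, and look at which labels occur among the $t$-nodes.
\begin{itemize}
\item If some node is labeled $1$, the leaf is already full.
\item If both $x$ and $\neg x$ occur, it is also already full.
\item If only $x$ occurs, add a node labeled $\neg x$; symmetrically, if only $\neg x$ occurs, add a node labeled $x$.
\item If all nodes are labeled $0$ (or there are none), add a node labeled $1$.
\end{itemize}
Each case respects the syntactic determinism condition for leaves, and it adds at most one node.

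\textbf{Internal nodes.} Let $t$ have children $t_1,t_2$, and assume both are already full. We create a new $t$-node $n_t$ with
\[E(n_t) = (N_{t_1}\times N_{t_2}) \setminus \bigcup_{g\in N_t} E(g).\]
Here $N_{t_1},N_{t_2}$ already include the new child nodes. Determinism holds by construction, since $E(n_t)$ is disjoint from every other $E(g)$.

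For fullness, take $\tau\in 2^{X_t}$. By fullness of the children, $\tau|_{X_{t_1}}$ satisfies some $g_1\in N_{t_1}$ and $\tau|_{X_{t_2}}$ satisfies some $g_2\in N_{t_2}$. The pair $(g_1,g_2)$ lies in $E(g)$ for some old $t$-node $g$, or else it lies in $E(n_t)$. Either way, $\tau$ satisfies some $t$-node.

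Adding nodes never removes pairs from old nodes' inputs, so every old node $g$ keeps exactly its old input set. Hence $f_g$ is unchanged by induction, and in particular $\out$ still computes the same function.

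\textbf{Bounds.} Each $N_t$ grows by at most one node, so the width is at most $k+1$. The new edges are at most the pairs in $N_{t_1}\times N_{t_2}$, summed over the internal nodes of $T$. This is where the count is delicate. Naively $|N_{t_1}\times N_{t_2}|\le (k+1)^2$, which would give $|C|+2|X|(k+1)^2$ rather than the claimed $2|X|k^2$.

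To get the claimed bound, use $|E(n_t)|\le |N_{t_1}||N_{t_2}|-\sum_{g\in N_t}|E(g)|$. Summing over $t$, the total size of the new TDD is at most $\sum_t |N_{t_1}||N_{t_2}|$. That is roughly at most $|X|(k+1)^2$, since a binary tree on $|X|$ leaves has $|X|-1$ internal nodes. For $k\ge 1$ this is at most $2|X|k^2$, so certainly at most $|C|+2|X|k^2$. The case $k=0$ is handled trivially.

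\textbf{Running time.} We mark the pairs already used by old $t$-nodes, for instance in a $(k+1)\times(k+1)$ boolean table per internal node $t$. We then enumerate the complement pairs. This costs $O((k+1)^2)$ per node of $T$, i.e.\ $O(k^2|X|)$ overall.

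The main obstacle is the size accounting: one must use the complement bound above rather than the naive $(k+1)^2$ bound to reach the stated constant.
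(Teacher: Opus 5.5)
Your construction is the paper's own: complete the leaves, then give each internal vtree node a fresh $t$-node whose inputs are the uncovered pairs of $N_{t_1}\times N_{t_2}$. Your arguments for preserved semantics, determinism, fullness, width and running time are sound. Your leaf case is more careful than the paper's, which adds both $x$ and $\neg x$ when a leaf carries only $0$-labeled nodes and can then reach width $k+2$ there.

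The gap is the last step of the size count. Your identity is right: the old sets $E(g)$ together with $E(n_t)$ partition $N_{t_1}\times N_{t_2}$, where the children have already been completed. So the new TDD has size exactly $\sum_t |N_{t_1}|\,|N_{t_2}|\le (|X|-1)(k+1)^2$. But the claim that this is at most $2|X|k^2$ for every $k\ge 1$ is false, because $(k+1)^2\le 2k^2$ only holds for $k\ge 3$. For $k=1$ you are comparing $4(|X|-1)$ with $2|X|$.

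Better bookkeeping cannot fix this, because the bound $|C|+2|X|k^2$ is not attainable for $k=1$. Take the width-$1$ TDD for $x_1\wedge\cdots\wedge x_n$: one literal node per leaf and one node with a single input pair per internal vtree node, so $|C|=n-1$. Now consider any full TDD for this function respecting $T$.
\begin{itemize}
\item Each non-root vtree node $u$ has two distinct $X_u$-subfunctions, one trivial and one not. By \cref{lem:prop:gate-subfunction} they must be carried by two distinct satisfiable $u$-nodes.
\item By fullness together with \cref{thm:prop:tdd-double-det}, every pair of satisfiable child nodes must occur in some $E(g)$.
\end{itemize}
Hence every internal vtree node contributes at least $4$ pairs. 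The size is therefore at least $4(n-1)>3n-1=|C|+2|X|k^2$ as soon as $n\ge 4$.

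The paper's proof makes the same slip: it counts at most $k^2$ new pairs per node, forgetting that the children have already grown to $k+1$ nodes. What your argument, and the paper's construction, actually establishes is size at most $(|X|-1)(k+1)^2$, for instance at most $|C|+2|X|(k+1)^2$. The statement should be read with $(k+1)^2$ in place of $k^2$.
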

\begin{proof}
  The proof is by induction on $T$: for every node $t$, we transform $C$ in a bottom up way to make it $u$-full for every node $u$ below $t$ in $T$.

  If $t$ is a leaf, then either there is a $t$-node labeled by $1$ and it is obviously $t$-full. Otherwise, for every literal $\ell$ over $x$ such that no $t$-node is labeled by $\ell$, we add a $t$-node labeled by $\ell$. In the end, we have a $t$-node labeled by $x$ and one by $\neg x$, hence the circuit is now $t$-full.

  Now let $t$ be a node with children $t_1,t_2$ and assume that the circuit is $u$-full for any node below $t$ but $t$ itself. We transform the circuit into a $t$-full circuit. To this end, we add a new $t$-node $g$ to $C$ that is connected to every pair of $N_{t_1}\times N_{t_2}$ that is not plugged into any other $t$-node. In other words, we let $E(g) = (N_{t_1} \times N_{t_2}) \setminus \bigcup_{g' \in N_t} E(g')$. We now claim that the circuit is $t$-full. Indeed, let $\tau \in 2^{X_t}$ and let $\tau_1 = \tau|_{X_{t_1}}$, $\tau_2 = \tau|_{X_{t_2}}$. Since the circuit is $t_1$-full and $t_2$-full, we let $(g_1,g_2) \in N_{t_1}\times N_{t_2}$ such that $\tau_1$ is a model of $g_1$ and $\tau_2$ is a model of $g_2$. Let $g' \in N_t$ be the $t$-node such that $(g_1,g_2) \in E(g')$ which exists since every pair of $N_{t_1}\times N_{t_2}$ is now covered. We have that $\tau$ is a model of $g'$. Hence the circuit is now $t$-full.

  Observe that we have added at most one new gate per node of $t$ in the circuit hence we have increased the width of $C$ by at most $1$. The number of edges introduced for each node is at most $k^2$ and $T$ has at most $2|X|$ nodes, hence the total size of the circuit is now at most $|C|+2|X| \cdot k^2$.

  To build the circuit efficiently, one can do it as follows: for each $t$-node, create a $N_{t_1} \times N_{t_2}$ indexed matrix $M$ initialized to $0$. Loop over every $t$-node $v$ and their input. Upon seeing $(g_1,g_2) \in E(v)$, update $M[g_1,g_2]$ to $1$. The new gate to be added are exactly the entries of $M$ whose value is $0$ after having looped over every input of every $t$-node. The size of $M$ is at most $k^2$ and we loop over at most $k^2$ values to build it, hence we need $O(k^2)$ time to treat a $t$-node. Since there are at most $2|X|$ nodes in $T$, we need a total time $O(|X|k^2)$.
\end{proof}

Full TDD can easily be negated:
\begin{proposition}
  \label{prop:prop:tdd-neg} Let $T$ be a vtree over $X$. Given a full TDD $C=(N,E)$ of width $k$ respecting $T$, one can build in time $O(k)$ a full TDD $C'$ computing $\neg C$ of width at most $k$ and size at most $|C|$.
\end{proposition}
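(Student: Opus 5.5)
The construction only changes the output. Let $r$ be the root of $T$. Since $C$ is full, $C$ is in particular $r$-full, so every $\tau \in 2^X$ satisfies some $r$-node. By \cref{thm:prop:tdd-double-det}, the $r$-nodes have pairwise disjoint models, so every $\tau$ satisfies \emph{exactly one} $r$-node. Hence $\tau$ falsifies $C$ if and only if it satisfies some $r$-node $g \neq \out$. We therefore want a single $r$-node computing $\bigvee_{g \in N_r \setminus \{\out\}} f_g$.

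\textbf{Construction.} Build $C'$ by replacing all $r$-nodes other than $\out$ with one fresh $r$-node $h$, and make $h$ the new output. Its inputs are
\[ E'(h) = \bigcup_{g \in N_r\setminus\{\out\}} E(g). \]
Keep the old $\out$ as an ordinary $r$-node, with its inputs unchanged. All $t$-nodes for $t \neq r$ are left untouched.

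\textbf{Checking the properties.}
\begin{itemize}
\item \emph{Function.} By the definition of the semantics, $f_h = \bigvee_{g \neq \out} f_g$. By the exactly-one property above, this equals $\neg f_C$.
\item \emph{Determinism.} The condition only needs checking at $r$. The sets $E(g)$ for distinct $g$ were pairwise disjoint in $C$. So the union $E'(h)$ is disjoint from $E(\out)$, and determinism is preserved.
\item \emph{Fullness.} The $r$-nodes of $C'$ still cover $N_{r_1}\times N_{r_2}$ exactly as before, so $C'$ is $r$-full. Fullness below $r$ is unchanged.
\item \emph{Width and size.} At most two $r$-nodes remain, so the width does not grow; if $|N_r| = 1$ the width can only drop. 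The size is unchanged, since the edge sets are merged rather than duplicated.
\end{itemize}

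\textbf{Degenerate cases.}
\begin{itemize}
\item If $N_r = \{\out\}$, then $f_C \equiv 1$ and $h$ has empty inputs, so it computes $0$, which is correct. However, $C'$ is then no longer full at $r$, since $h$ and $\out$ together must still cover everything. Keeping the old $\out$ as a non-output $r$-node resolves this.
\item If $r$ is a leaf, so that $X = \{x\}$, relabel the leaf nodes directly instead: $x \leftrightarrow \neg x$, and $1 \leftrightarrow 0$ as appropriate.
\end{itemize}

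\textbf{Running time.} The stated $O(k)$ bound requires care: copying the edge sets explicitly would take $O(k^2)$ time. The plan is to represent $E(\cdot)$ by a map from pairs to $r$-nodes, and to relabel the at most $k$ $r$-nodes other than $\out$ as a single node, for example by redirecting them to a union–find representative or by flipping a membership flag. This touches only $O(k)$ objects. I expect this data-structure bookkeeping for the time bound, rather than correctness, to be the only delicate point.
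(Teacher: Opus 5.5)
Your proof is correct and is essentially the paper's. You merge all non-output $r$-nodes into one new output node whose inputs are the (disjoint) union of theirs, keep the old $\out$ so the root stays full, and use \cref{thm:prop:tdd-double-det} to see that the new node computes $\neg f_C$. Your extra care about the $O(k)$ bookkeeping and the leaf-root case goes slightly beyond what the paper writes.

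One small slip: when $|N_r|=1$ your construction goes from one to two $r$-nodes, so the width does not ``only drop''. It stays at most $k$ only when $k\ge 2$. For the trivial width-$1$ constant-$1$ TDD, no full width-$1$ TDD for $\neg C\equiv 0$ exists at all, a degenerate case the paper's own argument also overlooks.
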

\begin{proof}
  Let $r$ be the root of $T$.  Since $C$ is full, it is $r$-full, that is, $\bigvee_{g \in N_r} f_g$ is the $1$-constant Boolean function over $C$. Moreover, by \cref{thm:prop:tdd-double-det}, this disjunction is deterministic. Hence, $\neg f_C = \bigvee_{g \in N_r, g \neq \out} f_g$. We build $C'$ by merging every $r$-node $g$ which is not $\out$: that is, we remove every $r$-node that is not the root and add a new $r$-node $g'$ such that $E(g') = \bigcup_{g \in N_r, g \neq \out} E(g)$. Obviously, $f_{g'} = \neg f_C$. We hence set $\out[C'] = g'$ and $C'$ computes $\neg f_C$. Moreover $C'$ is still full since we kept the previous output gate  and neither the width nor the size have increased. We only need $O(k)$ to perform this transformation as we only need to loop over every $r$-node but the output and there are at most $k-1$ such nodes. 
\end{proof}

Applying \cref{prop:prop:tdd-full} and \cref{prop:prop:tdd-neg} successively gives:
\begin{theorem}
  \label{thm:prop:tdd-neg} Let $T$ be a vtree over $X$. Given a TDD $C$ of width $k$ respecting $T$, there exists a TDD $C'$ of width at most $k+1$ and of size at most $|C|+2|X|\cdot k^2$ computing $\neg C$. Moreover, $C'$ can be constructed in time $O(k^2|X|)$. 
\end{theorem}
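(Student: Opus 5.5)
The plan is to compose the two preceding propositions, checking that the bounds add up as stated. Start from the given TDD $C$ of width $k$ respecting $T$. First apply \cref{prop:prop:tdd-full}. It yields, in time $O(k^2|X|)$, a full TDD $C_1$ respecting $T$ that computes $f_C$, has width at most $k+1$, and has size at most $|C| + 2|X|\cdot k^2$.

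Then apply \cref{prop:prop:tdd-neg} to $C_1$. This takes time $O(k+1)$ and produces a full TDD $C'$ computing $\neg f_{C_1} = \neg f_C$. Its width is at most that of $C_1$, namely $k+1$, and its size is at most $|C_1| \le |C| + 2|X|\cdot k^2$. The total running time is $O(k^2|X|) + O(k) = O(k^2|X|)$, which matches the statement.

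The only point needing care is that \cref{prop:prop:tdd-neg} is stated in terms of the width of its input. It must be read with width $k+1$ rather than $k$; the conclusions only say the width and size do not increase, so this is harmless.

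I expect no real obstacle, since both ingredients are already proved. The one subtle point is the correctness of the merged output in \cref{prop:prop:tdd-neg}: it relies on the $r$-nodes having pairwise disjoint models, which is \cref{thm:prop:tdd-double-det}. Together with $r$-fullness, this disjointness gives that the disjunction of the non-output $r$-nodes is exactly $\neg f_C$. I would also check that merging these nodes preserves determinism. The merged node's input set is a union of input sets that were pairwise disjoint and disjoint from $E(\out)$. Since $\out$ is removed from the output role only by reassignment and not modified, the condition $E(g)\cap E(g')=\emptyset$ still holds at the root.
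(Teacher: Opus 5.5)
Your plan is exactly how the paper obtains this theorem: it is stated as the result of applying \cref{prop:prop:tdd-full} and then \cref{prop:prop:tdd-neg}. Your bookkeeping of width, time and determinism at the root is also fine.

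The weak point is the size bound you import from \cref{prop:prop:tdd-full}. It is not valid, and the statement as written already fails for $k=1$. The proof of that proposition asserts that each vtree node receives at most $k^2$ new input pairs. However, the new $t$-node collects all unused pairs of $N_{t_1}\times N_{t_2}$ \emph{after} the children have been made full, when $|N_{t_1}|$ and $|N_{t_2}|$ may already be $k+1$. So up to $(k+1)^2$ pairs can be added.

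Concretely, let $X=\{x_1,\dots,x_5\}$, fix any vtree, and let $C$ be the width-$1$ TDD for $x_1\wedge\dots\wedge x_5$. Its leaves are labeled $x_i$, and each of the $4$ internal vtree nodes carries one node with one input pair. Then $|C|=4$ and $|C|+2|X|k^2=14$. The two propositions produce $4$ input pairs at every internal vtree node, i.e.\ size $16$.

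Moreover, no TDD for $\neg x_1\vee\dots\vee\neg x_5$ has size below $15$. At each of the three non-root internal vtree nodes, each child has exactly two non-trivial subfunction classes (``all ones'' and ``some zero''), and all four combinations occur in models. At the root, three of the four combinations occur. By \cref{lem:prop:gate-subfunction}, every node with a model determines a single class, so distinct occurring combinations need distinct input pairs. This gives at least $3\cdot 4+3=15$ pairs.

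There is one more flaw in \cref{prop:prop:tdd-full} that you rely on for the width. At a leaf carrying only $0$-labeled nodes, its procedure adds two literal nodes, giving width $k+2$. The fix is to add a single $1$-labeled node there instead, which keeps determinism and fullness. Alternatively, handle unsatisfiable $C$ directly, since then $\neg C$ is the constant $1$.

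With that repair, your composition (and the paper's) establishes width $k+1$, time $O(k^2|X|)$, and size at most $|C|+(|X|-1)(k+1)^2$, since $T$ has $|X|-1$ internal nodes. This is at most $|C|+2|X|k^2$ once $k\ge 3$, because then $(k+1)^2\le 2k^2$. So only the constant for small $k$ is wrong, but the bound should be stated in the corrected form.
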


Observe that for full TDD, we can assume that we only have two $r$-nodes where $r$ is the root of $T$: one that is the output of $C$ and computes $f_C$, and one which computes the rest, hence $\neg f_C$. In other words, it resembles an OBDD where we have two sinks, one computing $f$ and the other $\neg f$. We will see in the next section that this connection can actually be made formal: an OBDD can be seen as a TDD if rooted at its sinks. 

\textbf{Apply.} We now prove that we can efficiently implement the apply operator over TDDs. The transformation for conjunction is mostly the same product construction as the one for OBDD or structured DNNF. For disjunction, we simply express $C \vee C'$ as $\neg (\neg C \wedge \neg C')$.

\begin{proposition}
  \label{prop:prop:tdd-and} Let $T$ be a vtree over variables $X$. Let $C_1$ and $C_2$ be two TDD respecting $T$ of width $k$ and $k'$ respectively. One can build a TDD $C$ respecting $T$ and computing $C_1 \wedge C_2$ in time $O(|C_1|\cdot|C_2|)$. The size of $C$ is at most $|C_1| \cdot |C_2|$ and its width is at most $k \cdot k'$. 
\end{proposition}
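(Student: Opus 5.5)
We use the standard product construction, carried out node by node along $T$. Write $C_1=(N^1,E^1)$ and $C_2=(N^2,E^2)$. For every node $t$ of $T$, the $t$-nodes of $C$ are the pairs $(g,h)\in N^1_t\times N^2_t$, so $|N_t|\le k\cdot k'$; this gives the width bound directly. The output is $\out = (\out[C_1],\out[C_2])$.

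\textbf{Leaves.} If $t$ is a leaf labeled by $x$, the node $(g,h)$ is labeled by the conjunction of the two labels, simplified to one of $0$, $1$, $x$, $\neg x$. For example, $x\wedge 1 = x$, $x \wedge \neg x = 0$, $1\wedge 1=1$.

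\textbf{Internal nodes.} If $t$ has children $t_1,t_2$, we set
\[E((g,h)) = \{((g_1,h_1),(g_2,h_2)) \mid (g_1,g_2)\in E^1(g),\ (h_1,h_2)\in E^2(h)\}.\]

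\textbf{Correctness.} We show by bottom-up induction on $T$ that $f_{(g,h)} = f_g \wedge f_h$. Leaves are immediate. For an internal node, expand the definition: $\tau$ is a model of $(g,h)$ iff there are $(g_1,g_2)\in E^1(g)$ and $(h_1,h_2)\in E^2(h)$ such that $\tau|_{X_{t_i}}$ is a model of both $g_i$ and $h_i$ for $i\in\{1,2\}$. This is exactly $f_g(\tau)\wedge f_h(\tau)$. Applied at the root, it gives $f_C = f_{C_1}\wedge f_{C_2}$.

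\textbf{Size and time.} We have $|E((g,h))| = |E^1(g)|\cdot|E^2(h)|$. Summing over all pairs gives $|C|\le \sum_t \sum_{g,h}|E^1(g)||E^2(h)| \le |C_1|\cdot|C_2|$. The construction enumerates these pairs, so it runs in time $O(|C_1|\cdot|C_2|)$, up to the leaf-level work, which is linear in the number of leaf-level pairs.

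\textbf{Determinism.} This is the only step that needs care: the construction must produce a TDD and not merely an nTDD. At internal nodes, suppose $((g_1,h_1),(g_2,h_2))$ lies in both $E((g,h))$ and $E((g',h'))$. Then $(g_1,g_2)\in E^1(g)\cap E^1(g')$, so $g=g'$ by determinism of $C_1$. Similarly $h=h'$.

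At leaves we must check the syntactic condition. Since $C_1$ and $C_2$ are deterministic, the leaf nodes of each have pairwise disjoint models, so the product nodes $(g,h)$ also have pairwise disjoint models.
\begin{itemize}
\item If some $(g,h)$ is labeled $1$, then $g$ and $h$ are both labeled $1$. These are unique in their respective circuits, and every other $g'$ (resp. $h'$) is labeled $0$, so every other product node is labeled $0$.
\item Suppose two product nodes are both labeled $x$. Then their models coincide, which contradicts disjointness. The same argument applies to $\neg x$.
\end{itemize}
Thus the leaf condition holds. If one insists on the strict syntactic form, any resulting duplicate $0$-nodes are harmless, since the condition permits arbitrarily many $0$-labeled nodes.
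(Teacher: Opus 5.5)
Your proposal is correct and follows the paper's proof: the same pairwise product construction at each vtree node, the same distributivity argument for correctness, and the same internal-node determinism argument via uniqueness of the parent in $C_1$ and in $C_2$. The only difference is at the leaves, where you obtain the syntactic condition from pairwise disjointness of the product nodes' models rather than from the paper's case analysis on labels; both arguments are valid.
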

\begin{proof}
  We construct $C$ such that for every node $t$ of $T$, for every $t$-node $g_1$ of $C_1$ and $g_2$ of $C_2$, $C$ has a $t$-node $v_{\{g_1,g_2\}}$ which computes $f_{g_1}  \wedge f_{g_2}$. We proceed by induction on $T$.

  If $T$ only contains a leaf $t$ labeled by $x$, then for each pair $g_1,g_2$ of $t$-nodes in $C_1$ and $C_2$, we introduce a gate $v_{g_1,g_2}$ labeled by $e_1 \wedge e_2$ where $e_1$ is the label of $g_1$ and $e_2$ is the label of $g_2$. The label of $v_{g_1,g_2}$ is in $\{x,\neg x, 0, 1\}$ and $v_{g_1,g_2}$ clearly computes $f_{g_1} \wedge f_{g_2}$. Observe that since $C_1$ and $C_2$ are both deterministic, the resulting circuit is also deterministic. Indeed, assume $v_{g_1,g_2}$ is labeled by $1$, then it means that both $g_1$ and $g_2$ are labeled by $1$ and then every other $t$-node of $C_1$ and $C_2$ are labeled by $0$. Therefore, for every pair $g_1',g_2'$ distinct from $g_1,g_2$, $v_{g_1',g_2'}$ is labeled by $0$.  Otherwise, assume $v_{g_1,g_2}$ is labeled by literal $\ell$. Then either both $g_1,g_2$ are labeled by $\ell$, in which case $v_{g_1,g_2}$ is the only gate of $C$ labeled by $\ell$ because $C_1$ and $C_2$ do not have any gate labeled by $1$. Or $g_1$ is labeled by $1$ and $g_2$ by $\ell$, in which case again, $v_{g_1,g_2}$ is the only gate of $C$ labeled by $\ell$ because $C_1$ does not have any gate labeled by $\ell$ nor $\neg \ell$. The last case where $g_1$ is labeled by $\ell$ and $g_2$ by $1$ is symmetrical.

  Now let $t$ be a node of $T$ with children $t_1,t_2$. Assume that we have constructed $C$ up to $t_1$ and $t_2$. We construct the $t$-nodes of $C$ as follows: for every pair $g_1,g_2$ of $t$-nodes of $C_1$ and $C_2$ respectively, we create a $t$-node $v_{g_1,g_2}$ and define $E(v_{g_1,g_2}) = \{ (v_{a_1,b_1},v_{a_2,b_2}) \mid (a_1,a_2) \in E(g_1), (b_1,b_2) \in E(g_2) \}$.

  By induction, $v_{g_1,g_2}$ computes
  \begin{align*}
    \bigvee_{(a_1,a_2) \in E(g_1), (b_1,b_2) \in E(g_2)} (f_{a_1} \wedge f_{b_1}) & \wedge (f_{a_2} \wedge f_{b_2})   \\ & = \bigvee_{(a_1,a_2) \in E(g_1), (b_1,b_2) \in E(g_2)} (f_{a_1} \wedge f_{a_2}) \wedge (f_{b_1} \wedge f_{b_2}) \\
                                                                                  & = \bigvee_{(a_1,a_2) \in E(g_1)} (f_{a_1} \wedge f_{a_2}) \wedge \bigvee_{(b_1,b_2) \in E(g_2)} (f_{b_1} \wedge f_{b_2}) \\
    &= f_{g_1} \wedge f_{g_2}.
  \end{align*}

  It remains to show that $C$ is deterministic. Let $v_{a_1,b_1}$ be a $t_1$-node of $C$ and $v_{a_2,b_2}$ be a $t_2$-node of $C$. We claim that there is at most one $t$-node $v_{g_1,g_2}$  in $C$ such that $(v_{a_1,b_1}, v_{a_2,b_2}) \in E(v_{g_1,g_2})$. Indeed, there is at most one $t$-node $g_1$ of $C_1$ such that $(a_1,a_2) \in E(g_1)$ and at most one $t$-node $g_2$ of $C_2$ such that $(b_1,b_2) \in E(g_2)$. Hence $(v_{a_1,b_1}, v_{a_2,b_2})$ only appears in $E(v_{g_1,g_2})$, that is, $C$ is deterministic.

  By construction, it is clear that the width of $C$ is $k \cdot k'$ and of size at most $|C_1|\cdot|C_2|$. Constructing the circuit takes time $O(|C_1|\cdot|C_2|)$ since we loop once over every pair of inputs in $C_1, C_2$. 
\end{proof}

Combining \cref{prop:prop:tdd-and} and \cref{thm:prop:tdd-neg}, we can also construct a circuit computing $f_C \vee f_{C'}$ as $\neg (\neg C \wedge \neg C')$ when both $C$ and $C'$ are TDDs. In other words:

\begin{theorem}
  \label{thm:prop:tdd-apply} Let $T$ be a vtree over variables $X$. Let $C_1$ and $C_2$ be two TDDs respecting $T$ of width $k$ and $k'$ respectively. Let $f(x_1,x_2)$ be a Boolean function over $\{x_1,x_2\}$. There exists a TDD $C$ respecting $T$ and computing $f(C_1, C_2)$. Moreover, the size of $C$ is at most $O(|C_1| \cdot |C_2|)$, its width is at most $(k+1) \cdot (k'+1)$ and it can be built in time $O(|C_1| \cdot |C_2|)$.
\end{theorem}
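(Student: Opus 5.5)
The most direct route is to build one product circuit over \emph{full} TDDs. Negating the output of a full TDD costs nothing (\cref{prop:prop:tdd-neg}), so every Boolean combination of two functions can then be read off at the root of a single product. This gives the width bound $(k+1)(k'+1)$ directly. Composing the $\vee$ and $\neg$ constructions would not: $\neg(\neg C_1\wedge\neg C_2)$ followed by a further negation makes the width grow beyond $(k+1)(k'+1)$.

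First, apply \cref{prop:prop:tdd-full} to $C_1$ and $C_2$. This yields full TDDs $C_1',C_2'$ of widths $k+1$ and $k'+1$, computing the same functions, of sizes $|C_1|+2|X|k^2$ and $|C_2|+2|X|k'^2$, and built in time $O(k^2|X|)$ and $O(k'^2|X|)$. Next, apply the product construction of \cref{prop:prop:tdd-and} to $C_1'$ and $C_2'$, but keep all the root nodes $v_{g_1,g_2}$ for $g_1\in N_r(C_1')$ and $g_2\in N_r(C_2')$, not only $v_{\out[C_1'],\out[C_2']}$. By the proof of \cref{prop:prop:tdd-and}, each $t$-node $v_{g_1,g_2}$ computes $f_{g_1}\wedge f_{g_2}$, the result $D$ is deterministic, and its width is at most $(k+1)(k'+1)$. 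Moreover $D$ is again full. Indeed, given $\tau\in 2^{X_t}$, fullness of $C_1'$ and $C_2'$ gives $t$-nodes $g_1$ and $g_2$ that are both satisfied by $\tau$, and then $\tau$ satisfies $v_{g_1,g_2}$.

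Now do the root computation. Write $r$ for the root. By fullness and \cref{thm:prop:tdd-double-det}, the functions $f_{g}$, $g\in N_r(C_1')$, partition $2^X$, and $\out[C_1']$ is the unique $r$-node computing $f_{C_1}$; the other $r$-nodes cover exactly $\neg f_{C_1}$. The same holds for $C_2'$. Hence the $r$-nodes of $D$ split into four classes, according to whether $g_1=\out[C_1']$ and whether $g_2=\out[C_2']$. The disjunction of each class computes one of the four conjunctions $f_{C_1}\wedge f_{C_2}$, $f_{C_1}\wedge\neg f_{C_2}$, $\neg f_{C_1}\wedge f_{C_2}$, $\neg f_{C_1}\wedge\neg f_{C_2}$. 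Writing $f$ in disjunctive normal form over these four minterms, $f(C_1,C_2)$ is the disjunction of the $r$-nodes $v_{g_1,g_2}$ whose class $(b_1,b_2)$ satisfies $f(b_1,b_2)=1$.

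To finish, merge these $r$-nodes into a fresh output $o$ with $E(o)=\bigcup E(v_{g_1,g_2})$, and delete every other $r$-node, exactly as in \cref{prop:prop:tdd-neg}. Deleting nodes cannot break determinism, and the input sets of distinct $r$-nodes of $D$ were already pairwise disjoint, so $E(o)$ does not clash with anything. If $f$ is constant $0$, $o$ gets no inputs. Then $o$ computes $f(C_1,C_2)$, since the disjuncts are disjoint and their union is the correct set of models.

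The main obstacle is the size bound. The product has size at most $|C_1'|\cdot|C_2'|$, which expands to $(|C_1|+2|X|k^2)(|C_2|+2|X|k'^2)$, and this is not obviously $O(|C_1|\cdot|C_2|)$. The same issue arises for the running time. To handle it, I would argue that we may assume w.l.o.g. that no $t$-node is useless (unreachable nodes and the $0$-nodes of \cref{thm:prop:zero-elim} can be pruned). Then $|C_1|$ is at least of the order of $|X|\cdot$(number of nodes), and each node added for fullness contributes at most $k^2\le |E|$-comparable many pairs. I would try to charge these pairs to the existing edges of $C_1$. If that charging fails, I would fall back on stating the bound as $O(|C_1'|\cdot|C_2'|)$, i.e.\ treating $|X|k^2$ as absorbed into the sizes, which is the sense in which I expect the $O(\cdot)$ in the statement is meant.
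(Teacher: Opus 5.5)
Your construction is correct, and it is cleaner than the paper's route. The paper only composes known operations: the product of \cref{prop:prop:tdd-and} for $\wedge$, completion and merging (\cref{thm:prop:tdd-neg}) for $\neg$, and $\neg(\neg C_1\wedge\neg C_2)$ for $\vee$. It leaves general $f$ implicit. You read all sixteen binary functions off the four root classes of a single product of full TDDs, and this is what actually justifies the width $(k+1)(k'+1)$ for every $f$. For $x_1\oplus x_2$, composing products and disjunctions would multiply widths again. Your observation that a product of full TDDs is full is exactly what the paper uses silently to avoid an extra $+1$ in the width, even for $\vee$. Your motivating remark is slightly off, though. A further negation is not what hurts the composition route, since negating a full TDD is free by \cref{prop:prop:tdd-neg}; the cost comes from disjunctions of several products.

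The real issue is the size paragraph, and you should drop the charging idea: it cannot work, because $O(|C_1|\cdot|C_2|)$ is false in general, already for $f=\vee$. Take $Y=\{y_1,\dots,y_m\}$ and $Z=\{z_1,\dots,z_m\}$. Use a vtree whose root has two children: a leaf $x_0$, and a node $t$ whose children $t_1,t_2$ span $Y$ and $Z$. Let $C_1$ be the width-$1$ TDD for $x_0$, of size $2m$. Let $C_2$ be a TDD for $\bigwedge_j(y_j\leftrightarrow z_j)$ with one $u$-node per assignment of $X_u$ for each $u$ strictly below $t$, and a single $t$-node holding the $2^m$ pairs that match each $a\in 2^Y$ with its copy on $Z$. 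It has width $2^m$ and size $O(m\cdot 2^m)$. For $h=f_{C_1}\vee f_{C_2}$ and $a\in 2^Y$, the subfunction $h[a]$ is $x_0$ or-ed with the indicator of the copy of $a$ on $Z$. These subfunctions are pairwise distinct and non-trivial, and the same holds symmetrically for $Z$. By \cref{thm:prop:tdd-double-det} and \cref{lem:prop:gate-subfunction}, in any TDD for $h$ respecting this vtree each $a$ satisfies exactly one $t_1$-node, and distinct $a$ satisfy distinct ones; likewise for $b\in 2^Z$. Every $a\times b$ extends to a model by setting $x_0=1$, so the certificates force at least $4^m$ distinct pairs at $t$. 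This is far above $|C_1|\cdot|C_2|=O(m^2\cdot 2^m)$.

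Concretely, your charging fails at $t$ in $C_2$, where fullness adds about $4^m$ uncovered pairs against only $2^m$ existing ones. The paper's own argument pays the same additive $2|X|k^2$ and $2|X|k'^2$ terms from \cref{thm:prop:tdd-neg}, so it does not establish the stated bound either. Your fallback, size and time $O\bigl((|C_1|+|X|k^2)(|C_2|+|X|k'^2)\bigr)$, is the correct statement. It collapses to $O(|C_1|\cdot|C_2|)$ when, for instance, the inputs are already full.
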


Observe that one corollary of TDDs supporting the apply transformation is the fact that TDD is a complete language, that is, it can represent every Boolean function over a given set of variables $X$ and with any vtree over $X$. 
  \begin{proposition}
    \label{prop:prop:dTDD-complete} For every Boolean function $f$ over $X$ and vtree $T$ over $X$, there exists a TDD computing $f$.
  \end{proposition}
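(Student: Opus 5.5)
The plan is to build a TDD for $f$ from a DNF-like decomposition of $f$ into its models, using the apply operation of \cref{thm:prop:tdd-apply} together with elementary TDDs for literals. First, I construct, for every literal $\ell$ over a variable $x \in X$, a TDD $C_\ell$ respecting $T$ that computes $\ell$ (viewed as a function over all of $X$). For each leaf $t$ of $T$ labeled by a variable $y$, create exactly one $t$-node: labeled by $\ell$ if $y = x$, and by $1$ otherwise. For each internal node $t$ with children $t_1,t_2$, create exactly one $t$-node $g_t$ with $E(g_t) = \{(g_{t_1},g_{t_2})\}$, and let the output be $g_r$ for the root $r$. Determinism holds trivially: each leaf has a single node, and each pair is the input of a single node. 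By induction along $T$, $g_t$ computes $\ell$ if $x \in X_t$ and the constant $1$ over $X_t$ otherwise. In the same way I obtain a TDD for the constant $0$ (all leaves labeled $0$) and for the constant $1$.

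Next, for each assignment $\tau \in 2^X$, I build a TDD $C_\tau$ computing the minterm $\bigwedge_{x \in X} \ell^\tau_x$, where $\ell^\tau_x = x$ if $\tau(x)=1$ and $\neg x$ otherwise. This is done by iterating \cref{prop:prop:tdd-and} over the $C_{\ell^\tau_x}$. All intermediate TDDs respect the same vtree $T$, so the bounded conjunction applies at every step. Alternatively, the minterm TDD can be written down directly by labeling each leaf with the corresponding literal.

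Finally, I write $f = \bigvee_{\tau : f(\tau)=1} C_\tau$. If $f$ has no model, I use the $0$-TDD. Otherwise I combine the minterms pairwise using \cref{thm:prop:tdd-apply} with the binary function $x_1 \vee x_2$, starting from one minterm and disjoining the rest in any order. Each application yields a TDD respecting $T$ computing the disjunction, so induction on the number of models gives a TDD computing $f$.

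There is essentially no obstacle, since the statement only asks for existence and size is irrelevant (the construction may be exponential). The only point needing care is the base case: checking that the hand-built literal TDDs satisfy the syntactic determinism conditions. For leaves this is immediate from the single-node construction, and after that, apply preserves being a TDD by \cref{thm:prop:tdd-apply}.
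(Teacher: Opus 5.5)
Your proposal is correct and follows essentially the same route as the paper: build, for each model $\tau$, a TDD respecting $T$ that accepts exactly $\tau$ (the paper does this directly by induction on $T$, as in your alternative), then disjoin these iteratively with the apply operation for $\vee$. You also treat the case where $f$ has no model explicitly with a $0$-TDD, which the paper leaves implicit.
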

  \begin{proof}
    Let $T$ be a vtree over $X$. It is enough to observe that for any $\tau \in 2^X$, there exists a TDD $C_\tau$ respecting $T$ that accepts only $\tau$. This is straightforward by induction over $T$. Now, given a Boolean function $f$ with models $\tau_1,\dots,\tau_p$, we get a TDD $C_p$ for $f$ by constructing it iteratively as follows: $C_1=C_{\tau_1}$ and $C_{i+1} = APPLY(\vee, C_i, C_{\tau_{i+1}})$.
  \end{proof}

  \cref{prop:prop:dTDD-complete} may seem obvious but we observe that enforcing structure in dec-DNNF leads to an incomplete language. Indeed, if we fix a balanced vtree $T$ over variables $x_1,\dots,x_n$, then there is no decision-DNNF respecting $T$ and computing $x_1\vee \dots \vee x_n$. 

\textbf{Forgetting.} We conclude this section with one last transformation, which is notoriously easy for DNNF but not for deterministic DNNF. We have a similar property with TDDs. Given a Boolean function $f$ over $X$ and $Y \subseteq X$, we denote by $\exists Y. f$ the Boolean function whose models are $\{\tau|_{X \setminus Y} \mid \tau \models f \}$. This operation is often called \emph{forgetting} or \emph{existential projection}. Forgetting is easy in TDD but the transformation does not preserve determinism:

\begin{theorem}
  \label{prop:prop:TDD-forget} Let $T$ be a vtree over $X$ and $Y \subseteq X$. Given a TDD $C$ respecting $T$, one can construct in time $O(|C|)$ a non-deterministic TDD $C'$ respecting $T \setminus Y$ computing $\exists Y. f_C$. Moreover, $C'$ has smaller width and size than $C$.
\end{theorem}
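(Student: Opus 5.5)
The plan is to first forget the variables of $Y$ syntactically, as in DNNF, and then remove the corresponding leaves from the vtree one variable at a time. Since determinism is not required of the output, neither step needs to preserve it.

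\textbf{Step 1 (relabel the leaves).} Let $C'_0$ be obtained from $C$ by relabelling, for every $y \in Y$, each input labeled $y$ or $\neg y$ by $1$; inputs labeled $0$ or $1$ are left unchanged. This yields an nTDD still respecting $T$. We claim $f_{C'_0}$ is the function over $X$ whose models are $\{\tau \mid \exists \sigma \in 2^Y,\ \restr{\tau}{X\setminus Y} \times \sigma \models f_C\}$, i.e.\ $\exists Y. f_C$ extended trivially to $Y$. We argue via certificates (\cref{prop:prop:tdd-pt-models}).
\begin{itemize}
\item If $\restr{\tau}{X\setminus Y}\times\sigma$ has a certificate in $C$, the same choice of nodes is a certificate for $\tau$ in $C'_0$. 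Leaves over $Y$ are now labeled $1$, and the other leaves are unchanged.
\item Conversely, let $\calP$ be a certificate for $\tau$ in $C'_0$. For each $y \in Y$, define $\sigma(y)$ to satisfy the original label of $g^\calP_y$ in $C$, choosing arbitrarily if that label was $1$. The original label cannot be $0$, since relabelling never produces a $1$ from a $0$. Then $\calP$ is a certificate for $\restr{\tau}{X\setminus Y}\times\sigma$ in $C$.
\end{itemize}

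\textbf{Step 2 (remove the leaves).} Now $C'_0$ does not syntactically depend on any $y \in Y$. We remove the leaves of $Y$ one by one, in the manner of \cref{thm:prop:remove-cst}, but for nTDD. Let $l$ be the leaf for $y$, $t$ its parent, and $u$ its sibling. If every $l$-node is labeled $0$, the circuit computes $0$ and we output a trivial nTDD. Otherwise, for each $t$-node $h$, let $U_h$ be the set of $u$-nodes $g'$ such that $(g',g)\in E(h)$ for some $1$-labeled $l$-node $g$.
\begin{itemize}
\item If $u$ is internal, replace $h$ by a new $u$-node $g_h$ with $E(g_h)=\bigcup_{v\in U_h}E(v)$.
\item If $u$ is a leaf, $g_h$ must instead be a single literal node representing $\bigvee_{v \in U_h} f_v$. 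This is always a literal or a constant, so it can be computed directly.
\end{itemize}
Then redirect the parent pairs $(h,b)$ to $(g_h,b)$, or set $g_h$ as the output if $t$ is the root. Correctness is the identity $f_h = \bigvee_{v\in U_h} f_v$ over $X_t\setminus\{y\}$. Nothing needs checking for determinism.

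\textbf{Complexity and the main obstacle.} The one place to be careful is keeping width, size and total time bounded. At each removal, the $t$-nodes become the new $u$-layer, so the width there is $|N_t|$, which is at most the original width. A single pass through $T$ avoids repeated relinking.

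The real difficulty is that the new edges $E(g_h)$ could duplicate edges of $u$-nodes shared among several $h$, which threatens the size bound. To keep the size within $|C|$, I would not copy edges. Instead, I would contract: whenever a node $t$ of $T$ has a child that is a forgotten leaf, I would rewire the edges of the $t$-nodes directly to the inputs of the sibling. Doing this bottom-up, each original edge is charged once, which gives time $O(|C|)$ and size at most $|C|$. If this charging fails, I would settle for the weaker bound obtained by removing leaves one at a time, still in total linear time when processed bottom-up in $T$.
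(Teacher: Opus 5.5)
Your two steps are the paper's construction: relabel the $Y$-literals to $1$, then eliminate the constant leaves as in \cref{thm:prop:remove-cst}. Your correctness argument is sound, and more careful than the paper's, since you allow several $1$-labeled $l$-nodes and handle a sibling $u$ that is a leaf. The width bound is also fine.

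\textbf{The gap.} What is not established is that $C'$ is no larger than $C$ and built in time $O(|C|)$. Your ``contraction'' is not a different operation. Rewiring a $t$-node $h$ to the inputs of its siblings yields exactly the pair set $\bigcup_{v\in U_h}E(v)$. Since size counts pairs, sharing pointers does not help.

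Charging each original pair once needs every $u$-node $v$ to lie in at most one set $U_h$. In \cref{thm:prop:remove-cst} this disjointness comes from determinism, because there is a unique $1$-labeled $l$-node. After relabeling it fails: $(g_y,v)\in E(h)$ and $(g_{\neg y},v)\in E(h')$ with $h\neq h'$ is perfectly legal in a TDD. For a concrete counterexample, take:
\begin{itemize}
\item a root with children $t$ and a leaf $z$;
\item $t$ with children the leaf $y$ and a node $u$ whose two children are leaves $a,b$;
\item $v$ the single $u$-node, with all four literal pairs over $a,b$ as inputs;
\item $E(h_1)=\{(g_y,v)\}$, $E(h_2)=\{(g_{\neg y},v)\}$, and output inputs $(h_1,g_z),(h_2,g_{\neg z})$.
\end{itemize}
This is a TDD with $|C|=8$. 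Forgetting $y$ by your construction gives two $u$-nodes, each carrying the four pairs of $v$, so size $10$.

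The duplication also compounds when forgotten leaves are stacked along a path. Suppose $k$ such leaves sit above a $u$-layer with $L$ pairs, and each level is wired as above (every node below is paired with both $g_{y_i}$ and $g_{\neg y_i}$ in different parents). Then your bottom-up fallback recomputes unions of the same $\Theta(L)$ pair sets at every level. That costs $\Theta(kL)$ while $|C|=O(k+L)$, so the fallback gives neither the size bound nor linear time.

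\textbf{Comparison with the paper.} The paper's proof does not supply the missing idea either. It simply asserts ``no increase in width nor size'' by appeal to \cref{thm:prop:remove-cst}, whose disjointness argument relies on exactly the determinism that the relabeling destroys. So you have found a real weak point, but your proposal does not close it.

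As written, your argument establishes correctness, the width bound, and only a polynomial size bound: every new pair set lies in the product of the two child layers, so $|C'|\le 2|X|\,w^3$ for width $w$, in polynomial time. To obtain the statement as written you need an additional idea that avoids copying $E(v)$ when $v$ lies in two different sets $U_h$. Otherwise the size and time claims have to be weakened.
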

\begin{proof}
  The same transformation as DNNF works: we construct $C'$ by relabeling in $C$ every input gate labeled by either $y$ or $\neg y$ for some $y \in Y$ with constant $1$ and removing the introduced constants as explained in \cref{thm:prop:remove-cst}.  It is easy to see by induction that $C'$ computes $\exists Y.f_C$
  with no increase in width nor size.
\end{proof}

The construction from \cref{prop:prop:TDD-forget} does not preserve determinism however because we could have more than one $1$-input gate in the inputs corresponding to $x$. 

\end{toappendix}

\section{Minimization and canonicity}
\label{sec:prop:minimization}

One of the most interesting features of TDD is that they can be minimized in polynomial time and that the minimal circuit is unique up to isomorphism, a property called \emph{canonicity}. The minimization algorithm is similar to the minimization for OBDD: we identify in the circuit pairs of gates that we call twins and which can be merged without changing the function computed by the circuits. We repeat this merging procedure until no twins can be found anymore. The circuit we obtain is then shown to be the minimal TDD computing the same Boolean function. 

We fix a vtree $T$ over variables $X$ and a TDD $C$ respecting $T$. Let $t_1$ be a node of $T$ that is not the root of $T$, let $t$ be its parent and $t_2$ its sibling. For a $t_1$-node $g_1$ and a $t$-node $g$, we define the \emph{siblings of $g_1$ with respect to $g$}, denoted by $\sib[g]$ to be $\{g_2 \mid (g_1,g_2) \in E(g)\}$, i.e., the set of $t_2$-nodes that appear together with $g_1$ in the inputs of $g$. 

We say that two $t_1$-nodes $g_1,g_1'$ are \emph{twins} if for every $t$-node $g$, we have $\sib[g] = \sib[g][g_1']$. For twins $g_1$ and $g_1'$, we define the \emph{twin contraction of $g_1,g_1'$} to be the operation where we replace $g_1,g_1'$ in $C$ by a new gate $v_{g_1,g_1'}$ such that $E(v_{g_1,g_1'}) = E(g_1) \cup E(g_1')$. Moreover, for any $t$-node $g$, we replace any pair of the form $(g_1,g_2)$ in $E(g)$ by $(v_{g_1,g_1'},g_2)$ and remove every pair of the form $(g_1',g_2)$. Observe that since $g_1$ and $g_1'$ are twins, $(g_1,g_2) \in E(g)$ if and only if $(g_1',g_2) \in E(g)$ by definition. Intuitively, two nodes are twins if the way they are used by the rest of the circuit is completely the same, hence contracting them does not change the function computed by the circuit.

\begin{lemma}[$\star$] \label{lem:contraction}
  After contracting a pair of twins, the function computed by a circuit is not changed. Moreover, the circuit is still a TDD.
\end{lemma}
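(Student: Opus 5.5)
We prove both claims by a local analysis around the node $t_1$ where the contraction happens: only $t_1$-nodes and $t$-nodes are modified, where $t$ is the parent of $t_1$ and $t_2$ its sibling. Every $u$-node for $u$ outside $\{t_1,t\}$ keeps its input set syntactically. Hence it suffices to show two things. First, every $t$-node computes the same function before and after the contraction. Second, the determinism conditions at $t_1$ and at $t$ still hold. Preservation of all functions above $t$ then follows by a straightforward bottom-up induction on $T$. The case where $t_1$ is a leaf is handled separately (see the last paragraph).

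\textbf{Semantics.} Suppose first that $t_1$ is internal. The new gate satisfies $f_{v_{g_1,g_1'}} = f_{g_1} \vee f_{g_1'}$ directly from $E(v_{g_1,g_1'}) = E(g_1)\cup E(g_1')$. Fix a $t$-node $g$ and write $S=\sib[g]=\sib[g][g_1']$, which is the same set by the twin property. Before contraction, the pairs of $E(g)$ involving $g_1$ or $g_1'$ contribute
\[(f_{g_1}\wedge \textstyle\bigvee_{g_2\in S} f_{g_2}) \vee (f_{g_1'}\wedge \bigvee_{g_2\in S} f_{g_2}) = (f_{g_1}\vee f_{g_1'})\wedge \bigvee_{g_2\in S} f_{g_2}.\]
After contraction, these pairs are replaced by $\{(v_{g_1,g_1'},g_2)\mid g_2\in S\}$, whose contribution is the same expression. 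All other pairs of $E(g)$ are untouched, so $f_g$ is unchanged. In the case where $t_2$ is the contracted side, i.e.\ the pairs are of the form $(g_2,g_1)$, the argument is symmetric. If $t$ is the root, the function of $\out$ is therefore unchanged. If one of $g_1,g_1'$ was the output, this cannot happen since outputs live at the root and $t_1$ is not the root.

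\textbf{Determinism.} At $t_1$, the inputs of distinct $t_1$-nodes stay pairwise disjoint. Indeed, $E(v_{g_1,g_1'})=E(g_1)\cup E(g_1')$, and both $E(g_1)$ and $E(g_1')$ were disjoint from the input sets of every other $t_1$-node. At $t$, every pair $(v_{g_1,g_1'},g_2)$ belongs to $E(g)$ exactly when $(g_1,g_2)$ belonged to $E(g)$ before contraction. By determinism, $(g_1,g_2)$ was in at most one such $E(g)$, so the new pair is too. Pairs not involving $g_1$ or $g_1'$ are unchanged.

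\textbf{Main obstacle: the leaf case.} When $t_1$ is a leaf, $E$ is empty, so ``$E(g_1)\cup E(g_1')$'' does not define a label. I would argue that twin leaves can still be merged as follows. If one of $g_1,g_1'$ is labeled $0$ and the other $\ell$, label the merged node $\ell$. If they are labeled $x$ and $\neg x$, label the merged node $1$. In both cases the merged function is $f_{g_1}\vee f_{g_1'}$, by leaf determinism (disjoint models). The resulting leaf set must still satisfy the syntactic determinism condition; the delicate case is creating a $1$-node. By leaf determinism, $N_{t_1}$ then contained both the $x$- and the $\neg x$-labeled nodes and no $1$-node, so after merging all remaining nodes are labeled $0$, and the condition holds. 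This case check is where I expect the most care to be needed. The rest is the semantic computation above.
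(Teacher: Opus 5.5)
Your proof is correct and follows the paper's argument. The merged node computes $f_{g_1}\vee f_{g_1'}$, the twin property lets you factor this out of each $E(g)$ by distributivity, and no duplicate pairs are created, so determinism survives. Your explicit checks of determinism at $t_1$ and of the leaf case fill in details that the paper's short proof leaves implicit. The leaf case matters because there $E(g_1)\cup E(g_1')$ does not define a label; reading your $\ell$ as an arbitrary label also covers the $0/0$ and $0/1$ pairs.
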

\begin{toappendix}
\begin{proof}[Proof of \cref{lem:contraction}.]
Let $C'=(N',E')$ be the circuit obtained after contraction.  It is clear by definition that $v_{g_1,g_1'}$ in $C'$ computes $f_{g_1} \vee f_{g_1'}$. Since the twin operation does not change the number of $t$-nodes (only their input), there is a natural one-to-one correspondence between the $t$-nodes of $C$ and $C'$. We claim that they compute the same function. Indeed, every pair $(g_1,g_2)$ in $E(g)$ appears together with $(g_1',g_2) \in E(g)$ since $g_1$ and $g_1'$ are twins. This part of the function computes $(f_{g_1} \wedge f_{g_2}) \vee (f_{g'_1} \wedge f_{g_2})$ which is equal to $(f_{g_1} \vee f_{g_1'}) \wedge f_{g_2}$, that is, the function computed by $v_{g_1,g_1'}$. Hence, replacing pairs $(g_1,g_2)$ and $(g_1',g_2)$ in $E(g)$ with $(v_{g_1,g_1'},g_2)$ does not change the function $g$ computes. Determinism is preserved since we did not add duplicates.
\end{proof}
\end{toappendix}
We now define $m(C)$ to be the circuit obtained by the following transformation: first, if $r$ is the root of $T$, we remove every $r$-node but $\out$. We also remove every node that is not connected to the output of the circuit by a path. This does not change the function computed by $C$ since these gates are not used in any certificate. We then apply twin contraction to $C$ until no twins exist anymore. This process terminates since the number of nodes in $C$ decreases by $1$ with each contraction. Moreover, identifying and contracting twins can be done in polynomial time, hence we can construct $m(C)$ in polynomial time. We now prove that $m(C)$ is minimal and canonical by semantically characterizing the $t$-nodes of $m(C)$. 

We will describe the gates of $m(C)$ from the subfunctions they compute, which is similar to the description of canonical OBDD~\cite{sieling1993nc}.  A \emph{subfunction  of $f$ induced by $Y$},
or $Y$-subfunction for short, is a Boolean function over $X \setminus Y$ of the form $f[\tau]$ for some $\tau \in 2^{Y}$. Observe that $f$ has at most $2^{|Y|} \leq 2^{|X|}$ distinct $Y$-subfunctions, but it could have fewer. Indeed, two distinct assignments $\tau_1,\tau_2 \in 2^{Y}$ could be such that $f[\tau_1]$ and $f[\tau_2]$ have the same models over $2^{X \setminus Y}$, hence defining the same subfunction. A subfunction is said to be \emph{non-trivial} if it has at least one model. Given a vtree $T$ and a node $t$ of $T$, we will mostly be interested in the $X_t$-subfunctions of $f$. For example, consider the Boolean function $\mathit{PARITY}_X$ whose models are the assignments of $X$ having an even number of variables set to one and let $Y \subseteq X$. Then $\mathit{PARITY}_X$ has two $Y$-subfunctions: indeed, if $\tau \in 2^Y$ sets an even number of variables to $1$, then $\mathit{PARITY}_{X}[\tau] = \mathit{PARITY}_{X \setminus Y}$. Otherwise $\mathit{PARITY}_{X}[\tau] = \neg \mathit{PARITY}_{X \setminus Y}$. 


Now, we observe that a $t$-node in a TDD $C$ naturally defines an $X_t$-subfunction. Indeed, if $\tau_1,\tau_2$ are two models of a $t$-node $g$ and $\tau$ is a model of $C$ such that $\tau|_{\var{g}} = \tau_1$, then we can change the value of $\tau$ over $\var{g}$ to $\tau_2$, and it remains a model of $C$ because we only change the part of the certificate of $\tau$ below $g$. Hence, we have:

\begin{lemma}[$\star$]
  \label{lem:prop:gate-subfunction}
For a vtree node $t$ of $T$ and $g$ a $t$-node of $C$, let $\tau_1,\tau_2$ be two models of $g$. We have that $f_C[\tau_1]$ and $f_C[\tau_2]$ define the same $X_t$-subfunction, denoted by $\sub_g$. Moreover, for every model $\tau$ of $C$ such that $g$ is in the certificate of $\tau$, $\tau|_{X \setminus X_t}$ is a model of $\sub_g$.
\end{lemma}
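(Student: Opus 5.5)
The plan is a certificate-swapping argument built on \cref{prop:prop:tdd-pt-models} and \cref{thm:prop:tdd-double-det}. The key observation is that a certificate splits into two parts. The \emph{lower part} consists of the nodes chosen at vtree nodes in the subtree rooted at $t$. The \emph{upper part} consists of the nodes chosen at all other vtree nodes. The two parts interact only through the node chosen at $t$ itself. So if two certificates choose the same $t$-node, we can splice the upper part of one with the lower part of the other and still obtain a certificate.

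First, fix models $\tau_1,\tau_2 \in 2^{X_t}$ of $g$ and any $\sigma \in 2^{X\setminus X_t}$. I want to show that $\sigma$ is a model of $f_C[\tau_1]$ if and only if it is a model of $f_C[\tau_2]$. Suppose $\sigma \times \tau_1$ is a model of $C$, and let $\calP$ be a certificate for it, given by \cref{prop:prop:tdd-pt-models}.

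\begin{itemize}
\item By \cref{prop:prop:tdd-pt-models}, $\tau_1 = (\sigma\times\tau_1)|_{X_t}$ satisfies $g^\calP_t$.
\item $\tau_1$ also satisfies $g$, and distinct $t$-nodes have disjoint models by \cref{thm:prop:tdd-double-det}. Hence $g^\calP_t = g$.
\item Since $\tau_2$ is a model of $g$, the converse direction of \cref{prop:prop:tdd-pt-models}, applied to the sub-TDD rooted at $g$ over the subtree $T_t$, gives a certificate $\calP'$ for $\tau_2$ in that sub-TDD with $g^{\calP'}_t = g$.
\end{itemize}

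Now define $\calP''$ by taking the nodes of $\calP'$ at vtree nodes in $T_t$ and the nodes of $\calP$ everywhere else. I then check the certificate conditions.

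\begin{itemize}
\item At leaves in $T_t$, the condition holds with respect to $\tau_2$, because it holds in $\calP'$.
\item At leaves outside $T_t$, it holds with respect to $\sigma$, because it holds in $\calP$.
\item At internal nodes strictly inside $T_t$, the edge conditions are inherited from $\calP'$; at internal nodes outside $T_t$, from $\calP$.
\item The only edge condition that mixes the two parts is at the parent $p$ of $t$. It requires the pair formed by $g$ and the node chosen at the sibling of $t$ to lie in $E(g^\calP_p)$. This holds because $g^{\calP}_t = g = g^{\calP'}_t$, so the pair is the same one used in $\calP$.
\end{itemize}

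So $\calP''$ is a certificate for $\sigma\times\tau_2$, which is therefore a model of $C$. The argument is symmetric in $\tau_1$ and $\tau_2$, so $f_C[\tau_1] = f_C[\tau_2]$. This common subfunction is what we call $\sub_g$. It is well defined when $g$ has at least one model; if $g$ has no models the first statement is vacuous.

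For the second statement, let $\tau$ be a model of $C$ whose certificate contains $g$, so $g^{\calP_C(\tau)}_t = g$. By \cref{prop:prop:tdd-pt-models}, $\tau|_{X_t}$ satisfies $g$, so it is one of the assignments defining $\sub_g$. Since $\tau = \tau|_{X\setminus X_t}\times\tau|_{X_t}$ is a model of $C$, the definition of conditioning shows that $\tau|_{X\setminus X_t}$ is a model of $f_C[\tau|_{X_t}] = \sub_g$.

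I expect the main obstacle to be the splicing step. It needs care in checking that the certificate conditions are local: every edge constraint involves only a parent vtree node and its two children. Once that is clear, the only constraint crossing the boundary is the one at the parent of $t$, which is handled by the shared node $g$. Determinism (\cref{thm:prop:tdd-double-det}) is used only to force $g^\calP_t = g$. That step could be avoided by choosing $\calP$ through $g$ in the first place, but the argument above needs it as written.
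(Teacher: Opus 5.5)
Your proof is correct and follows the paper's argument. Determinism forces the certificate of $\sigma\times\tau_1$ through $g$, you splice in a certificate of $\tau_2$ below $t$, and the second claim follows from the definition of conditioning. The only inaccuracy is your closing aside that determinism could be avoided by choosing $\calP$ through $g$: without determinism, $\sigma\times\tau_1$ need not have any certificate through $g$, and the lemma in fact fails for nTDDs (e.g.\ $f=x\vee y$ written as $E(\out)=\{(a,c),(b,d)\}$ with $a,b,c,d$ labeled $1,x,y,1$, where $a$ has both values of $x$ as models but $f[x/0]\neq f[x/1]$).
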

\begin{toappendix}
\begin{proof}[Proof of \cref{lem:prop:gate-subfunction}.]
Let $\sigma$ be a model of $f_C[\tau_1]$. In particular, $\sigma \times \tau_1$ is a model of $C$. The $t$-node in the unique certificate of $\sigma \times \tau_1$ is $g$, since $\tau_1$ is a model of $g$. Now, we can swap the part of the certificate below $t$ with the certificate of $\tau_2$ below $t$. This way, we get a certificate for $\sigma \times \tau_2$, hence $\sigma \times \tau_2$ is a model of $f_C$, and then $\sigma$ is a model of $f_C[\tau_2]$. Symmetrically, we have that every model $\sigma$ of $f_C[\tau_2]$ is a model of $f_C[\tau_1]$. Hence $f_C[\tau_1]$ and $f_C[\tau_2]$ have the same models and define the same $X_t$-subfunction.  Finally, observe that for every model $\tau$ of $C$ whose certificate contains $g$, we have that $\tau|_{X \setminus X_t}$ is a model of $f_C[\tau|_{X_t}]$ and by what precedes, $f_C[\tau|_{X_t}]$ is $\sub_g$. 
\end{proof}
\end{toappendix}

By \cref{lem:prop:gate-subfunction}, we can map  every $t$-node $g$ of $C$ to an $X_t$-subfunction $\sub_g$ of $f_C$ defined as $f_C[\tau]$ for some arbitrary model $\tau$ of $g$. This directly gives a lower bound on the number of $t$-nodes in a TDD  representing a Boolean function $f$: it must be at least the number of non-trivial $X_t$-subfunctions of $f_C$. Indeed, if $\tau_1,\tau_2 \in 2^{X_t}$ are such that $f_C[\tau_1]$ and $f_C[\tau_2]$ define two distinct $X_t$-subfunctions, then they cannot be models of the same $t$-node. Now, if  $f_C[\tau_1]$ is non-trivial, then there must be a $t$-node $g_1$ such that $\tau_1$ is a model of $g_1$, since there exists at least one $\sigma \in 2^{X \setminus X_t}$ such that $\sigma \times \tau_1$ is a model of $C$. Similarly, if $f_C[\tau_2]$ is non-trivial, there is a $t$-node $g_2$ such that $\tau_2$ is a model of $g_2$. Since $\sub_{g_1} \neq \sub_{g_2}$, we have at least one gate per non-trivial $X_t$-subfunction of $f_C$.

\begin{theorem}
  \label{thm:prop:tdd-lowerbound} Given a Boolean function $f$ over variables $X$ and a vtree $T$ over $X$, the smallest TDD computing $f$ has at least $S_t$ $t$-nodes for every node $t$ of $T$ where $S_t$ is the number of non-trivial $X_t$-subfunctions of $f$.
\end{theorem}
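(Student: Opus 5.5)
The plan is to show that any TDD $C$ computing $f$ and respecting $T$ contains, for each node $t$ of $T$, an injective assignment from the non-trivial $X_t$-subfunctions of $f$ into $N_t$. Fix such a $C$ and a node $t$. Let $\tau_1,\dots,\tau_{S_t} \in 2^{X_t}$ be assignments such that the subfunctions $f[\tau_i]$ are pairwise distinct and each is non-trivial.

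\textbf{Step 1: each chosen assignment picks out a $t$-node.} Fix $i$. Because $f[\tau_i]$ is non-trivial, it has a model $\sigma_i \in 2^{X\setminus X_t}$, so $\sigma_i \times \tau_i$ is a model of $C$. By \cref{thm:prop:tdd-double-det}, this model has a unique certificate $\calP$. By \cref{prop:prop:tdd-pt-models}, the node $g_i := g^\calP_t$ is a $t$-node satisfied by $(\sigma_i\times\tau_i)|_{X_t} = \tau_i$. Again by \cref{thm:prop:tdd-double-det}, models of distinct $t$-nodes are disjoint, so $g_i$ is in fact the unique $t$-node having $\tau_i$ as a model. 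The map $i \mapsto g_i$ is therefore well defined.

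\textbf{Step 2: the map is injective.} Suppose $g_i = g_j$ for some $i \neq j$. Then $\tau_i$ and $\tau_j$ are both models of this one $t$-node. By \cref{lem:prop:gate-subfunction}, it follows that $f_C[\tau_i] = f_C[\tau_j] = \sub_{g_i}$. Since $f_C = f$, this contradicts the choice of the $\tau_i$ as inducing pairwise distinct subfunctions. Hence $|N_t| \geq S_t$. Since $t$ was arbitrary, this holds for every node $t$ of $T$, in particular for the smallest TDD computing $f$.

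The only delicate point is Step 1. There we need non-triviality to guarantee that $\tau_i$ is covered by some $t$-node at all, and this is why the statement counts only non-trivial subfunctions. A trivial subfunction may correspond to no $t$-node, or to $t$-nodes that are disconnected from the output. Everything else follows directly from \cref{lem:prop:gate-subfunction} and the determinism of \cref{thm:prop:tdd-double-det}.
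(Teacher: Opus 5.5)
Your proposal is correct and is essentially the paper's own argument: non-triviality of $f[\tau_i]$ gives a model $\sigma_i\times\tau_i$ of $C$, hence a $t$-node satisfied by $\tau_i$, and \cref{lem:prop:gate-subfunction} prevents two assignments inducing distinct subfunctions from being models of the same $t$-node. The only difference is that you use \cref{thm:prop:tdd-double-det} to make the choice of $g_i$ unique; this is harmless but not needed, since picking any $t$-node satisfied by $\tau_i$ already gives a well-defined injective map.
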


The following proves that $m(C)$ matches the lower bound from \cref{thm:prop:tdd-lowerbound}. The proof boils down to showing that if there are more than $S_t$ number of $t$-nodes, then by the pigeonhole principle, two $t$-nodes must be mapped to the same subfunction and thus can be merged. If $t$ is the shallowest node where it happens, we can show that such $t$-nodes must be twins.

\begin{theorem}[$\star$]
  \label{thm:prop:tdd-min} Let $T$ be a vtree over $X$ and $C$ a TDD. Then $m(C)$ has exactly $S_t$ $t$-nodes, where $S_t$ is the number of non-trivial $X_t$-subfunctions of $f_C$. 
\end{theorem}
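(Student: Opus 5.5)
The plan is to combine the lower bound of \cref{thm:prop:tdd-lowerbound} with an injectivity argument. Every $t$-node $g$ of $m(C)$ is mapped to $\sub_g$ as in \cref{lem:prop:gate-subfunction}. I will show two things: this map lands in the \emph{non-trivial} $X_t$-subfunctions, and it is injective on $t$-nodes. Together with the lower bound $S_t$, this gives exactly $S_t$ $t$-nodes.

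\textbf{Step 1: cleaning.} I first make sure every $t$-node of $m(C)$ has at least one model and lies on the certificate of some model of $f_C$, so that $\sub_g$ is defined and non-trivial. The pruning in the definition of $m(C)$ (keeping only $\out$ at the root and only nodes reachable from the output) almost gives this. I would combine it with the elimination of $0$-inputs and empty-input gates from \cref{thm:prop:zero-elim}, applied before contracting, and propagate this top-down. If $g$ is reachable and has a model, pick a pair $(g,s)\in E(h)$ with $h$ useful; then a model of $g$, a model of $s$ and a completion outside $X_h$ give a model of $C$ whose certificate passes through $g$. Twin contraction never makes a useful node useless, since contracted nodes compute $f_{g_1}\vee f_{g_1'}$. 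I expect this bookkeeping, showing that $m(C)$ contains no "dead" nodes that would be counted but carry no subfunction, to be the main technical nuisance. If necessary I would make it explicit as part of the pruning step.

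\textbf{Step 2: injectivity.} Suppose for contradiction that some node $t$ of $T$ has two distinct $t$-nodes $g,g'$ with $\sub_g=\sub_{g'}$. Choose such a $t$ closest to the root. Then $t$ is not the root $r$, because only $\out$ survives there. Let $p$ be the parent of $t$ and $s$ its sibling; by the choice of $t$, distinct $p$-nodes have distinct subfunctions.

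I claim $\sib[h][g]=\sib[h][g']$ for every $p$-node $h$. Take $(g,s')\in E(h)$, a model $\tau$ of $g$, a model $\sigma$ of $s'$, and (by usefulness of $h$) some $\rho$ on $X\setminus X_p$ with $\rho\times\tau\times\sigma\models f_C$. For a model $\tau'$ of $g'$, the equality $f_C[\tau]=f_C[\tau']$ gives $\rho\times\tau'\times\sigma\models f_C$. By \cref{thm:prop:tdd-double-det}, its unique certificate uses $g'$ at $t$ and $s'$ at $s$, and some $p$-node $h'$ with $(g',s')\in E(h')$. Moreover $\sub_{h'}=f_C[\tau'\times\sigma]=f_C[\tau]\,[\sigma]=f_C[\tau\times\sigma]=\sub_h$, so by minimality of depth $h'=h$. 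The argument is symmetric in $g$ and $g'$, so $g$ and $g'$ are twins. This contradicts the fact that $m(C)$ contains no twins.

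\textbf{Step 3: conclusion.} Injectivity gives at most $S_t$ $t$-nodes, and \cref{thm:prop:tdd-lowerbound}, applied to $m(C)$ (which computes $f_C$ by \cref{lem:contraction}), gives at least $S_t$. Hence $m(C)$ has exactly $S_t$ $t$-nodes.
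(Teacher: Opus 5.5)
Your proof is correct and follows the same route as the paper. You pick a collision $\sub_g=\sub_{g'}$ at a vtree node closest to the root, use the one-node-per-subfunction property at the parent level to force $g$ and $g'$ to be twins, and conclude with \cref{thm:prop:tdd-lowerbound}. Your Step 1 is a genuine improvement rather than a detour: the paper only asserts non-triviality ``because we removed dangling nodes'', yet removing unreachable nodes alone leaves reachable nodes that have no models or are paired only with such nodes, so folding \cref{thm:prop:zero-elim} into the pruning, as you propose, is what actually makes the count exact (for satisfiable $f_C$).
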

\begin{toappendix}
\begin{proof}[Proof of \cref{thm:prop:tdd-min}]
Assume $m(C)$ does not match the lower bound. It follows by the pigeonhole principle that there is a node $t$ of $T$ and two distinct $t$-nodes $g,g'$ such that $\sub_g = \sub_{g'}$ and $\sub_g$ is not trivial.  We can assume that $t$ is not the root $r$ of $T$ since $m(C)$ has only one $r$-node. Moreover, we can assume that for every ancestor $u$ of $t$ in $T$, there is exactly one $u$-node per $X_u$-subfunction. Indeed, if this is not the case, then we could have picked $u$ instead of $t$.

To summarize, there is a node $t_1$, with parent $t$ and sibling $t_2$, and two $t_1$-nodes $g,g'$ such that $\sub_g=\sub_{g'}$. Moreover, there is exactly $S_t$ $t$-nodes, one for each $X_t$-subfunction of $f_C$. We claim that in this case, $g$ and $g'$ are twins, contradicting the fact that $m(C)$ does not contain twins. Indeed, let $h$ be a $t$-node such that $E(h)$ contains a pair $(g,\gamma)$ for some $t_2$-node $\gamma$. We show that $E(h)$ also contains $(g',\gamma)$. Consider a model $\tau$ of $g$, a model $\tau'$ of $g'$ and $\sigma$ a model of $\gamma$. By definition, $\tau \times \sigma$ is a model of $h$. Let $\alpha$ be a model of $\sub_h$, that is, $\beta := \alpha \times \tau \times \sigma$ is a model of $C$. Now it follows that $\alpha \times \sigma$ is a model of $\sub_g$, hence, it is also a model of $\sub_{g'}$. In other words, $\beta' := \alpha \times \sigma \times \tau'$ is a model of $C$. Now the certificate of $\beta'$ must contain $h$, because $\alpha$ is a model of $\sub_h$ and we assumed that $h$ is the only $t$-node for this $X_t$-subfunction. Hence, $(g',\gamma) \in E(h)$. Since the argument is symmetric in $g$ and $g'$, we immediately get $\sib[h][g]=\sib[h][g']$ for any $t$-node $h$. In other words, $g$ and $g'$ are twins which contradicts the construction of $C$.

Hence, for every node $t$ of $T$ and $g,g'$ distinct $t$-nodes of $m(C)$, $\sub_g \neq \sub_{g'}$. The $X_t$-subfunction defined by $g$ is non trivial because we removed dangling nodes. Hence $m(C)$ has exactly $S_t$ node for every node $t$ of $T$. 
\end{proof}
\end{toappendix}

\cref{thm:prop:tdd-lowerbound,thm:prop:tdd-min} together prove that $m(C)$ has minimal size. Moreover, this minimal circuit is unique because each gate is uniquely defined by the $X_t$-subfunction it computes. TDD can therefore be minimized in polynomial time to a canonical minimal circuit.  The time needed to compute $m(C)$ is polynomial in the width $k$ of $C$ and linear in the number of variables. Indeed, removing non-accessible nodes can be done in time linear in $|C| \leq k^2|X|$. Contracting twins in a $t$-node can be done in time polynomial in the number of $t$-nodes, that is, in polynomial time in $k$, and we have to do it for every node $t$ of $T$ and there are at most $2|X|$ such nodes. The exact complexity of the minimization depends on the data structures used to represent $t$-nodes and their inputs. We leave fine-grained analysis for practical implementations.
\begin{theorem}
  \label{thm:prop:tdd-minization-complexity} Given a TDD $C$ of width $k$ over variables $X$, we can compute a minimal canonical representation $m(C)$ of $C$ in time $\poly(k) \cdot |X|$. 
\end{theorem}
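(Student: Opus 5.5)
We would prove \cref{thm:prop:tdd-minization-complexity} by analysing the cost of the three phases that define $m(C)$. Correctness and canonicity are already settled by \cref{lem:contraction}, \cref{thm:prop:tdd-lowerbound} and \cref{thm:prop:tdd-min}, so only the running time needs an argument. Throughout, each $E(g)$ is stored as a list of pairs, and $|C| \leq 2|X|\cdot k^2$.

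\textbf{Pruning.} We first delete every $r$-node except $\out$, where $r$ is the root of $T$. We then keep only the nodes reachable from $\out$, using a DFS over the input pairs. This takes time $O(|C|) = O(k^2|X|)$, and it can only decrease the width.

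\textbf{Twin contraction, top-down.} We process the nodes of $T$ in top-down order. The point is that contracting $t_1$-nodes changes only $E(g)$ for the parent $t$-nodes $g$, by replacing or deleting pairs. It never affects the sibling sets of $t$-nodes with respect to their own parents. So once level $t$ is processed, later contractions below $t$ never create new twins at level $t$. Moreover, contracting $g_1,g_1'$ sets $E(v) = E(g_1)\cup E(g_1')$, and this union is disjoint by determinism. Hence it does not change which pairs of $t_1$-children are used by $t_1$-nodes. Twins at the child levels are then computed from the updated parent level, so one pass per vtree node suffices. We also need that no new twins appear at already-processed levels after the pass; this follows from the same locality observation.

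For a fixed non-root node $t_1$ with parent $t$ and sibling $t_2$, we compute for every $t_1$-node $g_1$ its signature $\{(g,g_2) \mid (g_1,g_2)\in E(g)\}$. This is a subset of $N_t\times N_{t_2}$ of size at most $k^2$, and all signatures together are obtained by one scan over $\bigcup_{g\in N_t}E(g)$, in time $O(k^2)$. By definition, two $t_1$-nodes are twins if and only if their signatures coincide. We sort the signatures (or compare them pairwise) and group equal ones in time $\poly(k)$. We then merge each group into a single node by taking unions of input lists and rewriting the pairs in the parent's inputs. This costs $O(k^2)$ per group and $\poly(k)$ per vtree node.

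\textbf{Total.} Since $T$ has at most $2|X|$ nodes and the width never increases, the total running time is $O(k^2|X|) + 2|X|\cdot\poly(k) = \poly(k)\cdot|X|$.

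The main obstacle is justifying that this single top-down pass produces a circuit with no twins at all, so that \cref{thm:prop:tdd-min} applies to it. An alternative that avoids the one-pass argument is to iterate contraction until no twins remain. This still terminates after at most $2|X|\cdot k$ contractions, one per removed node. However, a naive implementation then costs $\poly(k)\cdot|X|^2$ rather than linear in $|X|$, so the locality argument above is the crux for linearity in $|X|$.
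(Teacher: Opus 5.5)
Your proposal is correct and follows the paper's own route: linear-time pruning of $C$, then $\poly(k)$ work for twin contraction at each of the at most $2|X|$ vtree nodes. Your top-down order and signature grouping make explicit the ``one pass per vtree node suffices'' step that the paper leaves implicit. One case is not literally covered by your ``same locality observation'': the sibling level $t_2$, whose sets $\{x \mid (x,g_2)\in E(g)\}$ \emph{are} rewritten when $t_1$-twins are merged. This is harmless, because twins $g_1,g_1'$ always occur together in such sets; replacing $\{g_1,g_1'\}$ by the merged node is therefore injective on them, and neither creates nor destroys $t_2$-twins.
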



\nocite{bojanczyk} 

{\bf Learnability. } An interesting application of canonicity is that  it allows us to design efficient $L^*$-style learning for TDD, as for finite automata and OBDD~\cite{angluin1987learning}. This result is framed in the {\em Minimally Adequate Teacher} model: there is a hidden Boolean function $f:2^X\to\{0,1\}$ which the learning agent can only access via two types of queries: {\em membership queries}, in which it tests some assignment on $X$, and an oracle answers whether it is a model or not; and {\em equivalence queries}, in which the agent tests a TDD, and an oracle answers whether it represents $f$, and in the negative case provides a counterexample. The goal of the process is to construct a minimal size TDD for $f$ with a low number of queries.

\begin{proposition}[$\star$]  \label{prop:learn}
Fix a vtree $T$ on $X$.
	There is an algorithm that learns the canonical TDD $C$ respecting $T$ in polynomial time in $|C|$ and with a polynomial number of oracle calls to membership and equivalence queries.
\end{proposition}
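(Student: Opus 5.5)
The plan is to adapt Angluin's $L^*$ algorithm, using \cref{thm:prop:tdd-lowerbound,thm:prop:tdd-min} as a Myhill--Nerode-style characterization: the $t$-nodes of the canonical TDD $C$ for $f$ are in bijection with the non-trivial $X_t$-subfunctions of $f$. For every node $t$ of $T$ the learner maintains two sets. The first is a set $A_t\subseteq 2^{X_t}$ of \emph{access assignments}, meant to represent pairwise distinct $X_t$-subfunctions. The second is a set $E_t\subseteq 2^{X\setminus X_t}$ of \emph{contexts}, which always contains one context witnessing that each $a\in A_t$ is non-trivial. The \emph{row} of $a\in 2^{X_t}$ is the vector $(f(a\times e))_{e\in E_t}$, computed with membership queries. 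The learner maintains the invariant that distinct elements of $A_t$ have distinct rows. Two assignments with different rows induce different subfunctions, so $|A_t|\le S_t$ always holds, and $\sum_t |A_t|$ is bounded by the number of nodes of $C$.

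\textbf{Building a hypothesis.} The hypothesis is built bottom-up along $T$. At a leaf labeled $x$, the learner takes access assignments among $\tup[x/0],\tup[x/1]$ and labels the nodes by literals or by $1$, according to whether their rows coincide. At an internal node $t$ with children $t_1,t_2$, it computes the row of $a_1\times a_2$ for each pair $(a_1,a_2)\in A_{t_1}\times A_{t_2}$. If the row is identically $0$, the pair is dropped, being treated as trivial. If the row equals the row of some $a\in A_t$, the pair of corresponding nodes is put into $E(g_a)$; this $a$ is unique by the invariant. Otherwise the table is not closed: the learner adds $a_1\times a_2$ to $A_t$ together with a context where it evaluates to $1$, and restarts. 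Since every pair is sent to at most one node, the hypothesis is automatically a deterministic TDD. At the root the output is the node whose row is that of a model, with $E_r=\{\text{empty assignment}\}$. Each closedness failure increases $\sum_t|A_t|$, so only polynomially many restarts occur.

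\textbf{Processing counterexamples.} Let $\tau$ be a counterexample. For each node $t$, the learner runs the hypothesis bottom-up on $\tau$ as described after \cref{thm:prop:tdd-double-det}. This gives, as long as it is defined, the node reached at $t$, whose representative is $\rho_t\in A_t$. Next it considers hybrid assignments obtained by replacing $\tau|_{X_t}$ by $\rho_t$, and compares $f$ on consecutive hybrids going from the leaves to the root. At the leaves nothing changes, since there the representatives agree with $\tau$ on the relevant subfunction. At the root the membership value is the hypothesis' answer, which differs from $f(\tau)$. Hence some internal node $t$ with children $t_1,t_2$ exhibits a discrepancy: with context $e=\tau|_{X\setminus X_t}$, the value $f(\rho_{t_1}\times\rho_{t_2}\times e)$ differs from $f(\rho_t\times e)$, or the pair was dropped as trivial although $e$ shows it is not. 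Adding $e$ to $E_t$ separates $\rho_{t_1}\times\rho_{t_2}$ from $\rho_t$. The next closedness check then adds a new access assignment, so $\sum_t|A_t|$ strictly increases. Finding $t$ takes $O(|X|)$ membership queries with a linear scan, or binary search along the path.

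\textbf{Termination and main difficulty.} The total number of counterexamples and closedness repairs is bounded by $\sum_t S_t\le|C|$. Each round uses polynomially many membership queries, since rows have polynomial length. When no counterexample remains, the hypothesis computes $f$, and it has at most $S_t$ $t$-nodes for every $t$; by \cref{thm:prop:tdd-lowerbound} it has exactly $S_t$ $t$-nodes. It is therefore the minimal TDD, and by canonicity (\cref{thm:prop:tdd-min}) it is $C$ up to isomorphism. The step I expect to be the main obstacle is the counterexample analysis. The hybrids at different levels of the vtree must be chained correctly, because replacing the subassignment below $t$ also changes what lies below $t_1$ and $t_2$. One must check that the discrepancy really lands at a single vtree node, and that the new context either creates a new row or turns a pair that looked trivial into a non-trivial one, so that progress is strict. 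I would first try to formalize the chain as replacing, in postorder, $\tau|_{X_t}$ by $\rho_t$, and to prove by induction on $T$ that consecutive hybrids that agree in membership propagate correctness upward.
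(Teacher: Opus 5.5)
\textbf{Where the proof breaks.} Your framework is a legitimate Rivest--Schapire-style alternative to the paper's closed-and-consistent table: access assignments with pairwise distinct non-zero rows, one new context per counterexample, and no consistency check. The progress step you worry about also goes through. Before $c$ is added, the row of $\rho_{t_1}\times\rho_{t_2}$ coincides with the row of exactly one element of $A_t$, namely $\rho_t$, and it is non-zero. In the dropped case it is zero, hence different from every row in $A_t$. Adding contexts only refines rows, so afterwards this row is non-zero and new, and closedness must add an access assignment.

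The genuine gap is where you locate the discrepancy. It is false that a counterexample always yields a disagreement in the single-replacement context $e=\tau|_{X\setminus X_t}$. It is also false that ``at the leaves nothing changes''. Take $T$ with root $r$ and leaves $x,y$, and $f=x\vee y$. Consider the closed state $A_x=\{\langle x/0\rangle\}$, $E_x=\{\langle y/1\rangle\}$, $A_y=\{\langle y/0\rangle\}$, $E_y=\{\langle x/1\rangle\}$, $A_r=\emptyset$. It satisfies your invariants, and your procedure can reach it, e.g.\ after the counterexamples $\langle x/0,y/1\rangle$ and then $\langle x/1,y/0\rangle$.
\begin{itemize}
\item Both leaves get a single node labeled $1$, with representatives $\langle x/0\rangle$ and $\langle y/0\rangle$.
\item The root pair has row $f(0,0)=0$ and is dropped, so the hypothesis is constant $0$.
\item For the counterexample $\tau=\langle x/1,y/1\rangle$ we get $f(\rho_x\times\tau|_{\{y\}})=f(0,1)=1=f(\tau)$ and $f(\tau|_{\{x\}}\times\rho_y)=f(1,0)=1=f(\tau)$.
\item The dropped pair evaluates to $f(0,0)=0$ under the only root context.
\end{itemize}
So none of your discrepancies exists, and the learner is stuck. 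The cause is the interaction you flagged: replacing $\tau|_{X_{t_1}}$ by $\rho_{t_1}$ changes the context in which $t_2$ is tested. Knowing that each replacement is harmless in the original context says nothing about doing both.

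\textbf{How to repair it.} Use a single linear chain of hybrids with \emph{mixed} contexts.
\begin{itemize}
\item Traverse all nodes of $T$, leaves included, in postorder. Let $\eta_j$ agree with $\rho_u$ on $X_u$ for every maximal already-processed node $u$, and with $\tau$ elsewhere.
\item Then $\eta_0=\tau$. When the run is defined, the last hybrid is $\rho_r$, and $f(\rho_r)=1=H(\tau)$, where $H$ is your hypothesis.
\item Consecutive hybrids differ only on $X_t$: from $\rho_{t_1}\times\rho_{t_2}$ to $\rho_t$, or from $\tau|_{\{x\}}$ to $\rho_\ell$ at a leaf. They share the context $c=\eta_{j-1}|_{X\setminus X_t}$, which carries representatives on already-processed subtrees and $\tau$ elsewhere.
\item A flip at step $j$ tells you to add this $c$, not $\tau|_{X\setminus X_t}$, to $E_t$.
\item If the run is undefined, stop at the first undefined node in postorder. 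Either $f(\eta_{j-1})=1$, and $c$ witnesses that the dropped pair (or leaf value) is non-trivial. Or, since $f(\tau)=1$, an earlier step flips.
\end{itemize}
In the example, the chain $\langle x/1,y/1\rangle\to\langle x/0,y/1\rangle\to\langle x/0,y/0\rangle$ flips at leaf $y$ with context $\langle x/0\rangle\neq\tau|_{\{x\}}$. Adding it to $E_y$ splits that leaf. With this chain and the refinement argument above, your proof is complete.

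The paper sidesteps this analysis altogether. It adds $\tau_c|_{X_t}$ to $\mathsf{Nodes}_t$ at every node and enforces closure and consistency, at the price of allowing redundant representatives.
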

\begin{toappendix}
\begin{proof}[Proof of Proposition~\ref{prop:learn}]
We describe our algorithm as an adaptation of Angluin's $L^*$ original algorithm~\cite{angluin1987learning}.
We also borrow some notation and insights on Bojanczyk's version of it~\cite{bojanczyk}.

To simplify the algorithm, we will state the result for an algorithm that learns a minimal {\em full} TDD, as defined in the appendix of Section~\ref{sec:transformations}. We state without proof that a slight rewriting of Theorem~\ref{thm:prop:tdd-min}, by allowing trivial subfunctions, holds for full TDD.
For the final algorithm that learns a minimal TDD, we keep the algorithm as is, except for an extra step which trims the TDD.

The setting of the algorithm is this:
for every $t\in T$, we keep a set ${\sf Nodes}_t$ of partial assignments on $X_t$; and for every non-leaf $t\in T$ we keep a set ${\sf Tests}_t$ of partial assignments on $X\setminus X_t$. 
For a step of the algorithm, we call the current sets ${\sf Nodes}_t$ and ${\sf Tests}_t$ its {\em state}.

Given a non-leaf $t\in T$, and a given ${\sf Tests}_t$,  we say that partial assignments $\tau_1$ and $\tau_2$ on $X_t$ are {\em equivalent at $t$} if for every $\kappa\in {\sf Tests}_t$ it holds that $f(\tau_1 \times \kappa) = f(\tau_2 \times \kappa)$.
We note that two partial assignments $\tau_1$ and $\tau_2$ on $X_t$ can be tested for at-$t$-equivalence with $|{\sf Tests}_t|$ membership queries.

We say that the state of the algorithm is {\em closed} if, for every non-leaf node $t$, for any choice of $\tau_{\ell} \in {\sf Nodes}_{\ell(t)}$ and $\tau_r \in {\sf Nodes}_{r(t)}$, the partial assignment $\tau_\ell \times \tau_r$ is equivalent at $t$ to some $\tau \in {\sf Nodes}_t$. 

We say that the state of the algorithm is {\em left-consistent} if, for every non-leaf node $t$, for any $\tau_{1}, \tau_{2}  \in {\sf Nodes}_{\ell(t)}$ that are equivalent at $\ell(t)$ it holds that for every $\tau_r\in {\sf Nodes}_{r(t)}$ the partial assignments $\tau_{1} \times \tau_r$ and $\tau_{2} \times \tau_r$  are equivalent at $t$. 
We say the sets are {\em right-consistent} with an analogous definition, and we say the state is {\em consistent} if it is both left- and right-consistent.

Fix a non-leaf $t\in T$. We denote the class of partial assignments that are equivalent at $t$ to a $\tau\in{\sf Nodes}_t$ by $[\tau]$.

\begin{remark}\label{rem:learn:build}
If the state of the algorithm is closed and consistent, we can build a full TDD $C = (N, E)$ as follows: for each $t\in T$ define $N_t = \{[\tau]\mid \tau\in {\sf Nodes}_t\}$, and its root is $[\tau]$ for any $\tau\in{\sf Nodes}_{{\sf root}(T)}$ that evaluates to 1; and the upwards function $E$ is defined by $E([\tau_\ell], [\tau_r]) = [\tau_\ell \times \tau_r]$ for every $\tau_\ell\in{\sf Nodes}_{\ell(t)}$ and $\tau_r\in{\sf Nodes}_{r(t)}$. 
We prove that the construction of $E$ is correct: if the state of the algorithm is closed, then the latter $[\tau_\ell \times \tau_r]$ exists; and if it is consistent, then for every $\tau_\ell'\in [\tau_\ell]$ and $\tau_r'\in[\tau_r]$ it holds that $[\tau_\ell' \times \tau_r'] = [\tau_\ell \times \tau_r]$.
\end{remark}

{\bf The algorithm. } We begin by setting ${\sf Nodes}_t$ to $\emptyset$ for every non-leaf $t\in T$, except for $t = {\sf root}(T)$ where ${\sf Nodes}_t$ is set as a singleton of the empty assignment from $\emptyset$ to $\{0,1\}$.
Then:
\begin{enumerate}
\item Check for closure; if the current state is not closed, then, for the $t\in T$ in which the test failed, add the witnessing partial assignment to ${\sf Nodes}_t$. Repeat this until the state is closed.
\item Check for consistency; w.l.o.g. if the current state is not left-consistent, then, for the $t\in T$ in which the test failed, add the $\tau_r\in {\sf Nodes}_{r(t)}$ that witnesses failure to ${\sf Tests}_{\ell(t)}$, and go back to step 1.
\item The state of the algorithm is now closed and consistent, so we build the TDD from Remark~\ref{rem:learn:build} and run an equivalence query. If the query answers {\em true}, the algorithm ends and we return the TDD; otherwise, we receive a counterexample assignment $\tau_c$, and then for each non-leaf node $t\in T$ we add $\tau_c\vert_{X_t}$ to ${\sf Nodes}_t$, and then go back to step 1.
\end{enumerate} 
This ends the description of the algorithm.

Clearly, if the algorithm ends, then the TDD given as output is equivalent to the hidden function $f$. We argue that it is also minimal.
\begin{claim}\label{claim:learn:min}
The TDD produced by the algorithm is minimal for full TDD.
\end{claim}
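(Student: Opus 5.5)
The idea is to show that, at every vtree node $t$, the $t$-nodes of the output TDD are in bijection with the $X_t$-subfunctions of $f$, trivial ones included. The full-TDD analogue of \cref{thm:prop:tdd-lowerbound} and \cref{thm:prop:tdd-min} then gives minimality. Since the TDD $C$ built in \cref{rem:learn:build} is full, every assignment of $X_t$ is accepted by exactly one $t$-node. So a full TDD computing $f$ has at least as many $t$-nodes as there are $X_t$-subfunctions of $f$. This holds for the same reason as in \cref{thm:prop:tdd-lowerbound}: two assignments inducing distinct subfunctions cannot share a $t$-node, by \cref{lem:prop:gate-subfunction}. It therefore suffices to show that $|N_t|$ is at most the number of $X_t$-subfunctions of $f$.

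\textbf{Key step.} Two elements of ${\sf Nodes}_t$ lying in distinct classes induce distinct $X_t$-subfunctions of $f$. Let $\tau_1,\tau_2\in{\sf Nodes}_t$ with $[\tau_1]\neq[\tau_2]$. By definition of equivalence at $t$, some test $\kappa\in{\sf Tests}_t$ satisfies $f(\tau_1\times\kappa)\neq f(\tau_2\times\kappa)$. The subtlety is that tests are partial assignments on $X\setminus X_t$. So I need an invariant, maintained by the algorithm, that the membership value $f(\tau\times\kappa)$ is well defined, meaning that $\tau\times\kappa$ is total or is completed canonically.

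\textbf{Invariant.} Every element of ${\sf Tests}_t$ is obtained by composing assignments from ${\sf Nodes}$ of siblings with tests of ancestors, so $\tau\times\kappa$ ranges over total assignments of $X$. This is checked by induction on the steps of the algorithm:
\begin{itemize}
\item at the root, the only test is the empty one;
\item a test added at $\ell(t)$ in step~2 has the form $\tau_r\times\kappa'$, with $\tau_r\in{\sf Nodes}_{r(t)}$ and $\kappa'\in{\sf Tests}_t$;
\item node assignments are total on $X_t$, since they come from restrictions of counterexamples and from closure witnesses, which are products of children's nodes.
\end{itemize}
Granting the invariant, $\kappa$ is a model of exactly one of $f[\tau_1]$ and $f[\tau_2]$. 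These are therefore distinct $X_t$-subfunctions, and $|N_t|$ is at most the number of subfunctions.

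Conclusion: $|N_t|$ equals the lower bound at every node $t$, so $C$ has minimal size among full TDDs for $f$, and by canonicity it is the canonical one. I expect the main obstacle to be the invariant that tests are well-formed. In particular, step~2 as written only adds $\tau_r$, and this must be read as $\tau_r\times\kappa$, where $\kappa$ is the separating test at $t$. I would first make this reading explicit, and then verify that consistency witnesses indeed yield distinguishing tests at $\ell(t)$.
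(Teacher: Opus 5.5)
Your proposal is correct and follows the paper's argument. Two representatives of distinct classes at $t$ are separated by some $\kappa\in{\sf Tests}_t$, so they induce distinct $X_t$-subfunctions. This gives at most one $t$-node per subfunction, which matches the lower bound for full TDDs; you rightly derive that bound from \cref{lem:prop:gate-subfunction}, which also holds for the trivial subfunction.

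Your extra invariant that tests are total on $X\setminus X_t$ makes explicit a well-formedness point the paper's two-line proof skips. As you say, it requires reading step~2 as adding $\tau_r\times\kappa$. It also requires reading the initialisation as placing the empty assignment in ${\sf Tests}_{{\sf root}(T)}$ rather than ${\sf Nodes}_{{\sf root}(T)}$.
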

\begin{proof}
For a non-leaf $t\in T$, if two $\tau_1,\tau_2\in{\sf Nodes}_t$ are not equivalent at $t$, then there is a partial assignment $\kappa\in{\sf Tests}_t$ that witnesses that $f[\tau_1]$ and $f[\tau_2]$ define different $X_t$-subfunctions. Since we have at most one node per $X_t$-subfunction of $f$, the TDD is minimal for full TDD by virtue of the stated rewriting of Theorem~\ref{thm:prop:tdd-min}.
\end{proof}

Yet we still need to prove that the algorithm actually ends. We use the following.
\begin{remark}\label{rem:learn:equivs}
The only way that equivalence classes inside each ${\sf Nodes}_t$ can change are (1) by adding more elements to ${\sf Tests}_t$, and this may separate a class into more than one class, and (2) by adding an element to ${\sf Nodes}_t$ that does not belong to any other existing class. In particular, the number of equivalence classes never decreases; and if their number stays the same, the equivalence classes do not change either.
\end{remark}

The fact that the algorithm ends follows straightforwardly by these two claims:

\begin{claim}\label{claim:learn:corr1}
If the equivalence query in step 3 answers {\em false}, then after adding the counterexample $\tau_c$ and its partial assignments to the ${\sf Nodes}_t$ sets, the next iteration of steps 1 and 2 strictly increases the number of equivalence classes under at-$t$-equivalence for at least one $t\in T$. 
\end{claim}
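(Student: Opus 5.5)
The plan is to argue by contradiction, using Remark~\ref{rem:learn:equivs}. Suppose that after adding the restrictions $\tau_c|_{X_t}$ of the counterexample to the sets ${\sf Nodes}_t$, the next run of steps 1 and 2 does not increase the number of at-$t$-equivalence classes for any $t \in T$. By Remark~\ref{rem:learn:equivs}, the classes then do not change at all, for the following reasons.
\begin{itemize}
\item Every element added to ${\sf Nodes}_t$, whether from $\tau_c$ or during closure, falls into an existing class, because an element outside every existing class would create a new one.
\item Adding a test could only split a class. Since no count increases, no class is split.
\end{itemize}
Hence the closed and consistent state reached afterwards yields, via Remark~\ref{rem:learn:build}, exactly the same TDD $C$ as before, and $C$ answered the equivalence query with counterexample $\tau_c$. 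We will show that $C$ in fact classifies $\tau_c$ correctly, which is the contradiction.

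The key step is a bottom-up induction on $T$. The claim is that for every node $t$, the restriction $\tau_c|_{X_t}$ is a model of the $t$-node $[\tau_c|_{X_t}]$ of $C$. This class is well defined because $\tau_c|_{X_t}$ now lies in ${\sf Nodes}_t$ and belongs to an old class.
\begin{itemize}
\item For a leaf $t$ labelled $x$, the nodes are the classes of assignments to $x$, labelled by the corresponding literal. The claim is then immediate, with care taken to fix the leaf conventions so that this holds.
\item For an internal node $t$ with children $t_1,t_2$, the induction hypothesis gives that $\tau_c|_{X_{t_i}}$ satisfies $[\tau_c|_{X_{t_i}}]$ for $i=1,2$. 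By the definition of $E$ in Remark~\ref{rem:learn:build}, the pair $([\tau_c|_{X_{t_1}}],[\tau_c|_{X_{t_2}}])$ is an input of $[\tau_c|_{X_{t_1}} \times \tau_c|_{X_{t_2}}] = [\tau_c|_{X_t}]$. This uses closedness for existence and consistency for independence of representatives. So $\tau_c|_{X_t}$ satisfies $[\tau_c|_{X_t}]$.
\end{itemize}

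At the root $r$, this shows that $\tau_c$ is a model of the node $[\tau_c]$. By Theorem~\ref{thm:prop:tdd-double-det}, it is a model of no other $r$-node. Hence $C$ accepts $\tau_c$ if and only if $[\tau_c]$ is the output class. This matches $f(\tau_c)$ provided that at-root equivalence separates assignments with different $f$-values. To ensure this, I would take ${\sf Tests}_r$ to contain the empty assignment over $X\setminus X_r=\emptyset$; if the initialization does not already include it, I would add it. Then equivalence at $r$ is exactly equality of $f$-values, the output class is precisely the set of models, and $C(\tau_c)=f(\tau_c)$, contradicting that $\tau_c$ is a counterexample.

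I expect the main obstacle to be the bookkeeping at the boundary cases, namely the leaves (which keep no ${\sf Tests}$) and the root test. The central subtle point is justifying that ``no class count increases'' really implies that the rebuilt TDD coincides with the one that produced $\tau_c$. For this I would rely carefully on the monotonicity in Remark~\ref{rem:learn:equivs}.
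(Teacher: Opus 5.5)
Your proposal is correct and follows essentially the same route as the paper. Both arguments assume no class count increases and use Remark~\ref{rem:learn:equivs} to conclude that the rebuilt TDD coincides with the one that produced $\tau_c$; both then observe that $\tau_c|_{X_t}$ satisfies $[\tau_c|_{X_t}]$ for every $t$, which contradicts $\tau_c$ being a counterexample. Your additions make explicit what the paper's short proof leaves implicit: the bottom-up induction, the appeal to \cref{thm:prop:tdd-double-det} to handle both false positives and false negatives, and the requirement that ${\sf Tests}_r$ contain the empty assignment.
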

\begin{proof}
Let us suppose that after adding the partial assignments and going through steps 1 and 2, the equivalence classes at each $t\in T$ do not increase in number; and given Remark~\ref{rem:learn:equivs}, this is the same as them staying unchanged. Consider the following two TDDs that can be built by the current state, (1) by using the same partial assignments as the one we tested in the previous iteration of step 3, and (2) by using the partial assignments derived from $\tau_c$. By Remark~\ref{rem:learn:build}, these TDDs are isomorphic. We note that in this TDD, evaluating the assignment $\tau_c$ satisfies node $[\tau_c\vert_{X_t}]$ for every $t\in T$, and in particular $[\tau_c]$, which contradicts the fact that $\tau_c$ was a counterexample.
\end{proof}
Since equivalence queries are bounded by Claim~\ref{claim:learn:corr1}, the proof concludes with the following statement.
\begin{claim}\label{claim:learn:corr2}
The total number of membership queries is polynomial in the size of the target TDD $C^*$.
\end{claim}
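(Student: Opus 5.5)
The plan is to bound three quantities by polynomials in $|C^*|$: the number of times the main loop runs, the sizes of the sets ${\sf Nodes}_t$ and ${\sf Tests}_t$, and the number of membership queries used in one pass of steps 1–3. Let $K_t$ be the number of (possibly trivial) $X_t$-subfunctions of $f$, and let $K=\sum_{t} K_t$. By the full-TDD version of \cref{thm:prop:tdd-min}, $K \le |C^*| + 2|X|$, so $K$ is polynomial in $|C^*|$. We also use $|X| \le |C^*|$.

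\textbf{Step 1: class counts are bounded.} I first check that all tests are total assignments of $X\setminus X_t$, by induction on how tests are created: if $\kappa \in {\sf Tests}_t$ is total on $X\setminus X_t$, then $\tau_r\times\kappa$ is total on $X\setminus X_{\ell(t)}$. Given this, two partial assignments that are inequivalent at $t$ define different $X_t$-subfunctions. Hence the number of equivalence classes at $t$ is always at most $K_t$, and the total number of classes is at most $K$.

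\textbf{Step 2: every modification creates a class.} I then show that each modification of the state strictly increases the total number of classes. There are three cases.
\begin{itemize}
\item A closure fix adds an element of ${\sf Nodes}_t$ that lies in no existing class.
\item A consistency fix at $t$ is witnessed by $\tau_1,\tau_2$ equivalent at $\ell(t)$, some $\tau_r$, and some $\kappa\in{\sf Tests}_t$ with $f(\tau_1\times\tau_r\times\kappa)\neq f(\tau_2\times\tau_r\times\kappa)$. I make explicit that the added test is $\tau_r\times\kappa$. It separates $\tau_1$ from $\tau_2$, so it splits a class at $\ell(t)$.
\item A failed equivalence query is followed by a class increase, by \cref{claim:learn:corr1}.
\end{itemize}
By \cref{rem:learn:equivs}, class counts never decrease. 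So there are at most $K$ closure fixes, at most $K$ consistency fixes and at most $K+1$ equivalence queries in total.

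\textbf{Step 3: sizes of the sets.} Consequently $|{\sf Tests}_t|\le K$. Also $|{\sf Nodes}_t| \le 1 + K + (K+1)$: each node is the initial one, a closure addition, or a counterexample restriction, and each counterexample adds one element per $t$.

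\textbf{Step 4: counting queries.} Within one pass, checking closure at $t$ costs at most $|{\sf Nodes}_{\ell(t)}|\cdot|{\sf Nodes}_{r(t)}|\cdot|{\sf Nodes}_t|\cdot|{\sf Tests}_t|$ membership queries. Checking consistency at $t$ costs at most $|{\sf Nodes}_{\ell(t)}|^2\cdot|{\sf Nodes}_{r(t)}|\cdot|{\sf Tests}_t|$ queries, and similarly on the right. Building the TDD of \cref{rem:learn:build} costs a polynomial number of queries of the same kind. Each of these is $O(K^4)$, and there are $O(|X|)$ nodes $t$. The number of passes is $O(K)$, since every pass ends either with a modification or with the final successful equivalence query. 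This gives $O(|X|K^5)$ membership queries, which is polynomial in $|C^*|$.

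The main obstacle is Step 2 for consistency fixes. The algorithm's description only says to "add the $\tau_r$" to ${\sf Tests}_{\ell(t)}$, but that object is not an assignment of $X\setminus X_{\ell(t)}$. The argument needs the added test to be $\tau_r\times\kappa$, which is total and actually separates $\tau_1$ from $\tau_2$. This reading also keeps the totality invariant that Step 1 relies on, so I will fix it explicitly before running the counting.
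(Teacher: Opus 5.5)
Your proof is correct, and its skeleton is the paper's. You bound $|{\sf Nodes}_t|$ by closure additions plus one counterexample restriction per failed equivalence query, bound the number of passes, and multiply by the per-pass cost of the closure and consistency checks.

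You genuinely differ in how you control ${\sf Tests}_t$ and the number of consistency failures. The paper argues in three steps:
\begin{enumerate}
\item tests for $\ell(t)$ are taken from ${\sf Nodes}_{r(t)}$;
\item hence $|{\sf Tests}_t| \le 2|C^*|$;
\item hence consistency failures are at most $\sum_t |{\sf Tests}_t|$.
\end{enumerate}
This relies on the literal description in which $\tau_r$ alone is added. As you note, $\tau_r$ is not an assignment of $X \setminus X_{\ell(t)}$. Once the test is repaired to $\tau_r \times \kappa$ with $\kappa \in {\sf Tests}_t$, the paper's counting only yields $|{\sf Tests}_{\ell(t)}| \le |{\sf Nodes}_{r(t)}| \cdot |{\sf Tests}_t|$. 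That bound compounds along root-to-leaf paths of the vtree.

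Your argument avoids this problem. Each consistency fix adds a test that is guaranteed to be new, since it separates two currently equivalent elements, and so it splits a class. The totality invariant caps the number of classes at $t$ by the number of $X_t$-subfunctions. Consistency fixes, and hence tests, are therefore bounded by $K$ directly. The paper also uses this totality implicitly, for instance in Claim~\ref{claim:learn:min}, when it identifies inequivalent elements with distinct subfunctions. Making it explicit is a real improvement.

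One small wording point: adding counterexample restrictions need not by itself increase the class count, so not every modification increases it. The bounds you actually use still hold. Closure and consistency fixes each increase the count immediately. By Claim~\ref{claim:learn:corr1}, the count strictly grows between consecutive equivalence queries. So your conclusion stands.
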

\begin{proof}
Note that the only moment at which we add some element to any ${\sf Nodes}_t$ are steps 1 and 3. In the former, adding an element creates a new equivalence class, and the latter is done a total number of times that is bounded by $|C^*|$ by Claim~\ref{claim:learn:corr1}. We obtain that the size of ${\sf Nodes}_t$ for any $t\in T$ is at most $2\cdot|C^*|$.  

Similarly, the only moment in which we add some element to any ${\sf Tests}_t$ is step 2.
For a given $t$, we will only add elements from ${\sf Nodes}_{\ell(t)}$ to ${\sf Tests}_{r(t)}$, and only elements from ${\sf Nodes}_{r(t)}$ to ${\sf Tests}_{\ell(t)}$, so the size of any ${\sf Tests}_t$ is also bounded by $2\cdot|C^*|$.

From the arguments above, it follows that any closure check and any consistency check takes a number of membership queries that is polynomial in $|C^*|$.

Now, let us reason how many times step 1 and step 2 can occur. We note that every time both the closure check and the consistency check succeed, then step 3 is reached, and by Claim~\ref{claim:learn:corr1}, the number of times this occurs is bounded by $|C^*|$. Now, let us count the number of times there can be an iteration in which any of them fails. We note that every time the closure check fails, a new equivalence class is added, and every time the consistency check fails, an element is added to some ${\sf Tests}_t$. Therefore, the number of iterations with any of the failures is at most $|C^*| + \sum_{t\in T}|{\sf Tests}_t|$. In total, this proves that the number of total closure and consistency checks is also polynomial in $|C^*|$.
\end{proof}

It can easily be checked that every other step performed by the algorithm can be done in polynomial time.
\end{proof}
\end{toappendix}



\begin{toappendix}
\section{Determinizing TDD} 
\label{sec:prop:detTDD}

In this section, we show that non-deterministic TDD can be made deterministic by paying an exponential blow up in their width. The existence of such a TDD can be seen through subfunctions. Indeed, fix a vtree $T$ over variables set $X$ and a TDD $C$ respecting $T$.  For every node $t$ of $T$, we let $N_t$ be the set of $t$-nodes of $C$. Recall that by assumption, $|N_t| \leq k$. Given $S \subseteq N_t$, we let $f^t_S$ be the Boolean function whose models are the assignments $\tau \in 2^{X_t}$ such that:
\begin{itemize}
	\item for every $v \in S$, $\tau$ is a model of $v$,
	\item for every $v \in N_t \setminus S$, $\tau$ is not a model of $v$.
\end{itemize}
Given an assignment $\tau \in 2^{X_t}$, we can define the \emph{$t$-shape $S_t(\tau)$ of $\tau$} (simply called \emph{shape of $\tau$} when $t$ is clear from context) as the set $S_t(\tau) \subseteq N_t$ containing every $g \in N_t$
such that $\tau$ is a model of $g$.  Clearly $\tau$ is a model of $f^t_S$ if and only if $S = S_t(\tau)$. Moreover, it is not hard to prove by adapting the proof of \cref{lem:prop:gate-subfunction} that if  $S(\tau_1) = S(\tau_2)$, then  $f[\tau_1]$ and $f[\tau_2]$ define the same $X_t$-subfunction. This hints at the fact that $C$ cannot have more than $2^{|N_t|} \leq 2^k$ $X_t$-subfunctions and that they can be defined (possibly with redundancies) by the $f^t_S$ functions. Hence, since TDDs are complete and by \cref{thm:prop:tdd-min}, there must exist a TDD $C'$ computing the same function as $C$ and of width at most $2^k$.

The previous sketch does not, however, give a way of actually constructing the TDD for $C$. Below, we give a constructive proof allowing us to efficiently construct $C'$ from $C$. Before going into the main proof, we do a few key observations. For a set $S \subseteq N_t$, if there exists some $\tau \in 2^{X_t}$ such that $S = S^t(\tau)$, we say that $S$ is a $t$-shape of $C$ (without the need to explicitly refer to $\tau$ anymore). Let $t$ be an internal node of $T$ with children $t_1,t_2$ and let $S_1$ be a $t_1$-shape and $S_2$ be a $t_2$-shape. We say that $(S_1,S_2)$ \emph{generates $t$-shape $S$} if there exists $\tau_1 \in 2^{X_{t_1}}$ and $\tau_2 \in 2^{X_{t_2}}$ such that $S^{t_1}(\tau_1) = S_1, S^{t_2}(\tau_2) = S_2$ and $S^{t}(\tau_1 \times \tau_2) = S$. The following lemma shows that $(S_1,S_2)$ generates exactly one $t$-shape, and this shape is easy to compute from $(S_1,S_2)$:

\begin{lemma}
	\label{lem:prop:tshapes-compute}
	For every $t_1$-shape $S_1$ and $t_2$-shape $S_2$,  there exists a unique $t$-shape $S$ that is generated by $S_1$ and $S_2$ defined as $S := \{ g \in N_t \mid \exists (g_1,g_2) \in E(g) \text{ such that } g_1 \in S_1, g_2 \in S_2 \}$.
\end{lemma}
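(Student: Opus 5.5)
The plan is to show that the $t$-shape of $\tau_1 \times \tau_2$ depends only on the shapes $S^{t_1}(\tau_1)$ and $S^{t_2}(\tau_2)$, and that it equals the set $S$ given in the statement. Fix a $t_1$-shape $S_1$ and a $t_2$-shape $S_2$. By the definition of a shape, there exist witnesses $\tau_1 \in 2^{X_{t_1}}$ and $\tau_2 \in 2^{X_{t_2}}$ with $S^{t_1}(\tau_1)=S_1$ and $S^{t_2}(\tau_2)=S_2$. Hence $(S_1,S_2)$ generates at least one $t$-shape, namely $S^t(\tau_1\times\tau_2)$. Existence is therefore immediate.

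For the identification, take arbitrary $\tau_1,\tau_2$ with $S^{t_1}(\tau_1)=S_1$ and $S^{t_2}(\tau_2)=S_2$, and any $g \in N_t$. By the semantics of nTDD nodes, $\tau_1\times\tau_2$ is a model of $g$ if and only if there is $(g_1,g_2)\in E(g)$ such that $\tau_1$ is a model of $g_1$ and $\tau_2$ is a model of $g_2$. Here we use that $X_t = X_{t_1}\uplus X_{t_2}$, so the restrictions of $\tau_1\times\tau_2$ are exactly $\tau_1$ and $\tau_2$. By the definition of shapes, $\tau_1$ is a model of $g_1$ if and only if $g_1 \in S_1$, and likewise $\tau_2$ is a model of $g_2$ if and only if $g_2\in S_2$. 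Thus $g \in S^t(\tau_1\times\tau_2)$ if and only if there is $(g_1,g_2)\in E(g)$ with $g_1\in S_1$ and $g_2\in S_2$. So $S^t(\tau_1\times\tau_2)=S$, the set defined in the statement.

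Since this holds for every choice of witnesses, every $t$-shape generated by $(S_1,S_2)$ equals $S$, which gives uniqueness. Combined with existence, $S$ is itself a $t$-shape, witnessed by $\tau_1\times\tau_2$. I do not expect a real obstacle. The only care needed is to quantify over all witnesses, not just one pair, so that uniqueness follows. Determinism is not needed: the argument works for arbitrary nTDDs, which is exactly the setting of this section.
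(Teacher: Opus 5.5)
Your proof is correct and follows the paper's argument. Both fix witnesses $\tau_1,\tau_2$ of the shapes $S_1,S_2$ and show $S^t(\tau_1\times\tau_2)=S$ using the semantics of $t$-nodes; the paper does this by double inclusion, you by a single biconditional, and your remarks that this holds for every choice of witnesses (giving uniqueness) and that determinism is not needed are both accurate.
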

\begin{proof}
	Let $\tau_1 \in 2^{X_{t_1}}$ with $t_1$-shape $S_1$ and $\tau_2 \in 2^{X_{t_2}}$ with $t_2$-shape $S_2$ and let $S'$ be the $t$-shape of $\tau := \tau_1 \times \tau_2$. We want to show that $S'=S$. Let $g \in S$ and let $(g_1,g_2) \in E(g)$ with $g_1\in S_1, g_2 \in S_2$ which exists by definition of $S$. Since $g_1$ is in $S_1$ and $\tau_1$ has  shape  $S_1$, $\tau_1$ is a model of $g_1$. Similarly, $\tau_2$ is a model of $g_2$. Hence, $\tau$ is a model of $g$, that is, $g \in S'$.
	
	Now let $g \in S'$. By definition, $\tau$ is a model of $g$, that is, there exists $(g_1,g_2) \in E(g)$ such that $\tau_1$ is a model of $g_1$ and $\tau_2$ is a model of $g_2$. But it then means that $g_1 \in S_1$ and $g_2 \in S_2$ since $S_1$ and $S_2$ are the shapes of $\tau_1$ and $\tau_2$ respectively. In other words, $g \in S$ and we have proven $S'=S$.
\end{proof}

The next lemma follows directly from definitions and from \cref{lem:prop:tshapes-compute} but it nicely wraps up what will be needed later.

\begin{lemma}
	\label{lem:prop:tshapes-complete}
	
	Let $\tau \in 2^{X_t}$ be an assignment with $t$-shape $S$. Let $\tau_1 := \tau|_{X_{t_1}}$, $\tau_2 := \tau|_{X_{t_2}}$, and let $S_1,S_2$ be their respective shape. Then $(S_1,S_2)$ generates $S$. Moreover, if $(S_1,S_2)$ generates $S$ then for every $\tau_1 \in 2^{X_{t_1}}$ with shape $S_1$ and $\tau_2 \in 2^{X_{t_2}}$ with shape $S_2$, $\tau_1 \times \tau_2$ has shape $S$. 
\end{lemma}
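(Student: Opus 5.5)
The statement has two parts, and both come from \cref{lem:prop:tshapes-compute} together with the definitions of shape and generation. The plan is to prove the first part by restricting $\tau$ to the two subtrees, and the second by fixing witnesses and using uniqueness.

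For the first claim, I start from $\tau \in 2^{X_t}$ with $t$-shape $S$. I set $\tau_1 := \tau|_{X_{t_1}}$ and $\tau_2 := \tau|_{X_{t_2}}$, with shapes $S_1 = S^{t_1}(\tau_1)$ and $S_2 = S^{t_2}(\tau_2)$. These exist because every assignment has a shape, so $S_1$ and $S_2$ are genuine $t_1$- and $t_2$-shapes. Since $X_t = X_{t_1} \uplus X_{t_2}$, we have $\tau = \tau_1 \times \tau_2$. Hence $\tau_1,\tau_2$ are exactly the witnesses the definition of ``$(S_1,S_2)$ generates $S$'' requires: their shapes are $S_1,S_2$ and the shape of their product is $S$. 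Nothing beyond unfolding definitions is needed here.

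For the second claim, I assume $(S_1,S_2)$ generates $S$ and take arbitrary $\tau_1'$ with shape $S_1$ and $\tau_2'$ with shape $S_2$. The proof of \cref{lem:prop:tshapes-compute} shows more than existence: for any such pair, the shape of $\tau_1' \times \tau_2'$ is the set
\[
S^* := \{ g \in N_t \mid \exists (g_1,g_2) \in E(g),\ g_1 \in S_1,\ g_2 \in S_2 \}.
\]
I will quote that argument directly, since it was carried out for arbitrary $\tau_1,\tau_2$ with the given shapes. Because $(S_1,S_2)$ generates $S$, there are witnesses $\tau_1,\tau_2$ whose product has shape $S$. Applying the same fact to these witnesses gives $S = S^*$. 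Therefore the shape of $\tau_1' \times \tau_2'$ is $S^* = S$, which proves the claim.

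The only delicate point is to use the ``for every $\tau_1,\tau_2$'' form of \cref{lem:prop:tshapes-compute}, not merely its existence statement. If that form is not read off the lemma, I would re-derive it quickly: $g$ is satisfied by $\tau_1' \times \tau_2'$ iff some pair $(g_1,g_2) \in E(g)$ has $g_1$ satisfied by $\tau_1'$ and $g_2$ satisfied by $\tau_2'$. That condition is equivalent to $g_1 \in S_1$ and $g_2 \in S_2$ by the definition of shape.
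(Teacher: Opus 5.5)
Your proof is correct and matches the paper's argument: the first part just unfolds the definition of generation using $\tau=\tau_1\times\tau_2$, and the second part rests on \cref{lem:prop:tshapes-compute}. The paper gets the second part from uniqueness alone: the shape of $\tau_1'\times\tau_2'$ is, by definition, generated by $(S_1,S_2)$, so it must equal $S$. The ``for every'' form you are careful to extract from the lemma's proof therefore already follows from the lemma's statement.
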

\begin{proof}
	The first part follows directly from the definition of $(S_1,S_2)$ generates $S$ since $\tau = \tau_1 \times \tau_2$. The second part follows from \cref{lem:prop:tshapes-compute}. Indeed,  $S$ is the unique $t$-shape generated by $(S_1,S_2)$ and hence $\tau_1 \times \tau_2$ have shape $S$ by definition.
\end{proof}

We are now ready to prove the main theorem of this section:
\begin{theorem}
	\label{thm:prop:detTDD} Let $T$ be a vtree over variables set $X$. Let $C$ be a non-deterministic TDD of width $k$. One can construct a full TDD $C'$ of width at most $2^k$ computing the same function as $C$ in time $|C| \cdot 2^{O(k)}$.
\end{theorem}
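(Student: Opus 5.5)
We build $C'$ as a subset construction over the shapes of $C$: its $t$-nodes are the $t$-shapes of $C$, and \cref{lem:prop:tshapes-compute} gives the transitions. The construction goes bottom-up along $T$. For each node $t$ we compute the set $\mathcal{S}_t$ of all $t$-shapes of $C$ (shapes that are realized by some assignment), and we create one $t$-node $n_S$ in $C'$ for each $S \in \mathcal{S}_t$. This immediately bounds the width by $2^k$.

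At a leaf $t$ labeled by $x$, there are only two assignments. We compute $S_t(\ua[x][0])$ and $S_t(\ua[x][1])$ directly from the labels of the $t$-nodes.
\begin{itemize}
\item If the two shapes coincide, we create a single $t$-node labeled $1$.
\item Otherwise, we create one node labeled $\neg x$ and one labeled $x$.
\end{itemize}
Either way the leaf determinism condition holds.

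At an internal node $t$ with children $t_1,t_2$, we loop over all pairs $(S_1,S_2)\in\mathcal{S}_{t_1}\times\mathcal{S}_{t_2}$ and compute the generated shape $S$ with the formula of \cref{lem:prop:tshapes-compute}. We then put $(n_{S_1},n_{S_2})$ into $E(n_S)$. Setting $\mathcal{S}_t$ to be the set of shapes obtained this way is correct: by the first part of \cref{lem:prop:tshapes-complete}, every realized $t$-shape arises from the shapes of its two restrictions. Conversely, every generated shape is realized, because $S_1,S_2$ are realized by induction.

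Determinism of internal nodes is immediate, since each pair is assigned to exactly one $S$, namely the unique generated shape. The key invariant, proved by induction, is that $f_{n_S}$ has exactly the models $\tau\in 2^{X_t}$ with $S_t(\tau)=S$.
\begin{itemize}
\item If $\tau$ has shape $S$, its restrictions have shapes $S_1,S_2$, which by induction are models of $n_{S_1},n_{S_2}$, and $(S_1,S_2)$ generates $S$.
\item Conversely, if $\tau_1,\tau_2$ satisfy $n_{S_1},n_{S_2}$ with $(n_{S_1},n_{S_2})\in E(n_S)$, then by the second part of \cref{lem:prop:tshapes-complete} the assignment $\tau_1\times\tau_2$ has shape $S$.
\end{itemize}
Since every $\tau$ has some shape in $\mathcal{S}_t$, every $t$-node set covers all assignments of $X_t$, so $C'$ is full.

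For the output, at the root $r$ we note that $\tau$ is a model of $C$ exactly when $\out\in S_r(\tau)$. We merge all $n_S$ with $\out\in S$ into one output node whose input set is the union of their input sets. Determinism survives because those input sets were disjoint. We also merge the remaining $r$-nodes into a second node, which keeps $C'$ full, in the same spirit as \cref{prop:prop:tdd-neg}.

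For the running time, each internal node handles at most $2^k\cdot 2^k$ pairs. Computing the generated shape for one pair amounts to scanning $E(g)$ for each $t$-node $g$, so it costs $O(\sum_{g\in N_t}|E(g)|)$, or $\poly(k)$ using bitset representations. Summing over $T$ gives $|C|\cdot 2^{O(k)}$.

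The main obstacle I expect is bookkeeping rather than ideas. We need to restrict to realized shapes, which is what keeps fullness and the width bound clean, and we need the leaf case to match the syntactic determinism rule exactly, in particular the rule that a $1$-labeled node excludes literal nodes. The semantic core is already contained in the two shape lemmas.
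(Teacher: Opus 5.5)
Your proposal is correct and follows essentially the same route as the paper: one $t$-node per realized $t$-shape, with inputs given by the generated shape of \cref{lem:prop:tshapes-compute}, correctness via \cref{lem:prop:tshapes-complete}, determinism from the uniqueness of the generated shape, fullness because every assignment has a shape, and a cost of $2^{2k}\cdot\poly(k)$ per vtree node. You also handle the output explicitly by merging the root nodes $n_S$ with $\out\in S$, a step the paper's proof leaves implicit, and that step is handled correctly.
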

\begin{proof}
	As before we let $N_t$ be the set of $t$-nodes and for $S \subseteq N_t$, we let $f_S$ be the function whose models are assignments that satisfy exactly $S$.  We build $C'$ inductively in a bottom-up fashion. We maintain the following invariant: for every node $t$ of $T$ and $S \subseteq N_t$ such that $f_S \neq \emptyset$, $C'$ contains a $t$-node $v^t_S$ computing $f_S$.
	
	If $t$ is a leaf of $T$ labeled by $x$, then there are exactly two possible assignments: the one mapping $x$ to $1$ and the other mapping $x$ to $0$. If both assignments satisfy the same set $S \subseteq N_t$ of $t$-nodes, it means that $N_t$ only contains inputs labeled by $1$ or $0$. Hence we have exactly one $v^S_t$ node labeled by constant $1$. Otherwise, we have two possible sets $S_0,S_1 \subseteq N_t$, the one containing every $t$-node satisfied by $\tau_0 := \tup[x/0]$ and the one containing every $t$-node satisfied by $\tau_1 := \tup[x/1]$; that is, $S_0$ is the set containing every $t$-node labeled by $0$ or $\neg x$, and $S_1$ is the set containing every $t$-node labeled by $1$ or $x$.  We hence have two $t$-nodes $v_{S_0}^t$ and $v_{S_1}^t$ labeled by $\neg x$ and $x$
  respectively. Observe that it is compatible with the determinism conditions on inputs. 
	
	We now proceed by induction to show how to build the gate $v_S^t$ for an internal node $t$ of $T$ and $S \subseteq N_t$.  We define $E(v_S^t) := \{(v_{S_1}^{t_1}, v_{S_2}^{t_2}) \mid (S_1,S_2) \text{ generates } S\}$. By induction, every model of $v_S^t$ is of the form $\tau_1 \times \tau_2$ where $\tau_1$ has shape $S_1$, $\tau_2$ has shape $S_2$ and $(S_1,S_2)$ generates $S$. Hence by \cref{lem:prop:tshapes-complete}, $\tau_1 \times \tau_2$ has shape $S$. Moreover, if $\tau$ has shape $S$ then by \cref{lem:prop:tshapes-complete} again, $\tau = \tau_1 \times \tau_2$ where there exists some shapes $(S_1,S_2)$ generating $S$ and $\tau_1$ has shape $S_1$, $\tau_2$ has shape $S_2$. In other words, the models of $v_S^t$ are exactly the set of assignments of $2^{X_t}$ of shape $S$, that is, $v_S^t$ computes $f_S^t$.
	
	We now argue that we can actually build $E(v_S^t)$ efficiently for every $t$-shape $S$. To do so, we can enumerate every $t_1$-shape $S_1$ and every $t_2$-shape $S_2$. This can be done because we have computed them inductively. Now, we can compute the $t$-shape  $S$ generated by $S_1,S_2$ using \cref{lem:prop:tshapes-compute} by going over every node $g$ in $N_t$ and checking whether $E(g)$ contains a pair $(g_1,g_2)$ with $g_1 \in S_1, g_2 \in S_2$. This can be done in time $O(k^3)$ because $N_t$ contains at most $k$ node. Moreover, each $t$-node contains at most $k^2$ pairs since $N_{t_1}$ and $N_{t_2}$ also have size bounded by $k$. Now, we simply add $(v_{S_1}^{t_1}, v_{S_2}^{t_2})$ as an input of $v_S^t$.
	
	Since there are at most $2^k$ possible $S_1$ and $2^k$ possible $S_2$, the construction of every $t$-node of $C'$ takes time $O(k^32^{2k}) = 2^{O(k)}$ and we have to compute it for every internal node $t$ of $T$. We hence have a total time of $|C| \times 2^{O(k)}$.
	
	We argue that $C'$ is deterministic. The inputs of $C'$ are clearly deterministic since by construction, we cannot have a $1$-labeled gate and literals, hence $C'$ respects this constraint. Now, let $t$ be an internal node of $T$ with children $t_1,t_2$. Let $S_1$ be a $t_1$-shape and $S_2$ be a $t_2$-shape and let $S$ be the unique shape they generate. Hence the pair $(v_{S_1}^{t_1}, v_{S_2}^{t_2})$ will only be used as an input of $v_S^t$ and no other. That is, $C'$ is deterministic.
	
	Finally, we argue that $C'$ is full. Indeed, let $\tau \in 2^{X_t}$. By definition, $\tau$ has a unique shape $S$, which may possibly be empty. By construction, we have a gate $v_S^t$, hence $v_S^t$ is satisfied by $\tau$. In other words, every $\tau \in 2^{X_t}$ satisfies exactly one $t$-gate of $C'$, that is, $C'$ is full. 
	
\end{proof}

It may be surprising that the TDD constructed in \cref{thm:prop:detTDD} is full. This is because $S=\emptyset$ is a $t$-shape for every node $t$ of the vtree. The shape $S=\emptyset$ catches every assignment in $2^{X_t}$ that does not satisfy any $t$-node of $C$, which is exactly what it means for the circuit to be complete. Interestingly, running the algorithm from~\cref{thm:prop:detTDD} on a TDD will produce a full TDD in time $O(\poly(k) \cdot |C|)$. Indeed, the only possible shapes for TDD are singletons or empty sets. The procedure will add the empty set shape whenever it is missing and plug it accordingly with what is below, which is exactly what the procedure from~\cref{thm:prop:tdd-neg} does. 
\end{toappendix}

\section{Bottom up compilation}
\label{sec:prop:struct-cnf}

The tractability ($\wedge$BC) for TDDs gives a natural algorithm for compiling a CNF formula into a
TDD, whose pseudo-code is given in \cref{alg:prop:tdd-bottomup}. The idea is to order the clauses of $F$ as $c_1,\dots,c_m$, build a TDD $T_i$ computing $c_i$ for every $i \leq m$ and then, iteratively construct a TDD $C_i$ computing $F_i := c_1 \wedge \dots \wedge c_i$ by observing that $F_i = F_{i-1} \wedge c_i$, using the algorithm for bounded conjunction. In the worst case, we could have $|C_i| = |c_i| \cdot |C_{i-1}|$, leading to an exponential blow-up in the size of the circuit. To avoid this if possible, we minimize the circuit after each conjunction. The only missing piece here is the fact that we can efficiently construct a TDD given a clause $c$. This can be done with a TDD of width $2$. For every node $t$ of the vtree $T$, we have two $t$-nodes. One computes $c_t := c|_{\var{t}}$ and the other computes $d_t := (\neg c)|_{\var{t}}$. The circuit is constructed by induction using the fact that, if $t$ has children $t_1,t_2$ then $d_t = d_{t_1} \wedge d_{t_2}$ and $c_t = (c_{t_1} \wedge c_{t_2}) \vee (c_{t_1} \wedge d_{t_2}) \vee (d_{t_1} \wedge c_{t_2})$. 



This kind of algorithm is usually referred to as ``bottom-up compilation'' and has been used for OBDD~\cite{cudd} and SDD~\cite{Choi_Darwiche_2013}. In this section, we investigate the complexity of \cref{alg:prop:tdd-bottomup} depending on the structure of the input CNF formula. We recover in a clean and modular way the fact that CNF formulas having bounded primal or incidence treewidth have TDDs of FPT size~\cite{BovaCMS15} and are able to easily generalize to bounded treewidth circuits~\cite{BovaS17,AmarilliCMS20}. 

\begin{algorithm}[tp]
  \caption{Bottom-up compilation into TDD.}
  \textbf{Input:} A CNF formula $F = c_1 \wedge \dots \wedge c_m$, a vtree $T$ over $\var{F}$.\\
  \textbf{Output:} A TDD computing $F$ respecting $T$.
  \begin{algorithmic}[1]
    \Procedure{CNF-to-TDD}{$F$, $T$}
    \State $C \gets $ TDD computing $1$ respecting $T$
    \For{$i = 1$ to $m$}
    \State $D \gets $ TDD computing $c_i$
    \State $C \gets$ construct a TDD for $C \wedge D$
    \State $\mathsf{minimize}(C)$
    \EndFor\\
    \Return{$C$}
    \EndProcedure
  \end{algorithmic}
  \label{alg:prop:tdd-bottomup}
\end{algorithm}

To this end, we use the notion of \emph{factor width}~\cite{BovaS17}. Given a Boolean function $f$ and a vtree $T$ over $X$, \cref{thm:prop:tdd-lowerbound,thm:prop:tdd-min} allow us to prove that the size of the smallest TDD for $f$ respecting $T$ is equal to $\sum_t S_t$ where the sum is over every node $t$ of $T$ and $S_t$ is the number of non-trivial $X_t$-subfunctions of $f$. Hence, the \emph{factor width of $f$ with respect to $T$}, \fw{f}[T] for short, is defined as $\max_t S_t$ where the maximum is over all nodes $t$ of $T$~\cite{BovaS17}. From what precedes, we know that the smallest 
TDD computing $f$ and respecting $T$ has width $\fw{f}[T]$ and size $O(|X| \cdot \fw{f}[T])$ since $T$ has at most $2|X|-1$ nodes. Factor width thus provides a good proxy for the size of the smallest TDD computing $f$. We also define the \emph{factor width of $f$} to be $\min_T \fw{f}[T]$, where $T$ goes over every vtree over the variables $X$ (the definition of factor width from~\cite{BovaS17} allows vtrees over variables $Z \supseteq X$ but these extra variables do not change the value of $\fw{f}$). We slightly abuse notation and for a given CNF formula $F$ and a vtree $T$ over $\var{F}$, write $\fw{F}[T]$ to denote the factor width of the Boolean function represented by $F$ \wrt{} $T$. We can then bound the runtime of \cref{alg:prop:tdd-bottomup} as follows:
\begin{theorem}[$\star$]
  \label{thm:prop:tdd-bottomup-complexity} Given a CNF formula $F$, a vtree $T$ over variables $X$ and an order $c_1,\dots,c_m$ on the clauses of $F$, \cref{alg:prop:tdd-bottomup} runs in time $m \cdot |X| \cdot \poly(k)$ where $k = \max_{i=1}^m \fw{c_1 \wedge \dots \wedge c_i}[T]$.
\end{theorem}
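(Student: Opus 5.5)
The proof goes by induction on the loop iterations, tracking the width of the circuit $C$ held by \cref{alg:prop:tdd-bottomup}. The invariant is: after iteration $i$, $C$ is the canonical minimal TDD $m(C)$ for $F_i := c_1 \wedge \dots \wedge c_i$ respecting $T$. By \cref{thm:prop:tdd-min} and \cref{thm:prop:tdd-lowerbound}, this TDD has exactly $S_t$ $t$-nodes at each vtree node $t$, where $S_t$ is the number of non-trivial $X_t$-subfunctions of $F_i$. Its width is therefore $\fw{F_i}[T] \leq k$. Before the first iteration, $C$ computes the constant $1$. It can be taken with one $t$-node per vtree node, with a $1$-labeled leaf and the single pair as input, so its width is $1 \leq k$.

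For one iteration, fix $i$ and assume $C$ has width at most $k$. The TDD $D$ for $c_i$ is built as described before the algorithm, with two $t$-nodes per vtree node computing $c_t$ and $d_t$. It has width $2$ and size $O(|X|)$, and it can be constructed in time $O(|X|)$ by a bottom-up pass over $T$.

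\textbf{Product step.} I apply the product construction of \cref{prop:prop:tdd-and} to $C$ and $D$. The result has width at most $2k$. Its size is at most $|C|\cdot|D|$, but the relevant bound is the per-node one from that construction: at each vtree node $t$, the number of input pairs is at most $|E_C| \cdot |E_D|$ restricted to $t$, which is $O(k^2)\cdot O(1)$. So the product has size $O(|X| k^2)$ and is built in time $O(|X| k^2)$. The global bound $O(|C|\cdot|D|)$ would be too crude here, since it is quadratic in $|X|$. This is the main point needing care: I will reread the proof of \cref{prop:prop:tdd-and} as a local construction at each vtree node, with cost $O(\text{pairs of } C \text{ at } t \cdot \text{pairs of } D \text{ at } t)$, and sum over the at most $2|X|$ vtree nodes.

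\textbf{Minimization step.} By \cref{thm:prop:tdd-minization-complexity}, minimizing a TDD of width $2k$ takes time $\poly(2k)\cdot|X| = \poly(k)\cdot|X|$. The minimized circuit is $m(\cdot)$ of a TDD computing $F_i$. By \cref{thm:prop:tdd-min} it has exactly $S_t$ $t$-nodes, giving width $\fw{F_i}[T] \leq k$. This re-establishes the invariant.

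Each iteration thus costs $|X|\cdot\poly(k)$. Summing over the $m$ iterations gives the total running time $m\cdot|X|\cdot\poly(k)$.
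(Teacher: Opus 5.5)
Your proposal is correct and follows the paper's own argument. You keep $C$ minimal so that its width is $\fw{F_i}[T]\le k$ by \cref{thm:prop:tdd-min}, note that the product with the width-$2$ clause TDD has width at most $2k$, and charge $|X|\cdot\poly(k)$ to both the apply and the minimization step (the latter via \cref{thm:prop:tdd-minization-complexity}). Your per-vtree-node accounting of the product ($O(k^2)$ pairs per node $t$, hence $O(|X|k^2)$ in total) is a useful addition: the paper simply cites \cref{prop:prop:tdd-and}, whose stated global bound $O(|C_1|\cdot|C_2|)$ would actually be quadratic in $|X|$.
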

\begin{toappendix}
\begin{proof}[Proof of \cref{thm:prop:tdd-bottomup-complexity}]
At each iteration of the loop, constructing $D$ can be done in $O(|X|)$. Now, by \cref{prop:prop:tdd-and,thm:prop:tdd-minization-complexity}, both the apply and minimization run in $|X| \poly(k)$. Indeed, before the apply, $C$ is the minimal TDD computing $c_1\wedge \dots \wedge c_i$ which has width at most $k$ by \cref{thm:prop:tdd-min}. Since $D$ has width $2$, after the apply, $C$ has width $2k$. Hence, the minimization runs in $\poly(k)$.  The total complexity of \cref{alg:prop:tdd-bottomup} is then $m\cdot|X|\cdot \poly(k)$.
\end{proof}
\end{toappendix}
The proof of \cref{thm:prop:tdd-bottomup-complexity} is based on the fact that minimizing a TDD $C$ can be done in time $|X| \cdot \poly(w)$ where $w$ is the width of $C$. Similarly, computing a TDD for $C \wedge D$ can be done in time $|X| \cdot \poly(w)$.  The result follows from the fact that the width of every intermediate circuit built in the main loop of \cref{alg:prop:tdd-bottomup} will never exceed $k$. We now explore a few applications of \cref{thm:prop:tdd-bottomup-complexity}.

\textbf{Bounded primal and incidence treewidth.} CNF formulas of bounded primal or incidence treewidth have long been known to be tractable. It has long been known that SAT can be solved in time $2^{O(k)} \size{F}$. The earliest reference of this fact seems to be in a paper by Dantsin from 1979~\cite{dantsin1979parameters}, though it is not specifically stated with the treewidth terminology, later improved by Alekhnovich and Razborov~\cite{AlekhnovichR11, alekhnovich2002satisfiability}, where the result is expressed in terms of the equivalent branch-width measure, and Szeider~\cite{Szeider04}. The generalization for the tractability of \#SAT has first been observed by Sang, Bacchus, Beame, Kautz, and Pitassi in~\cite{sang04} and later generalized to the case of incidence treewidth by Samer and Szeider~\cite{SamerS10}. The existence of small d-DNNFs for such formulas is implicit in Darwiche's early contribution~\cite{Darwiche04} and explicit in a collaboration with Pipatsrisawat~\cite{PipatsrisawatD10} for primal treewidth. The case for incidence treewidth has been formally proven along more general results in~\cite{BovaCMS15}.
We revisit these results by showing that such formulas have small factor width. More precisely:

\begin{theorem}[$\star$]
  \label{thm:prop:ptw-vs-subw} Given a CNF formula $F$ and a tree decomposition $\calT$ of $\gprim$ of width $k$, we can construct a vtree $T$ over $\var{F}$ such that for every $F' \subseteq F$, $\fw{F'}[T] \leq 2^{k}$. 
\end{theorem}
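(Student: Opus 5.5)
The plan is to build $T$ from a tree decomposition that has one variable \emph{forgotten} per node, and then bound the number of non-trivial $X_t$-subfunctions of any $F' \subseteq F$ by the number of assignments to a bag.

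\textbf{Step 1: normalise the decomposition.} Root $\calT$ and transform it, without increasing its width, so that every variable $x$ has a unique highest node $t_x$ whose bag contains $x$; we say $x$ is \emph{forgotten} at $t_x$. Let $V_s$ be the set of variables whose bags lie entirely in the subtree rooted at node $s$ of $\calT$. These are exactly the variables forgotten at $s$ or below it. By connectivity, every variable of $V_s$ that has a neighbour outside $V_s$ in $\gprim$ lies in the bag $B_s$. If we also refine so that each node forgets at most one variable and has at most two children, then at a node $s$ the set $V_s$ is the disjoint union of the $V$-sets of its children together with the (at most one) variable forgotten at $s$.

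\textbf{Step 2: build the vtree.} Build $T$ bottom-up along $\calT$. At each node $s$, combine with binary internal nodes the vtrees of the children and the leaf of the variable forgotten at $s$, if any. This yields a vtree node $t(s)$ with $\var{t(s)} = V_s$. Call the vtree nodes of this form \emph{principal}. Every other vtree node $t$ is an intermediate combination of at most three pieces attached to some $s$. For such a $t$, $X_t$ is a union of some children's sets $V_{s'}$, possibly together with the forgotten variable. Its boundary, meaning the variables of $X_t$ that have a neighbour in $\gprim$ outside $X_t$, is still contained in $B_s$: it is contained in the union of the children's bags restricted to $X_t$, and the variable forgotten at $s$ belongs to $B_s$.

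\textbf{Step 3: count subfunctions.} Fix $F' \subseteq F$ and a vtree node $t$, and let $\tau_1,\tau_2 \in 2^{X_t}$ agree on the boundary $\partial_t := \{x\in X_t \mid x \text{ has a neighbour outside } X_t\}$. Every clause of $F'$ lies entirely within $X_t$, entirely within $X\setminus X_t$, or mixes the two. A mixing clause only touches variables of $\partial_t$ on the $X_t$ side, because any clause is a clique in $\gprim$, and $\gprim[F']$ is a subgraph of $\gprim$.

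\begin{itemize}
\item If both $\tau_1$ and $\tau_2$ satisfy the clauses inside $X_t$, then $F'[\tau_1] = F'[\tau_2]$, since conditioning on the boundary is all that matters for the remaining clauses.
\item If $\tau_i$ falsifies a clause inside $X_t$, then $F'[\tau_i]$ is trivial.
\end{itemize}

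Hence the non-trivial $X_t$-subfunctions are determined by $\tau|_{\partial_t}$, giving at most $2^{|\partial_t|}$ of them.

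\textbf{Main obstacle.} The hard part is making $|\partial_t| \le k$ rather than $k+1$. Since $|B_s|\le k+1$, the crude bound is only $2^{k+1}$. To save the extra factor, I would observe the following.
\begin{itemize}
\item If $t$ contains the variable $x$ forgotten at $s$, and $x$'s bags lie within the subtree of $s$, then $x\notin\partial_t$ whenever $t=t(s)$.
\item More generally, for $t = t(s)$ the boundary is contained in $B_s \cap B_{\mathrm{parent}(s)}$. Normalising the decomposition so that adjacent bags differ, i.e.\ making it nice so that each forget node has a bag of size $\le k+1$ and its parent intersection has size $\le k$, bounds this intersection by $k$.
\item For the intermediate nodes, I would arrange the combination order so that the forgotten variable is added last. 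Then the intermediate unions of children only have boundary inside $\bigcup_{s'} (B_{s'} \cap B_s)$. With nice decompositions (join nodes have equal bags, forget nodes shrink the bag), this is again a subset of a bag minus at least one element, or it equals a child's boundary, which is bounded by $k$ by induction.
\end{itemize}
Checking these cases carefully is where I expect the work to lie.
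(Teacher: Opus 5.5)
The counting in your Step~3 is sound: there are at most $2^{|\partial_t|}$ non-trivial $X_t$-subfunctions. The gap is that the bound on $\partial_t$ rests on a false claim in Step~1.

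Take $V_s$, the variables all of whose bags lie below $s$. Connectivity gives a different inclusion from the one you state. If $y\in V_s$ is adjacent to $z\notin V_s$, the bag containing $\{y,z\}$ is below $s$ while $z$ also occurs outside, hence $z\in B_s\cap B_{\mathrm{parent}(s)}$. So it is the \emph{outer} neighbourhood of $V_s$ that lies in the bag; $y$ itself may have been forgotten far below $s$. The intermediate-node claim of Step~2 and both bullets of your last step inherit this error. For instance, $x\notin\partial_{t(s)}$ fails as soon as $x$ has a neighbour in $B_s\cap B_{\mathrm{parent}(s)}$.

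This breaks the construction itself, not just the constant. Take $S=\{z_1,\dots,z_k\}$ and a variable $y_\alpha$ for each $\alpha\in 2^S$. Use the clauses $y_\alpha\vee D_\alpha$, where $D_\alpha$ is the clause over $S$ falsified exactly by $\alpha$. Use the decomposition with a bag $S$ whose children are the bags $S\cup\{y_\alpha\}$; it has width $k$.
\begin{itemize}
\item The $z_i$ occur in every branch, so in your normalised decomposition they are forgotten only at or above the topmost node joining these branches.
\item One of that node's two children therefore yields a vtree node $t$ with $X_t=Y'\subseteq Y$, $Y'\neq Y$, $|Y'|\ge 2^{k-1}$, and no $z_i$.
\item For $\tau\in 2^{Y'}$, fix the remaining $y_\alpha$ to $1$. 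The models of $F[\tau]$ on $S$ are then exactly the $\beta$ with $y_\beta\notin Y'$ or $\tau(y_\beta)=1$.
\item Hence the $2^{|Y'|}\ge 2^{2^{k-1}}$ subfunctions $F[\tau]$ are pairwise distinct and all non-trivial, which is doubly exponential in $k$.
\end{itemize}
With your ordering (forgotten variable added last), even $k=1$ fails. For $(y_0\vee z)\wedge(y_1\vee\neg z)$ with root bag $\{z\}$, the node $\{y_0,y_1\}$ has three non-trivial subfunctions: $1$, $z$ and $\neg z$.

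What connectivity does give is that the inner boundary of $B_{\le t}$, the set of variables \emph{occurring} in some bag below $t$, lies in $B_t$. This is what the paper's appendix proof uses. These sets overlap between siblings, so they cannot all be vtree nodes. The paper's vtree attaches the leaf for $x$ just above the highest node containing $x$, so it actually realizes sets $B_{\le t}\setminus U$ with $U\subseteq B_t$. These are forget-based sets like yours.

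The paper's assertion that these sets still have at most $2^k$ subfunctions because they ``only depend on $B_t$'' is exactly what the example refutes. Take $t$ the node with bag $S$ and $U=S$: an assignment to $Y$ does not fix $S$, and which extensions to $S$ survive depends on the whole assignment. So following the paper will not close your gap.

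From the outer-boundary inclusion, your Step~3 yields only a doubly exponential bound. A non-trivial subfunction is the conjunction of the clauses of $F'$ not meeting $X_t$ with some Boolean function of the at most $k+1$ variables of the relevant bag. There are therefore at most $2^{2^{k+1}}$ of them.

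A single-exponential bound would need a vtree in which no node contains many neighbours of a separator while missing the separator itself. For the example, a caterpillar with $S$ at the bottom works. You would also need a proof that such a vtree can always be extracted from the decomposition. That is the missing idea, and I do not see how to supply it in general.
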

\begin{toappendix}
\begin{proof}[Proof of \cref{thm:prop:ptw-vs-subw}]
  Let $\calT$ be a tree decomposition of $\gprim$ of width $k$. Before describing how to build a vtree, we give an intuition on why $F$ has small factor width. Let $t$ be a node of $T$ and let $B_{\leq t}$ be the set of variables appearing in a bag below $t$ in $\calT$. Let $c$ be a clause of $F$ such that $\var{c} \cap B_{\leq t} \neq \emptyset$. We claim that in this case, either $\var{c} \subseteq B_{\leq t}$ or $\var{c} \cap B_{\leq t} \subseteq B_t$. Indeed, assume that $\var{c} \not \subseteq B_{\leq t}$. That is, $c$ has a variable $y \notin B_{\leq t}$. Let $x \in B_{\leq t} \cap \var{c}$. The edge $\{x,y\}$ is an edge of $\gprim$ and is hence covered in some bag of $\calT$. Since $y \notin B_{\leq t}$, the bag covering $\{x,y\}$ is not below $t$. Hence, by connectivity, $x \in B_t$. This proves that $\var{c} \cap B_{\leq t} \subseteq B_t$.

  Now let $\tau \in 2^{B_{\leq t}}$. We claim that $F[\tau]$ only depends on the value of $\tau$ over $B_t$. Indeed, either $F[\tau]$ contains the empty clause and hence it computes the $0$-constant function. Otherwise, the clauses in $F$ not satisfied by $\tau$ are either clauses of $F$ with no variable in $B_{\leq t}$ or clauses of $F$ having at least one variable in $B_{\leq t}$ and one variable outside $B_{\leq t}$ (clauses having all their variables in $B_{\leq t}$ are satisfied by $\tau$ because $F[\tau]$ does not contain the empty clause). In other words, if $\tau$ and $\tau'$ are such that neither $F[\tau]$ nor $F[\tau']$ contains the empty clause and $\tau_{|B_t} = \tau'_{|B_t}$, then $F[\tau] = F[\tau']$. Hence they define the same subfunction. Since $|B_t| \leq k$, there are at most $2^k$ such subfunctions.

      We now explain how to construct the vtree $T$ from the lemma statement. We build $T$ such that for each internal node $t'$ of $T$, $X_{t'}$ is of the form $B_{\leq t} \setminus U$ for some node $t$ of $\calT$ and $U \subseteq B_t$.  From what precedes, the number of $X_{t'}$-subfunctions of $F$ will be at most $2^{k}$, since they will only depend on the value of an assignment over variables $B_t$. 

      To do so, observe that for each variable $x$ of $F$, we can find a unique node $t(x)$ in $T$ such that $x \in B_{t(x)}$ and $x$ is not in $X \setminus B_{\leq t(x)}$. In other words, $t(x)$ is the shallowest node of $T$ in which $x$ appears in. We build $T'$ by starting from $\calT$ and attaching a new leaf labeled by $x$ on the edge between $t(x)$ and its parent. We then make the tree binary by replacing nodes with $d$ children with binary tree having $d$ leaves and contract nodes with one child with their child. It is easy to see by construction that $T$ has the required property and that the factor width of $F$ wrt $T$ is at most $2^k$.

      Finally, observe that if $F' \subseteq F$, then $\calT$ is still a tree decomposition of $\gprim[F']$. Now since the vtree $T$ previously constructed only depends on $\calT$ and not on $F$, we have, by the same reasoning as above, that the factor width of $F'$ with respect to $T$ is at most $2^k$. 
\end{proof}
\end{toappendix}

\begin{theorem}[$\star$]
  \label{thm:prop:itw-vs-subw} Given a CNF formula $F$ and a tree decomposition $\calT$ of $\ginc$ of width $k$, we can construct a vtree $T$ over $\var{F}$ such that for every $F' \subseteq F$, $\fw{F'}[T] \leq 2^{k}$. 
\end{theorem}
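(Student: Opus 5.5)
Following the proof of \cref{thm:prop:ptw-vs-subw}, I would build the vtree from the decomposition and show that each $X_{t'}$-subfunction of $F'$ is determined by at most $k$ bits. Let $\calT$ be a tree decomposition of $\ginc$ of width $k$, rooted arbitrarily, so every bag has at most $k+1$ vertices, each a variable or a clause. For a node $t$ of $\calT$, write $V_{\leq t}$ for the vertices in bags below $t$. For each variable $x$, let $t(x)$ be the shallowest node whose bag contains $x$. I build $T$ as before: attach a fresh leaf labelled $x$ on the edge from $t(x)$ to its parent (above the root for the root), then binarize and contract unary nodes. Clause vertices get no leaves. Every internal node $t'$ of $T$ then satisfies $X_{t'} = (V_{\leq t} \cap X) \setminus U$ for some node $t$ of $\calT$ and some $U \subseteq B_t \cap X$. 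The vtree depends only on $\calT$, and $\calT$ remains a tree decomposition of $\ginc[F']$ (restricted to the vertices of $F'$) for every $F' \subseteq F$. It therefore suffices to bound the number of non-trivial $Y$-subfunctions of $F'$ for sets $Y = (V_{\leq t}\cap X)\setminus U$.

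The counting argument runs as follows. Fix such a $Y$ and $t$, and let $\tau \in 2^Y$ with $F'[\tau]$ non-trivial. Consider any clause $c$ of $F'$ that contains both a variable in $Y$ and a variable outside $Y$.
\begin{itemize}
\item If $c$ has a variable $y \notin V_{\leq t}$ and a variable $x \in V_{\leq t}$, connectivity of the edge $\{x,c\}$ and the edge $\{y,c\}$ forces $c \in B_t$, or $x \in B_t$.
\item More carefully: if $c \notin B_t$, then either all bags containing $c$ lie below $t$, or none do. In the first case all variables of $c$ lie in $V_{\leq t}$. In the second case $c$ has no variable in $V_{\leq t}\setminus B_t$, so $\var{c} \cap Y \subseteq B_t$.
\end{itemize}
So there are three kinds of clauses to handle.
\begin{itemize}
\item \emph{Crossing clauses $c \in B_t$.} Here $\tau$ only contributes whether $c$ is already satisfied, which is one bit per clause.
\item \emph{Clauses entirely inside $V_{\leq t}$.} Their variables lie in $Y \cup U$ with $U \subseteq B_t$. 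Those meeting $U$ must be handled through the values on $U$.
\item \emph{Crossing clauses not in $B_t$.} Their $Y$-variables lie in $B_t$, so they depend on $\tau|_{B_t \cap Y}$.
\end{itemize}

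The main obstacle is getting the bound $2^k$ rather than $2^{k+1}$ or $2^{2(k+1)}$ when mixing variable bits and clause bits. I would argue that $F'[\tau]$ is determined by two pieces of data. The first is $\tau|_{B_t\cap Y}$, one bit per variable of $B_t \cap Y$. The second is, for each clause $c \in B_t$, whether $\tau$ satisfies $c$, one bit per clause of $B_t$. Clauses fully inside $Y$ must be satisfied by non-triviality. Clauses touching both $Y$ and $U$ but not in $B_t$ are of the third kind, since then $\var{c}\cap Y\subseteq B_t$. Note that a variable $x \in B_t\cap Y$ only matters through the clauses outside $B_t$ that contain it. This gives at most $2^{|B_t|} \leq 2^{k+1}$ subfunctions.

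To shave the last bit, I would first try the standard trick of modifying $\calT$. Make it nice and pick, for each vtree node, a decomposition node whose bag has at most $k$ elements relevant to the cut; for instance, attach leaf $x$ at a node where an introduce/forget step leaves room. Alternatively, I would observe that if $|B_t| = k+1$ then some element of the bag is irrelevant to the cut. If tightening proves hard, I would settle for $2^{k+1}$ and check whether the paper's convention (e.g. width defined as $\max|B_t|$) makes that match the stated $2^k$.
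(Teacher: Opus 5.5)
Your plan is the paper's plan: the same vtree (a leaf for $x$ just above the shallowest bag containing $x$), the same reduction to sets $Y=(V_{\le t}\cap X)\setminus U$ with $U\subseteq B_t$, and the same bag-based count. The counting step is false, and by far more than one bit. The faulty claim is that a clause $c\notin B_t$ meeting both $Y$ and $U$ has $\var{c}\cap Y\subseteq B_t$. Your own dichotomy already shows why this fails. If every bag containing $c$ lies strictly below $t$, nothing forces its $Y$-variables into $B_t$. Since $\tau\in 2^Y$ does not assign $U$, such clauses also cannot be ``handled through the values on $U$''. Each one leaves a residual $c|_U$, and whether that residual survives is decided by deep variables.

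Here is a concrete counterexample. Let $U=\{u_1,\dots,u_k\}$. For each $\alpha\in 2^U$, let $c_\alpha$ be the clause on $U$ falsified exactly by $\alpha$, extended by a fresh variable $y_\alpha$. The decomposition with root bag $U$, children $U\cup\{c_\alpha\}$ and grandchildren $\{c_\alpha,y_\alpha\}$ has width $k$. Your construction puts every $u_i$ above the root, so the root yields a vtree node with $Y=\{y_\alpha:\alpha\in 2^U\}=V_{\le r}\setminus U$. For $\tau\in 2^Y$, $F[\tau]$ is the function on $U$ whose non-models are exactly $\{\alpha:\tau(y_\alpha)=0\}$. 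This gives $2^{2^k}-1$ non-trivial $Y$-subfunctions. Your data, namely $\tau|_{B_r\cap Y}$ and the satisfaction bits of clauses in $B_r$, is empty here and would predict one. What your case analysis actually supports is a bound like $2^{|B_t|}\cdot 2^{3^{|U|}}$, which is doubly exponential in $k$.

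Following the paper's proof will not fill this in. It establishes the bound only for the sets $V_{\le t}$ themselves, where $U=\emptyset$ and your dichotomy does put all crossing information into $B_t$. It then simply asserts that vtree nodes of the form $V_{\le t}\setminus U$ inherit the bound, which is the same leap. A correct argument needs a different vtree, one that never separates a bag variable from clause-neighbours lying deep below it. In the example, a caterpillar vtree that starts with $U$ and then adds the $y_\alpha$ one at a time keeps every node at no more than $2^k+1$ non-trivial subfunctions. So this is not a local patch.

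Two smaller points:
\begin{itemize}
\item You are right that bags have up to $k+1$ elements, so even for $V_{\le t}$ the count is $2^{k+1}$; the paper's ``$|B_t|\le k$'' is a slip. Your ideas for shaving the last bit are not yet an argument.
\item Vtree nodes created when binarizing a bag with three or more children cover only some of the children, so they are not of the form $V_{\le t}\setminus U$. Binarize $\calT$ first by duplicating bags.
\end{itemize}
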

\begin{toappendix}
\begin{proof}[Proof of \cref{thm:prop:itw-vs-subw}]
  The proof is similar. Consider a tree decomposition $T$ of $\ginc$ and a node $t$ of $T$. Let $V_{\leq t} = B_{\leq t} \cap X$, the variables appearing below $t$. Let $c$ be a clause of $F$ that has variables both in $V_{\leq t}$ and in $X \setminus V_{\leq t}$. We claim that, in this case, $c \in B_t$ or $\var{c} \cap V_{\leq t} \subseteq B_t$. Let $x \in \var{c} \cap V_{\leq t}$. By definition, $c$ contains a variable $y \in X \setminus V_{\leq t}$. Now, both edges $\{y,c\}$ and $\{x,c\}$ of $\ginc$ have to be covered in some bag of $T$. The bag covering $\{y,c\}$ cannot be below $t$ because $y$ is not in $V_{\leq t}$. Hence $c$ appears somewhere not below $t$. If it appears below $t$, then $c \in B_t$ by connectedness. Otherwise, $c$ does not appear below $t$ and hence, the edge $\{x,c\}$ is not covered below $t$. But since $x$ appears below $t$, by connectedness, it has to appear in $B_t$ too.   

  Now, fix $\tau \in 2^{V_{\leq t}}$, and consider $F[\tau]$. If it contains the empty clause, then it is the $0$-constant function. Otherwise, the remaining clauses only depend on the value of $\tau$ over $B_t$ and can possibly contain any subset of clauses appearing in $B_t$. Hence, there are at most $2^k$ possible CNF formulas for $F[\tau]$. In other words, $F$ has at most $2^k$ $V_{\leq t}$-subfunctions.
  
  The construction of the vtree $T'$ is similar to the one in \cref{thm:prop:ptw-vs-subw} to ensure that for every node $t'$ of $T'$, $X_{t'}$ is of the form $V_{\leq t} \setminus U$ for some $U \subseteq B_t$, which concludes the proof. 
\end{proof}
\end{toappendix}

We give an intuition on the proof of \cref{thm:prop:ptw-vs-subw}. For a node $t$ of $\calT$, let $Y_t$ be the set of variables of $F$ appearing in a bag below $t$. We claim that $F$ has at most $2^k$ $Y_t$-subfunctions. Indeed, assume that a clause $c$ of $F$ has variables both in $Y_t$ and in $X \setminus Y_t$. Then we must have  $\var{c} \cap Y_t \subseteq B_t$, where $B_t$ is the bag at node $t$ of $\calT$.
Let $\tau \in 2^{Y_t}$. Remove from $F[\tau]$ every clause already satisfied. Then $F[\tau]$ contains either clauses without variables in $Y_t$ and clauses having both in $Y_t$ and in $X \setminus Y_t$. From what precedes, $\var{c} \cap Y_t \subseteq B_t$, hence $F[\tau] = F[\tau|_{B_t}]$. Hence there are at most $2^{|B_t|} \leq 2^k$ $Y_t$-subfunctions. This proof works for every $F' \subseteq F$. It remains to build a vtree which induces roughly the same partitions as $Y_t$, which we explain in the appendix.

The case of incidence treewidth is very similar.  In this case however, a $Y_t$-subfunction induced by an assignment $\tau \in 2^{Y_t}$ is completely defined by the subset of clauses in $B_t$ that are satisfied by $\tau$ and by the value of $\tau$ in $B_t \cap X$. It still gives at most $2^k$ $Y_t$-subfunctions. 

The new connection established by \cref{thm:prop:ptw-vs-subw,thm:prop:itw-vs-subw} allows us to nicely recover the tractability results discussed before. Indeed, it is straightforward to see that if a CNF formula has primal or incidence treewidth $k$, then every sub-formula of $F$ has treewidth at most $k$. Hence, the bottom-up compilation to TDD from \cref{alg:prop:tdd-bottomup} runs in time $\poly(2^k) \cdot mn = 2^{O(k)} \cdot mn$ on a formula with $n$ variables, $m$ clauses and of primal or incidence treewidth $k$ by \cref{thm:prop:tdd-bottomup-complexity}, as long as we start from the vtree given by \cref{thm:prop:ptw-vs-subw,thm:prop:itw-vs-subw}.

\begin{theorem}
  \label{thm:prop:dtdd-tw}
Given a CNF formula $F$ of primal or incidence treewidth $k$, one can construct a TDD of width at most $2^k$ computing $F$ in time $2^{O(k)} \cdot mn$. 
\end{theorem}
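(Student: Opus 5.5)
The plan is to combine the structural bound of \cref{thm:prop:ptw-vs-subw,thm:prop:itw-vs-subw} with the runtime analysis of \cref{thm:prop:tdd-bottomup-complexity}. First, since $F$ has primal (resp.\ incidence) treewidth $k$, we need a tree decomposition $\calT$ of $\gprim$ (resp.\ $\ginc$) of width $O(k)$. We take one of width exactly $k$ if it is given. Otherwise we compute one of width $O(k)$ in time $2^{O(k)}\cdot n$ (resp.\ $2^{O(k)}\cdot \size{F}$) using a standard FPT approximation algorithm for treewidth. Any constant-factor loss in the width only changes constants hidden in $2^{O(k)}$. For the exact width bound $2^k$ in the statement, we assume a width-$k$ decomposition is available, e.g.\ via Bodlaender's algorithm in time $2^{O(k^3)}n$, or simply as part of the input.

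From $\calT$ we build the vtree $T$ given by \cref{thm:prop:ptw-vs-subw} (resp.\ \cref{thm:prop:itw-vs-subw}). This construction is linear in the size of $\calT$: attach each variable as a leaf above its shallowest bag, then binarize and contract unary nodes. The key point is that the guarantee holds for \emph{every} $F'\subseteq F$. In particular it holds for every prefix $F_i = c_1\wedge\dots\wedge c_i$, whatever clause order we choose. Hence $k' := \max_i \fw{F_i}[T] \le 2^k$.

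Next we run \cref{alg:prop:tdd-bottomup} on $F$ and $T$ with an arbitrary clause order. By \cref{thm:prop:tdd-bottomup-complexity} this takes time $m\cdot n\cdot \poly(k') = m n \cdot \poly(2^k) = 2^{O(k)}\cdot mn$. The output is the minimized TDD $m(C)$ for $F$. By \cref{thm:prop:tdd-min} it has exactly $S_t$ $t$-nodes at each vtree node $t$, so its width is $\fw{F}[T]\le 2^k$.

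The only delicate point is the width claim inside \cref{thm:prop:tdd-bottomup-complexity}. After each apply with the width-$2$ clause TDD, the width is at most $2k'$, and each minimization brings it back to $\fw{F_i}[T]\le k'$. This is exactly why the hereditary ``for every $F'\subseteq F$'' form of the structural theorems is needed. Another point requiring care is the cost of obtaining the decomposition; we handle it as above and do not count it in the $2^{O(k)}mn$ bound when the decomposition is supplied.
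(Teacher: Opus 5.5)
Your proposal is correct and follows the paper's own argument. It applies the hereditary bound $\fw{F'}[T]\le 2^k$ from \cref{thm:prop:ptw-vs-subw,thm:prop:itw-vs-subw} to every prefix $c_1\wedge\dots\wedge c_i$, combines it with \cref{thm:prop:tdd-bottomup-complexity}, and uses \cref{thm:prop:tdd-min} for the width of the minimized output. Your remark on the cost of obtaining the decomposition is a fair caveat that the paper glosses over by implicitly assuming the decomposition, and hence the vtree, is given: exact width $k$ via Bodlaender costs $2^{O(k^3)}n$, while a $2^{O(k)}$-time approximation only gives width $2^{O(k)}$.
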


\cref{alg:prop:tdd-bottomup} gives a conceptually simpler algorithm than the bottom-up dynamic programming on tree decomposition from~\cite{SamerS10} and serves as a nice example of the power of TDDs and minimization. 
The complexity of this approach is however not as good as earlier work where the dependency on the size of the CNF formula is linear. Maybe it can be fixed by minimizing the circuits while computing $C \wedge D$ in \cref{alg:prop:tdd-bottomup} and to have a dedicated algorithm to compute $C \wedge D$ in the case where $D$ represents a clause but we leave this question for further investigation.



\textbf{Circuit treewidth.}
%
Another interesting application of \cref{alg:prop:tdd-bottomup} is related to the compilation of bounded treewidth Boolean circuits, which can be seen as a generalization of incidence treewidth. The treewidth of a Boolean circuit $C$ is defined as the treewidth of its underlying graph. For the notion to make sense, one needs to first assume that for any variable $x \in X$, there is at most one input of the circuit labeled by $x$. From~\cite{BovaS17}, we know that the factor width of a Boolean circuit of treewidth $k$ is bounded by $2^{2^{O(k)}}$. We improve to $3^{k+2}$, getting a single exponential in $k$ and show that bottom-up compilation can be used to recover a result from~\cite{AmarilliCMS20} showing that bounded treewidth circuit can be compiled into structured d-DNNF of size $2^{O(k)}|C|$.

For a Boolean circuit, a \emph{subcircuit}  $C'$ of $C$ is a subset of nodes and edges of $C$ forming a valid Boolean circuit (that is, its inputs are labeled with variables). 
\begin{theorem}[$\star$]
  \label{thm:prop:ctw-sfw} Let $C$ be a Boolean circuit over variables $X$ and let $\calT$ be a tree decomposition of $C$ of treewidth $k$. We can construct a vtree $T$ such that for every subcircuit $C'$ of $C$ computing a function $f'$,  we have $\fw{f'}[T] \leq 3^{k+2}$.
\end{theorem}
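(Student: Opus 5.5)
We follow the same strategy as in \cref{thm:prop:ptw-vs-subw,thm:prop:itw-vs-subw}. First, we bound the number of subfunctions induced by the set of variables below a node of $\calT$. Then we transfer this bound to a vtree built from $\calT$ in the same way as before. Without loss of generality we assume $\calT$ is rooted and nice enough, and we identify each variable $x$ with the unique input gate of $C$ labeled by $x$. For a node $t$ of $\calT$, let $G_{\leq t}$ be the set of gates of $C$ appearing in a bag below $t$, and let $Y_t = G_{\leq t} \cap X$.

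\textbf{Key counting step.} Fix a subcircuit $C'$ of $C$ computing $f'$, a node $t$ of $\calT$, and an assignment $\tau \in 2^{Y_t}$. By connectivity of $\calT$, every wire of $C$ between a gate in $G_{\leq t}$ and a gate outside $G_{\leq t}$ has its $G_{\leq t}$-endpoint in $B_t$. So $B_t$ separates the inner gates from the outer ones. We classify the gates $g \in B_t$ of $C'$ into three kinds:
\begin{itemize}
\item $g \in G_{\leq t}$ and all inputs of $g$ lie in $G_{\leq t}$: its value is fully determined by $\tau$.
\item $g$ has inputs outside $G_{\leq t}$: it is a partially evaluated gate, and its residual behaviour depends on $\tau$ only through what $\tau$ forces on its inner inputs.
\item $g$ is an outer gate: it is untouched by $\tau$.
\end{itemize}
We then assign to each gate $g \in B_t$ a \emph{state} in a 3-element set $\{0,1,\ast\}$. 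For an AND (resp.\ OR) gate whose inner inputs already produce $0$ (resp.\ $1$), the state is that constant. Otherwise it is $\ast$, meaning the gate's value is left to be determined by its outer inputs; for a fully inner gate it is simply the gate's value. Negation gates copy the state of their input, so three states suffice. We then prove the following claim: $f'[\tau]$ is determined by the vector of states of the gates in $B_t$. Concretely, $f'[\tau]$ is obtained by evaluating $C'$ with each $B_t$-gate whose state is constant replaced by that constant, and each $\ast$-gate of type AND/OR having its inner inputs dropped. This gives at most $3^{|B_t|} \leq 3^{k+1}$ subfunctions per node $t$.

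\textbf{Vtree and extra factor.} We build $T$ exactly as in the proof of \cref{thm:prop:ptw-vs-subw}. We attach a leaf for $x$ on the edge above the shallowest node containing $x$, binarize, and contract unary nodes. Each node $t'$ of $T$ then satisfies $X_{t'} = Y_t \setminus U$ for some $U \subseteq B_t$, or lies within the binarization gadget of a single node $t$. For such a set, an assignment to $X_{t'}$ is an assignment to $Y_t$ with the variables of $U$ left free. Those variables are themselves gates of $B_t$, so their state is $\ast$ (unassigned) or a constant. The same state-vector argument therefore applies, with possibly one additional gate from the child bag in the binarization gadget. This accounts for the extra factor of $3$ and yields the bound $3^{k+2}$. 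Since $T$ depends only on $\calT$, the bound holds for every subcircuit $C'$ simultaneously.

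\textbf{Main obstacle.} The main obstacle is making the state abstraction rigorous. The difficulty is showing that two assignments with the same state vector on $B_t$ induce the same residual function. We will do this by induction over the gates of $C'$ in topological order, showing that each outer gate computes the same function of $X \setminus Y_t$ under both assignments. The separator property guarantees that outer gates only see inner information through $B_t$. We also need to handle the binarization nodes carefully so that the exponent stays at $k+2$.
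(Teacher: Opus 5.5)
Your central claim fails: $f'[\tau]$ is not determined by the $\{0,1,\ast\}$-states of the gates of $B_t$, and the classification behind it is also wrong. The separator property only forbids wires between gates lying strictly below $t$ and gates outside $G_{\leq t}$; it does not orient them. A value computed from variables outside $Y_t$ can enter a gate of $B_t$, go down into a gate strictly below $t$, combine there with variables of $Y_t$, and come back up into $B_t$. So a gate all of whose inputs lie in $G_{\leq t}$ is in general not determined by $\tau$.

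Concretely, take inputs $x,y,y'$, gates $b=\neg y$, $d=\neg y'$, $a=x\wedge b$ and output $c=a\vee d$. Use the tree decomposition with $B_t=\{b,c,d\}$, a child bag $\{a,b,c,x\}$ below $t$, and above $t$ a copy of $\{b,c,d\}$ carrying the bags $\{b,y,c\}$ and $\{d,y',c\}$. Then $Y_t=\{x\}$, and under both $x\mapsto 1$ and $x\mapsto 0$ all three gates of $B_t$ are in state $\ast$. Yet $f[x\mapsto 1]=\neg y\vee\neg y'$ while $f[x\mapsto 0]=\neg y'$. Your recipe of dropping the inner inputs of $\ast$-gates would discard $a$, whose value still depends on $y$. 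Your planned topological induction breaks at the same point, because the $\ast$-gate $c$ has different residual functions under the two assignments.

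This is not only a matter of rigor: for sets of the form $Y_t$ the count itself can be doubly exponential. Let $f=\bigwedge_{j}(\neg x_j\vee C_j)$, where $C_j$ ranges over the $2^m$ full clauses on $y_1,\dots,y_m$. Put the gates $\neg y_i$, $\neg\neg y_i$ and the output into $B_t$. Give each $j$ a private branch below $t$ containing $x_j$ and the gates for $\neg x_j$, $C_j$, $\neg x_j\vee C_j$, and hang the inputs $y_i$ above $t$. The width is $O(m)$, but $f[\tau]=\bigwedge_{j:\tau(x_j)=1}C_j$ takes every function of $y$. The vtree you build from this rooted $\calT$ therefore has a node with variable set $\{x_1,\dots,x_{2^m}\}$ and $2^{2^m}-1$ non-trivial subfunctions there. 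So any correct argument must either choose the vtree differently or settle for a weaker bound. What the part below $t$ genuinely contributes is a set of consistent (suitably annotated) valuations of $B_t$, which yields only a doubly exponential bound.

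Following the paper will not rescue you. It uses the same partial values $val(v,\tau)\in\{0,1,\bot\}$ on $B_t$. Its $k+2$ comes from adding the output gate to every bag, which you also omit, not from a binarization gadget. It then argues via a deepest bag gate $v_0$ on which $\tau\times\sigma$ and $\nu\times\sigma$ disagree. But when the bag gate it finds on the concatenated path lies on $P_2$, that gate is only known to be $\bot$ under both partial assignments, not to disagree, so the minimality of $v_0$ is not contradicted. In the example above, that gate is $b$.
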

\begin{toappendix}
\begin{proof}[Proof of \cref{thm:prop:ctw-sfw}]
  To explain our approach, we need a few definitions first. Given a partial assignment $\tau \in 2^Y$ for some $Y \subseteq X$ and a gate $v$ of $C$, we define \emph{the value of $v$ under $\tau$}, denoted by $val(v,\tau)$, as a value in $\{0,1,\bot\}$. Intuitively, $val(v,\tau)$ is the value of $v$ under the partial evaluation, if it is determined by it. Otherwise, $val(v,\tau) = \bot$, meaning that its value is still undefined. A gate may have inputs whose value is undefined but still have a fixed value: for example, it is enough for a $\wedge$-gate $v$ to have one input with value $0$ to force the value of $v$ to $0$. More formally, we define $val$ inductively as follows: if $v$ is an input gate labeled by $y \in Y$, then $val(v,\tau) = \tau(y)$. If $y \notin Y$, then $val(v,\tau) = \bot$. If $v$ is a negation gate with input $w$ and $val(w,\tau)=\bot$, then $val(v,\tau) = \bot$. Otherwise, $val(v,\tau) = 1 - val(w,\tau)$. If $v$ is an $\vee$-gate with input $w_1,\dots,w_k$ then: $val(v,\tau)=1$ if  $val(w_i,\tau) = 1$ for some $i \leq k$, $val(v,\tau) = 0$, if $val(w_i,\tau) = 0$ for every $i \leq k$, and $val(v,\tau)=\bot$ otherwise. If $v$ is a $\wedge$-gate with input $w_1,\dots,w_k$ then: $val(v,\tau)=0$ if  $val(w_i,\tau) = 0$ for some $i \leq k$, $val(v,\tau) = 1$, if $val(w_i,\tau) = 1$ for every $i \leq k$, and $val(v,\tau)=\bot$ otherwise.

We first construct a vtree $T$ such that  $\fw{C}[T] \leq 3^{k+2}$, and then explain how the same vtree works for every subcircuit $C'$.  Since the treewidth of $C$ is $k$, there must exist a tree decomposition of $C$ in which every bag has size at most $k+1$. Let $\calT$ be the tree decomposition that we get from an optimal tree decomposition by adding the output $o$ of $C$ to every bag. Obviously, $\calT$ is a tree decomposition of $C$ of width at most $k+1$. Given a node $t$ of $\calT$, we denote by $X_{\leq t}$ the set of variables $x$ such that the input gate labeled by $x$ of $C$ appears in a bag below $t$ in $\calT$. We claim that for every $t$, $f_C$ has at most $3^{k+2}$ distinct $X_{\leq t}$-subfunctions. Indeed, given $\tau \in 2^{X_{\le t}}$, let $b_\tau$ be the function mapping $v \in B_t$ to $val(v,\tau)$. We claim that for $\tau,\nu \in 2^{X_{\le t}}$, if $b_\tau = b_\nu$, then $\tau$ and $\nu$ define the same $X_{\leq t}$-subfunction. Since there are at most $|\{0,1,\bot\}|^{|B_t|} \leq 3^{k+2}$ possible functions from $B_t$ to $\{0,1,\bot\}$, this will be enough to prove the desired result.

  We prove that $b_\tau = b_\nu$ implies that $f_C[\tau] = f_C[\nu]$. Assume toward a contradiction that $f_C[\tau] \ne f_C[\nu]$. It follows that there exists some $\sigma \in 2^{X \setminus X_t}$ such that $f_C(\tau \times \sigma) \neq f_C(\nu \times \sigma)$. Recall that $o$ denotes the output of $C$ and that by construction, $o \in B_t$. Since we have $val(o,\tau) = val(o,\nu)$ by assumption, we must have $val(o,\tau) = val(o,\nu)=\bot$, otherwise, we would have $val(o,\tau \times \sigma) = val(o,\nu \times \sigma)$, that is $f_C(\tau \times \sigma) = f_C(\nu \times \sigma)$.

  Now, it follows that there is at least one $g$ of $C$ such that:
  \begin{enumerate}
  \item \label{it:ctw:inbag} $g \in B_t$, 
  \item \label{it:ctw:discr} $val(g,\tau \times \sigma) \neq val(g,\nu \times \sigma)$ (in particular $val(g,\tau)=val(g,\nu)=\bot$).
  \end{enumerate}
  
  Consider a deepest gate $v_0$ for which this is true, that is, $v_0 \in B_t$, $val(v_0,\tau \times \sigma) \neq val(v_0,\nu \times \sigma)$ and no gate below $v_0$ verifies these two properties. Observe that $v_0$ cannot be an input of the circuit, otherwise it would be labeled with some variable $x \in X_{\leq t}$ and we would have $\tau(x) = \nu(x)$ since $v_0 \in B_t$, contradicting the fact that $val(v_0,\tau \times \sigma) \neq val(v_0,\nu \times \sigma)$. We claim that, since $val(v_0,\tau \times \sigma) \neq val(v_0,\nu \times \sigma)$, $v_0$ must have at least one input $v_1$ such that $val(v_1,\tau \times \sigma) \neq val(v_1,\nu \times \sigma)$ and $val(v_1,\tau) = \bot$ or $val(v_1,\nu)=\bot$. Indeed, if for every input $v$ of $v_0$ we have $val(v,\tau \times \sigma) = val(v,\nu \times \sigma)$ then this would contradict the fact that $v_0$ verifies \cref{it:ctw:discr}. To see that $val(v_1,\tau) = \bot$ or $val(v_1,\nu)=\bot$, let $v$ be an input of $v_0$ such that $val(v,\tau \times \sigma) \neq val(v,\nu \times \sigma)$ and assume $val(v,\tau) \in \{0,1\}$ and $val(v,\nu) \in \{0,1\}$. In this case, we claim that we cannot have $val(v,\tau)=val(v,\nu)$: W.l.o.g., assume $val(v,\tau)=0$ and $val(v,\nu)=1$.
  Now, if  $v_0$ is a $\vee$-gate, then $val(v_0,\nu) = 1$, if $v_0$ is a $\wedge$-gate, then $val(v_0,\tau) = 0$ and if $v_0$ is a $\neg$-gate, $val(v_0,\tau)=1$. In any case, it contradicts that $val(v_0,\nu)=val(v_0,\tau)=\bot$.

  To sum up, we have shown that there exists an input $v_1$ of $v_0$ such that $val(v_1,\tau \times \sigma) \neq val(v_1,\nu \times \sigma)$ and either $val(v_1,\tau) = \bot$ or $val(v_1,\nu) = \bot$. W.l.o.g., we assume that $val(v_1,\tau)=\bot$. We now construct two paths from $v_1$ to an input of $C$:

  \begin{itemize}
  \item The first one is built on the following observation: $v_1$ must have at least one input $v_2$ such that $val(v_2,\tau \times \sigma) \neq val(v_2, \nu \times \sigma)$, otherwise, we would have $val(v_1,\tau \times \sigma) = val(v_1, \nu \times \sigma)$. By iterating, we build a path $P_1 = (v_1,\dots,v_p)$ to some input $v_p$ of the circuit such that for every $i \leq p$, $val(v_i,\tau \times \sigma) \neq val(v_i, \nu \times \sigma)$. In particular, $v_p$ is labeled with some variable $x \in X_{\leq t}$ such that $\nu(x) \neq \tau(x)$.
  \item The second one is built on the following observation: $v_1$ must have at least one input $w_2$ such that $val(w_2,\tau)=\bot$. Otherwise, $val(v_1,\tau)$ would take a value in $\{0,1\}$. By iterating on this observation, we build a path $P_2 = (v_1,w_2,\dots,w_q)$ to some input $w_q$ of the circuit such that for every $i \leq q$, $val(w_i,\tau)=\bot$. In particular, $w_q$ is labeled with some variable $y \in X \setminus X_{\leq t}$, since it cannot be assigned by $\tau$.
  \end{itemize}

  Finally, consider the path $P$ from $v_p$ to $w_q$ obtained by concatenating $P_1$ with the reverse of $P_2$. It only contains gates below $v_1$ by definition, hence below $v_0$ in particular. Moreover, $v_p$ being labeled by some $x \in X_{\leq t}$ with $\tau(x) \neq \nu(x)$, it must appear strictly below $t$ in $\calT$. And since $w_q$ is labeled by some $y \notin X_{\leq t}$, it does not appear below $t$. Hence $P$ must contain a gate $g$ in $B_t$ by the connectedness of $\calT$. Since $val(g,\tau)=val(g,\nu)$ by assumption, if $g$ is on path $P_1$, then $val(g,\tau)=val(g,\nu) = \bot$, otherwise, we cannot have $val(g,\tau \times \sigma) \neq val(g,\nu \times \sigma)$. If $g$ is on path $P_2$, then by construction, we have $val(g,\tau)=\bot$. And since $g \in B_t$, we must also have $val(g,\nu)=\bot$. Recall that $g$ appears in the subcircuit rooted in $v_0$ by construction. But then, it contradicts the fact that $v_0$ is a deepest gate satisfying \cref{it:ctw:inbag,it:ctw:discr} and we must have $f_C[\tau]=f_C[\nu]$.

  We obtain the corresponding vtree $T$ for $f_C$ from $\calT$ inducing the same partitions as $(X_{\leq t}, X \setminus X_{\leq t})$ following the same construction as \cref{thm:prop:ptw-vs-subw}. Again, the construction of $T$ from $\calT$ does not depend on the function but only on $\calT$ and $\calT$ is also a tree decomposition of $C'$, a subcircuit of $C$.  Hence, we have $\fw{f'}[T] \leq 3^{k+2}$. 
\end{proof}
\end{toappendix}

It gives a straightforward way of constructing a TDD of size $2^{O(k)}|C|$ computing $f_C$ from a tree decomposition $\calT$ of treewidth $k$ of a Boolean circuit $C$. We first extract a vtree $T$ as in \cref{thm:prop:ctw-sfw} and, for every gate $g$ of $C$, we build a TDD $T_g$ computing the same Boolean function as $g$. For example, for a $\wedge$-gate $g$ of $C$ with input $g_1,\dots, g_p$, construct $T_g$ as follows: inductively construct $T_{g_1}, \dots, T_{g_p}$, then iteratively build $((T_{g_1} \wedge T_{g_2}) \wedge T_{g_3}) \wedge \dots \wedge T_{g_p}$. For every $i \leq p$, the circuit having $g$ as output and $g_1,\dots, g_i$ as input is a subcircuit of $C$, hence the resulting intermediate TDD have width at most $3^{k+2}$. We proceed similarly for $\neg$-gates and $\vee$-gates.  Since computing an optimal tree decomposition can be done in FPT linear time~\cite{Bodlaender93a}, we have:
\begin{theorem}
\label{thm:tdd:circuit-treewidth}  Given a Boolean circuit $C$ of treewidth $k$, we can compute a vtree $T$ and a TDD respecting $T$ computing $f_C$ of width $2^{O(k)}$ and in time $2^{O(k)} \cdot |X| \cdot |C|$.
\end{theorem}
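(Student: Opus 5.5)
The plan is to assemble the result from pieces already available. First, compute a tree decomposition $\calT$ of the underlying graph of $C$ of width $k$ (or $O(k)$) in FPT linear time using~\cite{Bodlaender93a}. Then apply \cref{thm:prop:ctw-sfw} to obtain a vtree $T$ over $X$ such that every subcircuit $C'$ of $C$ computes a function $f'$ with $\fw{f'}[T]\le 3^{k+2}=2^{O(k)}$. From now on, every TDD respects this fixed $T$.

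Next, process the gates of $C$ in topological order and build, for each gate $g$, the minimal TDD $T_g$ computing $f_g$:
\begin{itemize}
\item For an input gate labeled $x$, $T_g$ is a width-$2$ TDD, built like the clause TDD of \cref{sec:prop:struct-cnf}.
\item For a $\neg$-gate, apply \cref{thm:prop:tdd-neg} to $T_{g_1}$ and then minimize.
\item For a $\wedge$-gate or $\vee$-gate with inputs $g_1,\dots,g_p$, form the running combination $D_1=T_{g_1}$ and $D_{i}=\mathsf{minimize}(D_{i-1}\circ T_{g_i})$ with $\circ\in\{\wedge,\vee\}$, using \cref{prop:prop:tdd-and} or \cref{thm:prop:tdd-apply}.
\end{itemize}

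The width invariant comes from the following observation. Each intermediate $D_i$ computes the function of the subcircuit consisting of $g$ restricted to inputs $g_1,\dots,g_i$ together with everything below them. This is a subcircuit of $C$, so after minimization, \cref{thm:prop:tdd-min} gives width exactly its factor width, at most $3^{k+2}$. The same bound holds for each $T_{g_i}$, since each is the subcircuit rooted at $g_i$. Before minimization, the product therefore has width at most $(3^{k+2}+1)^2=2^{O(k)}$. By \cref{thm:prop:tdd-minization-complexity} and the apply bounds, each step costs $|X|\cdot\poly(2^{O(k)})=2^{O(k)}|X|$.

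The number of apply-plus-minimize steps is at most the number of wires of $C$, so the total is $O(|C|)$ steps. This gives time $2^{O(k)}\cdot|X|\cdot|C|$, and the output $T_o$ for the output gate $o$ has width $2^{O(k)}$.

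The main subtlety I expect is the subcircuit claim, specifically that the partial gate \emph{$g$ with inputs $g_1,\dots,g_i$} is a legitimate subcircuit of $C$ in the sense of \cref{thm:prop:ctw-sfw}. One must check that the proof of that theorem only uses that $\calT$ remains a tree decomposition of the subgraph, which holds for any subgraph. A second point to handle is that intermediate functions may not depend on all variables. This is harmless because the TDDs are still taken over all of $X$ with the fixed vtree $T$, and factor width is computed over $X$.
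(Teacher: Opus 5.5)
Your proposal is correct and follows essentially the same route as the paper: compute a tree decomposition, take the vtree of \cref{thm:prop:ctw-sfw}, and build the TDD gate by gate with iterated apply. Each gate $g$ restricted to its inputs $g_1,\dots,g_i$ is a subcircuit, so every minimized intermediate TDD has width at most $3^{k+2}$; you are in fact more explicit than the paper about minimizing after each apply and about the $O(|C|)$ step count. One caveat, shared with the paper: Bodlaender's exact algorithm runs in time $2^{O(k^3)}\cdot|C|$, so reaching the stated $2^{O(k)}\cdot|X|\cdot|C|$ bound requires a constant-factor treewidth approximation running in time $2^{O(k)}\cdot|C|$, which your ``width $O(k)$'' remark already covers since $3^{O(k)}=2^{O(k)}$.
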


Treewidth is not the most general parameter for which we can build polynomial-size d-DNNF. CNF formulas of bounded MIM-width, for example, may have unbounded treewidth but have polynomial-size deterministic DNNF circuits~\cite{BovaCMS15,SaetherTV14}. That said, it is not clear whether they have bounded factor width. Such a result would allow to show that \cref{alg:prop:tdd-bottomup} works in polynomial time on bounded MIM-width instances, simplifying the convoluted algorithm from the literature. We leave the study of such graph measures for future work.

\section{Comparing TDD with other data structures}
\label{sec:prop:tdd-vs-rest}

\textbf{OBDD.} Tractable queries and transformations for TDD are similar to those for OBDD. We can actually see OBDD as a particular subclass of TDD where the underlying vtree $T$ is \emph{linear}, that is, for every internal node $t$ of $T$, one child of $t$ is a leaf. A linear vtree $T$ over variables $X$ naturally induces an order $\pi_T = (x_1,\dots,x_n)$ on $X$ defined as follows: $x_1$ is the leaf attached to the root of $T$, and $(x_2,\dots,x_n)$ is the order induced by the other subtree attached to the root. Similarly, an order $\pi = (x_1,\dots,x_n)$ can be mapped naturally to the linear vtree $T_\pi$ defined as the vtree whose root has one leaf child labeled by  $x_1$, and its other child is the vtree for the order $(x_2,\dots,x_n)$. 
For an order $\pi=(x_1,\dots,x_n)$, we let $\pi^{-1}=(x_n,\dots,x_1)$. The class of TDD with linear vtrees corresponds exactly to OBDD in the following sense:

\begin{theorem}[$\star$]
  \label{thm:obdd-tdd-linear}
  Given an OBDD $C$ over variables $X$ and order $\pi = (x_1,\dots,x_n)$, one can construct an equivalent
  TDD respecting $T_{\pi^{-1}}$ of size at most $3|C|$ in time $O(|C|)$.
  Similarly, let $T$ be a linear vtree and $C$ be a TDD respecting $T$. Then one can construct an equivalent OBDD in time $O(|C|)$ respecting order $\pi_T^{-1}$. 
\end{theorem}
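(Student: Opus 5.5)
The idea is to read an OBDD ``upside down''. In $T_{\pi^{-1}}$ the root has leaf child $x_n$ and the deepest internal node has leaves $x_2,x_1$. So the internal node $t_i$ whose subtree covers $\{x_1,\dots,x_i\}$ corresponds to the prefix of the order, i.e.\ to the layer of the OBDD reached after deciding $x_1,\dots,x_i$. A $t_i$-node will represent an OBDD node $v$ reachable after reading $x_1,\dots,x_i$, or a $1$-sink reached early. It computes the set of assignments of $x_1,\dots,x_i$ whose path from the source reaches $v$ (``arrives at $v$'').

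\textbf{From OBDD to TDD.} First make the OBDD \emph{complete} (quasi-reduced) so that every source-to-sink path tests every variable in order. This can be done in $O(|C|)$ for the size bound if we instead allow ``skipped'' variables by routing through $1$-labeled leaves. For each $i$ and each OBDD node $v$ testing $x_{i+1}$ (or the $1$-sink at level $n$), create a $t_i$-node $h_v$. Its inputs are the pairs $(h_u, \ell)$ where $u$ tests $x_i$ with an edge labeled $b$ to $v$, and $\ell$ is the $x_i$-leaf node labeled by the literal $x_i$ or $\neg x_i$ matching $b$. If an edge skips variables, insert $1$-leaf nodes for the skipped levels; this gives intermediate $t_j$-nodes, whose total number we charge to edges, yielding size at most $3|C|$. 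The source becomes the bottom $x_1$-leaf side, and the output is $h_{1\text{-sink}}$ at the root. By induction on $i$, $f_{h_v}$ is exactly the set of prefixes arriving at $v$, so the output computes $f_C$.

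Determinism is where care is needed, and I expect it to be the main obstacle. A pair $(h_u,\ell_b)$ must feed only the node reached from $u$ by its $b$-edge, which is unique. The leaf condition (one $x$, one $\neg x$, one $1$) must hold, but skips create $1$-leaves coexisting with literal leaves. I would handle this by splitting a skipped variable into both literal leaves feeding the same parent, which is deterministic because the pairs differ. This costs at most a factor $2$ per edge, which fits the $3|C|$ budget (one node plus two pairs per edge).

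\textbf{From TDD to OBDD.} Reverse the construction. Given a TDD on linear $T$, with $t_i$ having children $t_{i-1}$ and the leaf $x_i$, make one OBDD node per $t_{i-1}$-node $g$, testing $x_i$. Its $b$-edge goes to the unique $t_i$-node $h$ having $(g,\ell)$ as input with $\ell$ satisfied by $x_i=b$; if there is none, it goes to the $0$-sink. Uniqueness of $h$ follows from \cref{thm:prop:tdd-double-det}: the leaf nodes satisfied by $x_i=b$ are unique, and pairs feed at most one parent. The output node becomes the $1$-sink, and the unique bottom choice gives the source. Correctness follows from uniqueness of certificates (\cref{thm:prop:tdd-double-det}): following edges from the source along $\tau$ builds exactly the bottom-up certificate described after that theorem, reaching $\out$ iff $\tau$ is a model. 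The result tests variables in order $x_1,x_2,\dots$, i.e.\ in $\pi_T^{-1}$, and uses time linear in the number of input pairs.
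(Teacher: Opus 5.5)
Your construction in both directions is the paper's. For OBDD to TDD, you create a $t_i$-node for each OBDD node on $x_{i+1}$, computing the prefixes whose path reaches it, fed by pairs (predecessor, literal leaf) along the $b$-edges, and you take the $1$-sink's node as output. For TDD to OBDD, you create one decision node per $t$-node, whose $b$-edge goes to the unique parent fed by the pair with the leaf satisfied by $x_i=b$. The TDD-to-OBDD direction is fine, and it is somewhat more explicit than the paper's sketch about the source and the root.

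The step that fails is your treatment of skipped variables and the claimed $3|C|$ bound. TDDs are smooth: every pair at a $t_i$-node must use a $t_{i-1}$-node. So an OBDD edge from a node on $x_i$ to a node on $x_{j+1}$ needs a chain of $j-i$ intermediate nodes and about $2(j-i)$ pairs, not one node and two pairs. Charging these to edges only gives $O(n\cdot|C|)$. Likewise, making the OBDD complete is not an $O(|C|)$ operation.

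A cleverer encoding cannot repair this. By \cref{thm:prop:tdd-lowerbound}, every TDD on $T_{\pi^{-1}}$ has at least as many $t_i$-nodes as $f$ has non-trivial $\{x_1,\dots,x_i\}$-subfunctions, for every $i$. So it is essentially as large as a leveled OBDD. For instance, $x_1\leftrightarrow x_n$ has a $5$-node OBDD, but every TDD on $T_{\pi^{-1}}$ has two nodes at each of the $n-1$ lower levels.

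The paper's proof tacitly assumes the OBDD is leveled: every decision node on $x_{i+1}$ has all its predecessors on $x_i$, which is needed for $g_w$ to be a $p_{i-1}$-node in its construction. Under that assumption you need no chains, and no $1$-leaves except possibly at $x_1$. Each non-source edge then yields exactly one pair, and the $3|C|$ bound is immediate. Either state this assumption explicitly, or weaken the bound to $O(n\cdot|C|)$ for general OBDDs. Note also that determinism, which you flagged as the main obstacle, is the easy part; size is where the care is needed.
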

\begin{toappendix}
\begin{proof}[Proof of \cref{thm:obdd-tdd-linear}.]
We start by explaining how to convert an OBDD to a TDD. Intuitively, the TDD corresponds to rooting the OBDD in its $1$-sink. We assume wlog that $C$ has exactly two sinks: a $1$-labeled sink and a $0$-labeled sink. For each decision node $v$ other than the source of $C$, we introduce a node $g_v$ in the TDD. The models of $g_v$ are exactly the set of assignments that are compatible with a path from the source of $C$ to $v$.
First assume $v$ is a decision-node on $x_2$. Either every assignment of variable $x_1$ leads to $v$ (that is, both outgoing edges of the source of $C$ are connected to $v$), in which case we define $g_v$ to be the unique $\ell_1$-node labeled by $\top$. Otherwise, we define $g_v$ to be the $\ell_1$ node labeled by the literal corresponding to the path leading to $v$. More precisely:
\begin{itemize}
  \item If both outgoing edges of the source of $C$ are connected to $v$, then $g_v$ is labeled by $1$ (and is the only $\ell_1$-node of $C'$)
\item If only the $1$-labeled outgoing edge of the source of $C$ is connected to $v$ then $g_v$ is labeled by $x_1$. 
\item If only the $0$-labeled outgoing edge of the source of $C$ is connected to $v$ then $g_v$ is labeled by $\neg x_1$.    
\end{itemize}
	
By definition, the models of $g_v$ are clearly the set of assignments of $\{x_1\}$ whose path in $C$ ends in $v$. Now assume $v$ is a decision-node on $x_{i+1}$ ($i>1$) and that $g_w$ has been constructed for every node $w$ before $v$ in $C$. We let $g_v$ be a new $p_i$-node and we introduce two $\ell_i$ nodes $L^1_i$ and $L_i^0$ respectively labeled by $x_i$ and $\neg x_i$. The inputs of $g_v$ are defined as follows: it contains a pair $(L^1_i, g_w)$ for every node $w$ such that there is an edge labeled by $1$ from $w$ to $v$ and a pair $(L_i^0, g_w)$ for every node $w$ such that there is an edge labeled by $0$ from $w$ to $v$. By induction, the models of $g_v$ are the assignments $\tau \in 2^{\{x_1,\dots,x_i\}}$ such that the path of $\tau$ goes to $w$ and such that, $\tau(x_i) = b$ if and only if there is a $b$-labeled edge between $w$ and $v$. But this is exactly what it means for the path of $\tau$ to end in $v$. Hence the induction hypothesis is preserved.
	
If $v$ is a sink, we do the same construction as above but over variable $x_n$. That is, we have two $\ell_n$-nodes $L_n^0$ and $L_n^1$ labeled by $\neg x_n$ and $x_n$ respectively. We introduce a $p_n$-node $g_v$ whose inputs are pairs $(L_i^b,g_w)$ if $(w,v)$ is an edge of $C$ labeled by $b \in \{0,1\}$. As before the models of $g_v$ are exactly the assignments whose path ends in $v$. Hence, if we choose the output of the $C'$ to be $g_v$ where $v$ is the $1$-sink of $C$, the models of $C'$ are the same as the models of $C$. Observe that we have not changed the width of the circuit.  

We now explain how to convert a TDD into an OBDD. We only sketch the construction. We have one decision-gate $v(g)$ in $C'$ for each $t$-node $g$ where $t$ is an internal node of $T$. We define it as follows:  Let $g$ be a $t$-node of $C$ where $t$ is an internal node of $T$. Let $u$ be the parent of $t$ in $T$ and $\ell$ be the sibling of $t$ in $T$. By definition, $\ell$ is labeled by some variable $x_i$ since $T$ is linear. We add a decision-gate $v(g)$ in the OBDD over variable $x_i$. The $1$-edge of $v(g)$ is connected to $v(h)$ where $h$ is the only $u$-node containing input $(x_i, g)$ or $(\top,g)$. If no such $h$ exists, we plug the $1$-edge of $v(g)$ to a $\bot$-sink. We connect the $0$-edge of $v(g)$ similarly to $v(h)$, where $h$ is the only $u$-node containing input $(\neg x_i, g)$ or $(\top, g)$. 
\end{proof}
\end{toappendix}

The proof of \cref{thm:obdd-tdd-linear} mainly boils down to rooting an OBDD in its $1$-sink, as illustrated in \cref{fig:prop:tdd-vs-obdd}. 
We observe however that \cref{thm:prop:tdd-lowerbound} offers a way of getting the correspondence of \cref{thm:obdd-tdd-linear} in a non-constructive way. Indeed, it is known~\cite{hayase1998obdds} that the width of the minimal OBDD is exactly the maximum number of subfunctions one can get by fixing variables $x_1,\dots,x_i$ for some $i \leq n$. This exactly corresponds to the number of subfunctions we can have with a linear vtree. The previous constructions can be extended to the case of non-deterministic TDD and non-deterministic OBDD.

\begin{figure}
	\centering
	\includegraphics[page=1]{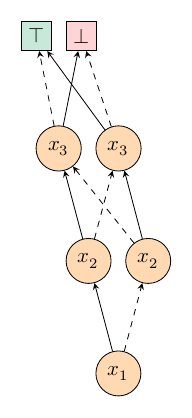}
  \qquad
	\includegraphics[page=2, width=6cm]{figs/tdd-parity.pdf}
	\caption{An OBDD (left) represented as a TDD (right). The output of the TDD is represented in green and each node $t$ of the vtree is made explicit by a rectangle around $t$-nodes.
        }
	\label{fig:prop:tdd-vs-obdd}
\end{figure}

OBDDs are however weaker than TDDs. This follows as a corollary of~\cite{Razgon14} and \cref{thm:prop:dtdd-tw}. In~\cite{Razgon14}, Razgon proves that there is  a family of CNF formulas $(F_n)_{n \in \N}$ where $F_n$ has $n$ variables and treewidth $O(\log n)$ such that every OBDD representing $F_n$ must have size $n^{\Omega(\log n)}$, while \cref{thm:prop:dtdd-tw} shows that such instances have polynomial-size TDD representation.

\begin{theorem}
	\label{thm:prop:sep-obdd-tdd}
	There exists a family $(F_n)_{n \in \N}$ of CNF formulas such that $F_n$ can be represented by a polynomial-size TDD while every OBDD representing $F_n$ has size at least $n^{c \log n}$ for some constant $c$. 
\end{theorem}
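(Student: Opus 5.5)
The plan is to combine Razgon's lower bound for OBDD with our upper bound \cref{thm:prop:dtdd-tw}, as announced just before the statement. From~\cite{Razgon14} we take a family $(F_n)_{n \in \N}$ of CNF formulas such that $F_n$ has $n$ variables, primal treewidth $k_n = O(\log n)$, and every OBDD representing $F_n$ has size $n^{\Omega(\log n)}$. Concretely, we fix constants $a, c > 0$ with $k_n \leq a \log n$ for all large $n$ and OBDD size at least $n^{c\log n}$. The lower-bound half of the statement is then exactly Razgon's result, and we need no new argument for it.

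For the upper bound we apply \cref{thm:prop:dtdd-tw} to $F_n$. It yields a vtree and a TDD computing $F_n$ of width at most $2^{k_n} \leq 2^{a\log n} = n^{a}$. By the remark after \cref{thm:prop:tdd-double-det}, a TDD of width $w$ over $n$ variables has size at most $2n w^2$, which here is at most $2n^{1+2a}$, a polynomial in $n$. Alternatively, the construction time $2^{O(k_n)} \cdot m n$ is polynomial, which already bounds the size. For this we must check that the number of clauses $m$ of $F_n$ is polynomial in $n$. I expect this to hold for Razgon's family, since the formulas have bounded clause width or at least polynomial size. If needed, I would instead rely directly on the width bound, which does not depend on $m$.

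The only delicate point is matching the exact form of Razgon's family. His statement may be phrased in terms of primal treewidth, or in terms of pathwidth, which is an upper bound on treewidth. It may also count the formula size rather than the number of variables in the exponent. In either case the bounds translate: treewidth is at most pathwidth, and a polynomial relation between the number of variables and the formula size only changes the constant $c$. So we obtain a polynomial-size TDD for $F_n$ while every OBDD has size at least $n^{c\log n}$, as claimed. I expect no real obstacle beyond this bookkeeping on parameters.
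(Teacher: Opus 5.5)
Your proposal is correct and follows the paper's route: the paper also obtains this as a direct corollary of Razgon's family of $O(\log n)$-treewidth CNFs with $n^{\Omega(\log n)}$ OBDD lower bound, combined with \cref{thm:prop:dtdd-tw}, which gives a TDD of width $2^{O(\log n)} = n^{O(1)}$ and hence polynomial size. Your use of the width-to-size bound $2|X| \cdot w^2$, which avoids any dependence on the number of clauses, is a sensible way to make the size claim explicit.
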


The separation given by \cref{thm:prop:sep-obdd-tdd} is only quasi-polynomial, and one can wonder whether a truly exponential separation is possible. It may seem possible that TDDs can be quasi-polynomially simulated by OBDDs, in the same way FBDDs quasi-polynomially simulate decision-DNNF circuits~\cite{BeameLRS13}. We leave this question for future investigation.


\textbf{Deterministic DNNF.} As we have already observed, TDD is a subclass of structured deterministic DNNF. We show in this section that structured deterministic DNNF may be exponentially smaller than TDD. To do so, we are interested in the \emph{Hidden Weighted Bit} functions, which are known to be hard for OBDD~\cite{Wegener00}. Given $n \in \N$, define $ \HWB_n(x_1,\dots,x_n) = 1$ if and only if the value assigned to $x_S$ is $1$, where $S = \sum_{i=1}^n x_i$. The Boolean function $\HWB_n$ can easily be computed by a structured d-DNNF, by first guessing the number $S$ of variables set to $1$ and then checking that this is indeed the case and that $x_S=1$ with a small OBDD. However, $\HWB_n$ does not admit polynomial-size OBDD~\cite{Wegener00}. We adapt the $\HWB_n$ lower bound for OBDD to TDD. The proof relies on adapting \cite[Lemma 4.10.1]{Wegener00}. In a nutshell, this lemma shows that if we pick $Y \subseteq \{x_1,\dots,x_n\}$ of size $n/2$, then $\HWB_n$ has an exponential number of $Y$-subfunctions. We generalize it to show that if $Y$ has size between $n/3$ and $2n/3$ then we still have an exponential number of $Y$-subfunctions. We then apply the lemma by finding a node $t$ in the vtree such that $X_t$ has size between $n/3$ and $2n/3$ which gives an exponential lower bound on the size of a TDD computing $\HWB_n$.

\begin{toappendix}
\begin{lemma}[($\star$) Adapted from {\cite[Lemma 4.10.1]{Wegener00}}.]
	\label{lem:prop:segment-subf}
	Let $n \in \N$ and $I \subseteq [n]$. Assume there exist $p, \ell \in [n]$ such that:
		 $J := I \cap [p; p+\ell]$ has size $t$,
		 $\ell \leq n - |I|$, and
		 $n \geq p+\ell+t/2$.
	Then $\HWB_n$ has at least $t \choose t/2$ subfunctions over $X_I := \{x_i \mid i \in I\}$.
\end{lemma}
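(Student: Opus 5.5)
We construct $\binom{t}{t/2}$ assignments $\tau_A \in 2^{X_I}$, one per subset $A \subseteq J$ with $|A| = t/2$, and show that the subfunctions $\HWB_n[\tau_A]$ are pairwise distinct. This is Wegener's argument from \cite[Lemma 4.10.1]{Wegener00}, with the window $[p;p+\ell]$ playing the role of his middle segment. The three hypotheses supply exactly the room we need. The hypothesis $\ell \le n-|I|$ gives enough free (non-$I$) variables to shift the weight across the whole window. The hypothesis $n \ge p+\ell+t/2$ gives enough indices beyond the window to hold $t/2$ further ones.

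\textbf{Definition of $\tau_A$.} Fix a base value $c$, say $c=0$, for every variable of $X_I \setminus X_J$. For $A \subseteq J$ with $|A|=t/2$, let $\tau_A$ set $x_i=1$ for $i\in A$ and $x_i=0$ for $i\in J\setminus A$. Every $\tau_A$ then has the same number $w_0$ of ones on $X_I$, namely $t/2$ plus the number of ones placed on $I\setminus J$. We will likely need a more careful choice of the ones outside $J$ so that the total weight can be steered onto the window.

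\textbf{Separating two subsets.} Take $A\ne B$ and an index $j\in A\setminus B$, so $j \in [p;p+\ell]$. We need an assignment $\sigma$ of the variables outside $X_I$ with total weight $w_0+|\sigma| = j$. Then $\HWB_n(\tau_A\times\sigma)=\tau_A(x_j)=1$ while $\HWB_n(\tau_B\times\sigma)=0$, which separates the two subfunctions. Such a $\sigma$ exists when $j-w_0$ lies in $[0,\, n-|I|]$. Since $j$ ranges over an interval of length at most $\ell \le n-|I|$, it is enough to arrange $w_0 \le p$ and $p+\ell-w_0 \le n-|I|$.

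\textbf{Main obstacle.} The difficulty is choosing the fixed values on $I\setminus J$ so that $w_0$ lands in the admissible range $[\,p+\ell-(n-|I|),\, p\,]$. We can always take $w_0 = t/2$ by setting everything outside $J$ to $0$. This meets the lower end only if $p \ge t/2$, which may fail. If it fails, we will use the hypothesis $n\ge p+\ell+t/2$ to shift the index set instead: we pick the ones of $\tau_A$ so that the relevant bit positions are offset by $t/2$. Concretely, we compare the subfunctions at positions $j$ with weight $j$, where the $t/2$ ones of $A$ are already counted in that weight. The condition $j-t/2 \le n-|I|$ follows from $\ell\le n-|I|$ once we shift the window via $p+\ell+t/2\le n$. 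The careful step is this case analysis on $p$ versus $t/2$ and the arithmetic on weights. Once the weights fit, distinctness is immediate, and so is the count of $\binom{t}{t/2}$ subfunctions.
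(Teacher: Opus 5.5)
\textbf{Assessment.} When $p\ge t/2$ your plan is the paper's argument, though with the key choice left open. When $p<t/2$ the proposal has a gap that no argument can fill, because the statement is false there.

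\textbf{The case $p\ge t/2$.} You give every $\tau_A$ the same weight $w_0$ on $X_I$. You then separate $A\neq B$ at some $j\in A\setminus B$ using a completion of weight $j-w_0$, which requires $p+\ell-(n-|I|)\le w_0\le p$. You do not say how to choose $w_0$, and $w_0=t/2$ can violate the lower bound even when $p\ge t/2$. The paper sets variables of $X_I\setminus X_J$ to $1$ greedily until the weight reaches $p$ or those variables run out, so $w_0=\min(p,|I|-t/2)$. If $w_0=p$, the lower bound is exactly the hypothesis $\ell\le n-|I|$. If $w_0=|I|-t/2<p$, it is exactly the hypothesis $n\ge p+\ell+t/2$. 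So that last hypothesis is not about room beyond the window, and it does nothing for the case $p<t/2$.

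\textbf{The gap: the case $p<t/2$.} Your ``shift'' cannot work. Every $\tau_A$ has weight at least $t/2$, so no completion ever makes $\HWB_n$ read a position below $t/2$. In fact the lemma fails in this regime. Take $n=40$, $I=[1;20]$, $p=1$, $\ell=20$. Then $t=20$, $\ell\le n-|I|=20$, and $n\ge 31=p+\ell+t/2$, so all hypotheses hold. Let $\tau\in 2^{X_I}$ have weight $w\ge 1$ and let $c$ be a completion. Then $\HWB_{40}$ reads position $w+|c|\ge w$, taking the bit from $\tau$ if the position is at most $20$ and from $c$ otherwise. Hence $\HWB_{40}[\tau]$ is determined by $w$ and by $\tau$ restricted to $x_w,\dots,x_{20}$, which has at most $w$ ones. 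The number of $X_I$-subfunctions is therefore at most $1+\sum_{w=1}^{20}\sum_{j=0}^{w}\binom{21-w}{j}=46366<184756=\binom{20}{10}$.

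\textbf{What this means for the statement.} The lemma needs an extra hypothesis such as $p\ge t/2$. The paper's proof uses it tacitly: it picks $p-t/2$ variables and asserts $N_1\le p$. The hypothesis does hold in the application in \cref{thm:prop:hwb-tdd}, where $p\ge n/7+1$ and $t\le n/7+1$. With $p\ge t/2$ added, your argument plus the greedy choice of $w_0$ is a complete proof.
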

  \begin{proof}[Proof of \cref{lem:prop:segment-subf}]
	Let $A$ be the set of assignments of $X_J$ setting exactly $t/2$ variables to $1$ (and $t/2$ to $0$). Given $a \in A$, we define $a^* \in 2^{X_I}$  as follows: we set the largest possible number of variables $X_I \setminus X_J$ to $1$ so that the total number of variables set to $1$ by $a^*$ does not exceed $p$. Since $|X_I \setminus X_J| = |I|-t$ and $a$ already sets $t/2$ variables to $1$, we have two cases:
	\begin{itemize}
		\item either $|I|-t \geq p-t/2$, in which case, we pick $p-t/2$ variables from $X_I \setminus X_J$, set them to $1$ and then the other to $0$. In this case, we have that $a^*$ sets exactly $p$ variables to $1$.
		\item or $|I|-t < p-t/2$, in which case, we set every variables from $X_I \setminus X_J$ to $1$, in which case, $a^*$ sets $t/2+|I|-t = |I|-t/2<p$ variables to $1$. 
	\end{itemize}
	Observe that the number of variables set to $1$ by $a^*$ does not depend on $a$, but only on whether $|I|-t/2 \geq p$. We denote this number by $N_1$. 
	
	We claim that given distinct $a,b \in A$, $\HWB_n[a^*]$ and $\HWB_n[b^*]$ are two different subfunctions of $\HWB_n$ over $X_I$. Indeed, since $a$ and $b$ are distinct, there exists $i \in J$ such that $a(x_i) \neq b(x_i)$, and in particular, $a^*(x_i) \neq b^*(x_i)$. Assume wlog that $a^*(x_i) = 1$ and $b^*(x_i) = 0$. Now if we pick $c \in 2^{X \setminus X_I}$ such that $c$ sets exactly $i-N_1$ variables to $1$ (assume for now that this is possible). In this case, $a^* \times c$ and $b^* \times c$ both set exactly $i$ variables to $1$. But then $a^* \times c$ is a model of $\HWB_n$ since $a^*(x_i)=1$ but $b^* \times c$ is not since $b^*(x_i)=0$. Hence $\HWB_n[a^*]$ and $\HWB_n[b^*]$ are distinct subfunctions. In other words, $\HWB_n$ has at least $|A| = {t \choose t/2}$ subfunctions over $X_I$.
	
	It remains to show that we can pick $c \in 2^{X \setminus X_I}$ so that it sets $i-N_1$ variables to $1$. First, observe that $i-N_1 \geq 0$ since by definition, $N_1 \leq p$ and $i \in [p;p+\ell]$. We now have to prove that $|[n] \setminus I| = n-|I| \geq i-N_1$. Since $i \leq p + \ell$, we will prove $n-|I| \geq p+\ell-N_1$. We distinguish two cases. First assume $|I|-t/2 \geq p$, that is $N_1 = p$. In this case, we need to prove $n-|I| \geq \ell$ but this was part of our assumption. Otherwise, $|I|-t/2 < p$ and $N_1=|I|-t/2$. We hence have to prove $n-|I| \geq p+\ell-|I|+t/2$, that is $n \geq p+\ell+t/2$ which is also part of our assumption. 
\end{proof}

\end{toappendix}

\begin{theorem}[$\star$]
	\label{thm:prop:hwb-tdd}
	Let $n \in \N$ be a multiple of $7$. Then $\fw{\HWB_n} \geq 2^{cn}$ for some constant~$c$. In particular, any TDD computing $\HWB_n$ has size at least $2^{cn}$. 
\end{theorem}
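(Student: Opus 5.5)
Fix an arbitrary vtree $T$ over $X=\{x_1,\dots,x_n\}$. It suffices to find a node $t$ of $T$ such that $\HWB_n$ has at least $2^{cn}$ $X_t$-subfunctions. Since $\HWB_n$ has only two trivial subfunctions at most (in fact at most one, the constant $0$), this bounds the number of non-trivial ones, hence $\fw{\HWB_n}[T]$. The claim on TDD size then follows directly from \cref{thm:prop:tdd-lowerbound}.

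\textbf{Choosing $t$.} I would use the standard balanced-separator argument on binary trees. Start at the root and repeatedly descend into the child with the larger set $X_{t'}$, stopping at the first node $t$ with $|X_t|\le 2n/3$. Its parent has more than $2n/3$ variables, and $t$ is the larger of the two children, so $|X_t|>n/3$. Let $I=\{i \mid x_i\in X_t\}$, so $n/3<|I|\le 2n/3$. Since subfunctions over $X_I$ and over its complement are in bijection (both counts equal the rank of the communication matrix up to trivial rows), I may replace $I$ by its complement when convenient. Concretely, I will assume $n/3\le|I|\le 2n/3$ and apply \cref{lem:prop:segment-subf} either to $I$ or to $[n]\setminus I$. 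If swapping is awkward, the lemma can simply be applied to $I$ directly with a suitable window.

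\textbf{Choosing the window.} I need $p,\ell$ such that $J=I\cap[p;p+\ell]$ has size $t'=\Omega(n)$, $\ell\le n-|I|$, and $n\ge p+\ell+t'/2$. Since $n-|I|\ge n/3$, I take $\ell\approx n/3$ (this is where $7\mid n$ helps with integrality, e.g.\ windows of length $2n/7$ or $3n/7$). Partition $[1;\,4n/7]$ into two windows of length $2n/7$. Since $|I|\ge n/3$ and only $3n/7$ indices lie beyond $4n/7$, $I$ has at least $n/3-3n/7<0$ there, which is not enough. So I would instead do a sliding/averaging argument: cover $[n]$ by windows of length $\ell=2n/7$ starting at $p\in\{1,\,1+2n/7,\dots\}$, and choose the best window whose right end satisfies $p+\ell\le n-t'/2$. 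If $I$ is concentrated near the right end, I swap to the complement, which is then dense on the left.

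The main obstacle is this case analysis, which must guarantee $|J|\ge \varepsilon n$ in an admissible window. I expect to handle it as follows: either $|I\cap[1;5n/7]|\ge n/7$, and pigeonhole gives a window of length $\le 5n/14$ with $\ge n/14$ elements and slack $n-(p+\ell)\ge 2n/7\ge t'/2$; or $I$ has $\ge n/3-n/7$ elements in $[5n/7;n]$, in which case the complement has many elements in $[1;5n/7]$ and I apply the same argument to it. I would tune the constants with sevenths until $\ell\le n/3$ and $n\ge p+\ell+t'/2$ hold, possibly discarding elements of $J$ to make $t'$ even. The conclusion is $\binom{t'}{t'/2}\ge 2^{t'}/(t'+1)=2^{\Omega(n)}$.
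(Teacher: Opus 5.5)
Your overall plan matches the paper's. You choose a node $t$ with $n/3\le |X_t|\le 2n/3$ (your descent argument correctly justifies a step the paper leaves implicit). You then apply \cref{lem:prop:segment-subf} to $I=\{i\mid x_i\in X_t\}$ with a suitable window.

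\textbf{The gap is your second case.} There you replace $I$ by its complement, on the grounds that $X_I$-subfunctions and $(X\setminus X_I)$-subfunctions are in bijection because both counts ``equal the rank of the communication matrix''. That is false:
\begin{itemize}
\item The $X_I$-subfunctions are the distinct rows of the matrix indexed by $2^{X_I}\times 2^{X\setminus X_I}$, and the $(X\setminus X_I)$-subfunctions are its distinct columns. Neither count is the rank.
\item The two counts can be exponentially far apart. For $\MUX$, the $k$ address bits induce $2^k$ subfunctions while the $2^k$ data bits induce $2^{2^k}$. In general, $r$ distinct rows only bound the number of distinct columns by $2^r$, so a $2^{cn}$ bound on the complement side yields only $cn$ subfunctions over $X_I$.
\item Even a correct count over $X\setminus X_t$ would not help. \cref{thm:prop:tdd-lowerbound} and $\fw{\HWB_n}[T]$ only involve sets $X_t$ for nodes $t$ of $T$, and $X\setminus X_t$ is generally not of that form.
\end{itemize}
So case 2 has no valid argument as written.

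\textbf{The fix: the swap is unnecessary.} The complement enters \cref{lem:prop:segment-subf} only through its size ($\ell\le n-|I|$), not through where its elements lie. You can therefore always work with $I$ itself, and you only need the window to stay away from the right end.
\begin{itemize}
\item Since $|I|\ge n/3$ and $[6n/7+1;n]$ contains only $n/7$ indices, at least $4n/21$ elements of $I$ lie in $[1;6n/7]$.
\item Cover $[1;6n/7]$ by the six windows $[in/7+1;(i+1)n/7+1]$, $0\le i\le 5$. One of them meets $I$ in $s\ge 2n/63$ elements.
\item With $\ell=n/7$ you get $\ell\le n/3\le n-|I|$, and $p+\ell+s/2\le 6n/7+1+(n/7+1)/2\le n$ for $n\ge 21$.
\item Hence there are ${s\choose s/2}\ge 2^{\Omega(n)}$ subfunctions over $X_t$.
\end{itemize}
This is essentially the paper's proof, with no case split. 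The paper discards $J_0$ and $J_6$ and pigeonholes over the five middle blocks, obtaining $s\ge n/105$.

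Separately, your case-1 window of length $5n/14$ breaks $\ell\le n-|I|$ when $|I|$ is close to $2n/3$, since $5n/14>n/3$. Blocks of length $n/7$ avoid this issue as well.
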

\begin{toappendix}
\begin{proof}[Proof of \cref{thm:prop:hwb-tdd}]
	Let $T$ be a vtree over $\{x_1,\dots,x_n\}$ and let $t$ be a node of $T$ such that $n/3 \leq |X_t| \leq 2n/3$. We let $I = \{i \mid x_i \in X_t\} \subseteq [n]$. We show that we can find a sub-interval $J = [p;p+\ell]$ respecting the conditions of \cref{lem:prop:segment-subf} with $t = \beta n$ for some constant $\beta$. This is enough to prove the claim as ${\beta n \choose \beta n/2} \geq 2^{\beta n \over 2}$.
	
	We start by splitting $[n]$ into seven segments of size $n/7$: $J_0 = [1, n/7], \dots, J_6 = [6n/7, n]$. Since $J_0 \cup J_6$ has size at most $2n/7$, $I \setminus (J_0 \cup J_6)$ has size at least $|I| - |J_0| - |J_6| \geq (1/3-2/7)n = n/21$. It follows that there must be $J_i$ for $1 \leq i \leq 5$ such that $J_i \cap I$ has size $s \geq (n/21)/5 = n/105$. We let $p = i(n/7)+1$, $\ell = n/7$, $J = J_i = [p;p+\ell]$ and $s = |J \cap I| \geq n/105$.
	We have to verify the conditions of \cref{lem:prop:segment-subf}, namely:
	\begin{itemize}
		\item $\ell \leq n-|I|$. This follows as $\ell = n/7 \leq n/3 = n - 2n/3 \leq n-|I|$ since $|I| \leq 2n/3$.
		\item $n \geq p+\ell+s/2$. In our case, $p+\ell+s/2 = i(n/7)+1+n/7+n/210 \leq 6n/7+1+n/210 \leq n$.
	\end{itemize}
	Hence, by \cref{lem:prop:segment-subf}, $\HWB_n$ has at least $2^{s/2} \geq 2^{n/210}$ subfunctions over $X_t$.
\end{proof}
\end{toappendix}

\textbf{SDD.} TDDs and SDDs have a lot in common: they are both restrictions of structured deterministic DNNF with canonical representations, efficient negations and efficient apply. 
%
%
%

\begin{toappendix}
  \textbf{SDD.} We recall here the definition of SDDs and their restrictions. SDD~\cite{Darwiche11} is a restriction of structured d-DNNF and a natural generalization of OBDD based on the notion of $X$-partitions. An \emph{$X$-partition} is a set of Boolean functions $p_1,\dots,p_k$ over $X$ such that for every $\tau \in 2^X$, there exists a unique $i\leq k$ such that $\tau$ is a model of $p_i$.
Let $T$ be a vtree over variables $X$ and $t$ a node of $T$. An SDD $C$ that respects $t$ is either:
\begin{itemize}
	\item A single gate $g$ labeled by a constant $\{0,1\}$, computing the respective constant function,
	\item If $t$ is a leaf of $T$ labeled by $x$, it can be a single gate $g$ labeled by a literal $x$, $\neg x$. In this case, it computes the Boolean function associated to each literal.
	\item If $t$ is an internal gate of $T$ with children $t_1,t_2$ then $C$ is given by a gate $g$ labeled by $\{(p_1,s_1),\dots,(p_k,s_k)\}$ where for every $i \leq k$, $p_i$ is an SDD respecting $t_1$, $s_i$ is an SDD respecting $t_2$ and $\{p_1, \dots, p_k\}$ is a $X_{t_1}$-partition. In this case, the models of $g$ are the assignments of the form $\tau_i \times \sigma_i$ where $\tau_i$ is a model of $p_i$ and $\sigma_i$ a model of $s_i$.
\end{itemize}


An SDD is \emph{trimmed} if it has neither an internal gate of the form $\{(\top,\alpha), (\bot, \alpha)\}$ nor $\{(\top,\alpha)\}$.
An SDD is \emph{compressed} if every internal node has the property that $s_i \neq s_j$ for every $i < j \leq k$. One can transform an SDD to a compressed SDD by duplicating
gates that are used several times in every internal node. This transformation cannot be done in polynomial space: the duplication may cascade, resulting in an exponential blow up. Compressed and trimmed SDD are canonical: for every Boolean function $f$ over variables $X$ and vtree $T$, there is a unique, trimmed and compressed SDD respecting $T$ and computing $f$~\cite{Darwiche11}.

\end{toappendix}

Bova~\cite{Bova16} proved that $\HWB_n$ can be computed by an SDD of size $O(n^3)$. He also constructs a Boolean function $F(X,Y)$ such that $F(X,1,\dots,1) = \HWB_n(X)$ which has a compressed SDD of size $O(n^3)$. This establishes:

\begin{theorem}
Compressed SDD cannot be polynomially simulated by TDD.
\end{theorem}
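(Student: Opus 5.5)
The plan is to exhibit a family of functions with polynomial-size compressed SDDs but only exponential-size TDDs, under any vtree. The natural candidate is Bova's function $F_n(X,Y)$ with $F_n(X,1,\dots,1)=\HWB_n(X)$ and compressed SDDs of size $O(n^3)$. It then suffices to show that every TDD for $F_n$, for every vtree over $X\cup Y$, has size $2^{\Omega(n)}$. I would not argue about $\HWB_n$ directly: Bova's $O(n^3)$ SDD for $\HWB_n$ need not be compressed, and compressed SDDs are the canonical form we want to separate from.

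The key step is to transfer the lower bound of \cref{thm:prop:hwb-tdd} through conditioning. Let $C$ be a TDD for $F_n$ respecting some vtree $T$ over $X\cup Y$. Apply \cref{thm:prop:conditioning} once for each $y\in Y$, setting $y$ to $1$. This produces a TDD $C'$ that computes $F_n(X,1,\dots,1)=\HWB_n(X)$. It respects the vtree $T\setminus Y$ over $X$ and satisfies $|C'|\le|C|$, since conditioning never increases size or width.

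By \cref{thm:prop:hwb-tdd}, any TDD computing $\HWB_n$ under any vtree has size at least $2^{cn}$, at least when $n$ is a multiple of $7$. Hence $|C|\ge 2^{cn}$. Since the number of variables of $F_n$ is polynomial in $n$, while its compressed SDD has size $O(n^3)$, no polynomial simulation of compressed SDD by TDD exists.

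An alternative, avoiding conditioning, is a subfunction-counting argument. Take a node $t$ of $T$ with $n/3\le |X_t\cap X|\le 2n/3$, which can be found by descending from the root. Distinct $X_t\cap X$-subfunctions of $\HWB_n$ stay distinct after extending the assignments with $y=1$ on $Y\cap X_t$, so \cref{thm:prop:tdd-lowerbound} applies. The conditioning route is cleaner, and I would use it.

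The main obstacle is making sure Bova's construction really provides what is needed. I would check three things:
\begin{itemize}
\item that $F_n$ has a compressed (and trimmed) SDD of polynomial size, not merely an SDD, since that is the claimed property;
\item that $Y$ has only polynomially many variables, so that $2^{cn}$ remains superpolynomial in the total input size;
\item that $n$ can be restricted to multiples of $7$, or else padding by dummy variables handles the remaining $n$.
\end{itemize}
I would also verify that repeated application of \cref{thm:prop:conditioning} is legitimate when some $y$ does not syntactically occur in $C$. In that case \cref{thm:prop:remove-cst} removes the variable directly, so the argument still goes through.
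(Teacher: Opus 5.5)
Your proposal is correct and follows essentially the same route as the paper. The paper's one-line justification also takes Bova's function $F(X,Y)$ with $F(X,1,\dots,1)=\HWB_n(X)$ and a compressed SDD of size $O(n^3)$. The separation then follows exactly as you argue: conditioning on $Y=1$ (\cref{thm:prop:conditioning}) does not increase TDD size, so the exponential lower bound of \cref{thm:prop:hwb-tdd} for $\HWB_n$ transfers to every TDD for $F$.
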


The other way around, it turns out that we can always simulate a TDD by a polynomial-size SDD. If we allow encoding variables, since a TDD and its negation are both polynomial-size structured d-DNNFs, it follows by~\cite[Theorem 1]{BolligF21}. We show below that this is possible even without the encoding variables: 

\begin{theorem}[$\star$]
  \label{thm:tdd<sdd}
  Given a TDD $C$ respecting vtree $T$, one can construct a vtree $T'$ and an SDD $C'$ respecting $T'$ such that $C'$ computes the same function as $C$ and $|C'| = O(|C|^2)$. 
\end{theorem}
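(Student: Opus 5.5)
The plan is to translate $C$ node by node, grouping the inputs of each $t$-node by their left component so that the primes of each SDD gate form an $X_{t_1}$-partition. The step I cannot yet justify is the $O(|C|^2)$ size bound: under the first choice below the recursion may create more disjunctions than that. I give a fallback, but it is not checked.

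\textbf{Step 1 (full TDD).} By \cref{prop:prop:tdd-full}, I first replace $C$ by an equivalent full TDD. The size stays $O(|C|)$, up to the additive $2|X|k^2$ term, which the $O(|C|^2)$ bound absorbs. By \cref{thm:prop:tdd-double-det}, for every internal node $t$ with children $t_1,t_2$, the functions $\{f_{g_1} \mid g_1\in N_{t_1}\}$ are pairwise disjoint, and fullness makes them cover $2^{X_{t_1}}$. So the $t_1$-nodes form an $X_{t_1}$-partition.

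\textbf{Step 2 (one SDD gate per $t$-node, first with $T'=T$).} For a $t$-node $g$ I use the identity
\[
f_g=\bigvee_{g_1\in N_{t_1}} f_{g_1}\wedge\Big(\bigvee_{g_2\in\mathsf{sib}(g_1,g)} f_{g_2}\Big),
\]
where an empty disjunction is $0$. This is an SDD gate with prime $f_{g_1}$ and sub $\bigvee_{g_2\in \mathsf{sib}(g_1,g)} f_{g_2}$. The primes are SDDs obtained recursively and form a partition by Step 1. The subs are disjunctions of \emph{sets} of $t_2$-nodes. Since the $t_2$-nodes are disjoint and full, such a disjunction $f_S$ for $S\subseteq N_{t_2}$ has the same form one level down:
\[
f_S=\bigvee_{h_1} f_{h_1}\wedge f_{S(h_1)},\qquad S(h_1)=\bigcup_{g\in S}\mathsf{sib}(h_1,g).
\]
So I would build SDD gates for every pair $(t,S)$ that is generated, starting from the singletons $\{g\}$ and closing under $S\mapsto S(h_1)$.

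\textbf{Step 3 (size bound, the main obstacle).} The difficulty is that the family of generated sets may grow along the vtree, so $T'=T$ with this naive closure need not give $O(|C|^2)$. What I would try first is to show that every generated set at $t_2$ has the form $\mathsf{sib}(g_1,g)$, or a union of such sets indexed by one parent-level set. This would bound the number of sets per vtree node by the number of input pairs at the parent, giving $\sum_t |E_t|\cdot k = O(|C|^2)$ overall.

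\textbf{Fallback: changing the vtree.} If the closure does not stay small, I would use the freedom in $T'$ to avoid disjunctions of sets. I would refine $T$ so that, at each internal node, one of the two sides is "linearized" in the OBDD-like fashion of \cref{thm:obdd-tdd-linear}, where subs are single nodes. The aim is that each sub is a single TDD node, one gate per sibling set, costing at most $|E(g)|$ gates of size $O(|C|)$ per input pair. Summing over pairs gives $O(|C|^2)$.

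In either version, correctness is the routine check that each constructed gate computes the claimed function, using the displayed identities together with the partition property from Step 1. The real work is the counting in Step 3.
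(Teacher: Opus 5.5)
Your Step 1 and the identity in Step 2 are fine, and the paper also starts by making $C$ full. The gap is in Step 3. The structural claim you hope for there is false, and no counting argument can give a polynomial bound while keeping $T'=T$ with primes on the $t_1$ side. Unwind your recursion: start from a $t$-node $g$, descend $d$ levels along the right path of $t$, and choose nodes $h_1,\dots,h_d$ at the left-hanging siblings. The generated set $S$ then satisfies $f_S=f_g[\tau_1\times\dots\times\tau_d]$ for any $\tau_i\models h_i$. By fullness, every assignment to the left-hanging variables arises this way. So the number of sets you need is at least the number of such subfunctions of $f_g$. These are subfunctions induced by the \emph{complementary} side of the cut, and their number can be exponential in the width. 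The paper's multiplexer example makes this concrete. The small TDD for $\mathsf{MUX}_n$ respecting $T_{\pi^{-1}}$ given by \cref{thm:obdd-tdd-linear} has the leaf as $t_1$ at every vertex. At depth $d$ your closure must contain $2^d$ distinct sets, because distinct assignments to $y_{n-1},\dots,y_{n-d}$ induce distinct subfunctions. Indeed, as the paper observes, every SDD respecting that vtree has exponential size. So the vtree has to change.

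Your fallback does not supply that change. It defines neither $T'$ nor the gates. Also, \cref{thm:obdd-tdd-linear} only applies to TDDs whose vtree is already linear, and you give no way to move a TDD to a partially linearized vtree without blow-up. Finally, the aim that each sub be a single TDD node has the roles reversed. In the linear case (TDD on $T_{\pi^{-1}}$, OBDD on $\pi$, SDD on $T_\pi$), the primes are single leaf nodes, and the subs are the OBDD suffix functions, i.e.\ quotients of the output by a TDD node.

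This quotient is the missing idea, and the paper uses it. Let $r$ be a $t$-node and $n$ a node at a vertex $u$ on the rightmost path below $t$. Define $r|n$ over $X_t\setminus X_u$ as accepting $\tau$ iff $\tau\times\sigma\models r$ for some $\sigma\models n$. With $p$ the parent and $s$ the sibling of $u$,
\[
r|n=\bigvee_{i} f_{s_i}\wedge (r|p_i),
\]
where $p_i$ ranges over the $p$-nodes having an input pair formed by $n$ and an $s$-node $s_i$. The $s_i$ are pairwise distinct by determinism. Adding the remaining $s$-nodes with sub $\bot$ gives an $X_s$-partition by fullness. Add the base case $r|r=\top$ and the rule $r=(x\wedge r|n_1)\vee(\neg x\wedge r|n_0)$ for the rightmost leaf variable $x$ below $t$. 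This yields partition gates respecting the vtree $\tilde T$ obtained by reversing every right path, with one gate per pair $(r,n)$; that is where the quadratic bound comes from.
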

\begin{toappendix}

\begin{proof}[Proof of \cref{thm:tdd<sdd}.]
  We first assume that $C$ is full, that is, for every node $t$ of $T$, $\bigvee_{r} r = \top$, where $r$ runs over the set of $t$-nodes. This can be done by increasing the width of $C$ by at most $1$ by \cref{prop:prop:tdd-full}. Moreover, for simplicity, we assume that $C$ does not use constant gates: we remove $\bot$-gates as in \cref{thm:prop:zero-elim} and we replace $\top$-gates with two gates $x$, $\neg x$. For a node having an input of the form $(g, \top)$ for some $\top$-gate of $C$, we replace it with two inputs $(g,x)$ and $(g,\neg x)$.

  Let $r$ be a $t$-node and let $n$ be one of its descendants. By definition, $n$ is a $u$-node for some node $u$ of $T$ which is a descendant of $t$. Define $r|n$ to denote the Boolean function over variables $Y_t = X_t \setminus X_u$ whose models are the assignments $\tau \in 2^{Y_t}$ such that there exists a model $\sigma \in 2^{X_u}$ of $n$ such that $\tau \times \sigma$ is a model of $r$. We have:
\begin{lemma} \label{lem:decomp}
Let $n_1, \dots, n_k$ be the $u$-nodes that are descendants of $r$. Then, 
\[ r \equiv \bigvee_i n_i \land (r|n_i) . \]
\end{lemma}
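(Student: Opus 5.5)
We prove the equivalence semantically: both sides are Boolean functions over $X_t = Y_t \uplus X_u$, so we fix an assignment $\tau \times \sigma$ with $\tau \in 2^{Y_t}$ and $\sigma \in 2^{X_u}$ and show that it is a model of $r$ if and only if it is a model of the disjunction. Here we read ``the $u$-nodes that are descendants of $r$'' as all $u$-nodes reachable from $r$. Since $C$ is full and deterministic, \cref{thm:prop:tdd-double-det} gives the key fact: every $\sigma \in 2^{X_u}$ satisfies exactly one $u$-node of $C$. This fact drives both directions.

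For the direction from left to right, let $\tau \times \sigma$ be a model of $r$. We apply \cref{prop:prop:tdd-pt-models} to the sub-TDD rooted at $r$, restricted to the subtree of $T$ rooted at $t$. This gives a certificate whose $u$-node $n$ is satisfied by $\sigma$. Because the certificate follows input pairs downward from $r$, the node $n$ is reachable from $r$, so $n = n_i$ for some $i$. By the definition of $r|n_i$, the assignment $\tau$ is a model of $r|n_i$, witnessed by $\sigma$ itself. Hence $\tau \times \sigma$ satisfies $n_i \wedge (r|n_i)$.

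For the converse, suppose $\sigma$ satisfies $n_i$ and $\tau$ satisfies $r|n_i$. By definition there is some $\sigma'$ satisfying $n_i$ such that $\tau \times \sigma'$ is a model of $r$. The certificate of $\tau \times \sigma'$ passes through the unique $u$-node satisfied by $\sigma'$, namely $n_i$. We now swap the part of the certificate below $u$: we replace it with a certificate for $\sigma$ at $n_i$, which exists by \cref{prop:prop:tdd-pt-models} since $\sigma$ satisfies $n_i$. This is the same swapping argument as in the proof of \cref{lem:prop:gate-subfunction}. The result is a valid certificate for $\tau \times \sigma$ at $r$, so $\tau \times \sigma$ is a model of $r$.

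The one delicate point is the swapping step. We must check that the certificate above $u$ constrains the subtree at $u$ only through the choice of the $u$-node, and that the parts of the certificate on siblings of the path from $t$ to $u$ are untouched; both follow from the tree-shaped definition of certificates. We also note that the disjunction is in fact deterministic, because the $n_i$ have disjoint models. This is presumably what the later SDD construction will use to obtain partitions, but the equivalence itself does not need it.
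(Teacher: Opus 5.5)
Your proof is correct and follows the paper's argument. The left-to-right direction is the same certificate/path argument that the paper carries out by induction on the distance between $t$ and $u$. Your converse is more careful than the paper's, which dismisses that direction as ``by definition''; as you observe, it actually needs determinism (so that the certificate of $\tau \times \sigma'$ passes through $n_i$) and the swapping argument of \cref{lem:prop:gate-subfunction}. Fullness is not needed there, only determinism.
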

\begin{proof}
  By definition, every model on the right-hand side of the equation is a model of $r$. We now show that every model of $r$ is of the form $n_i \land (r|n_i)$ for some descendant $n_i$ of $r$. The proof is by induction on the distance between $t$ and $u$. If $u$ is a child of $t$, it is clear because every model of $r$ is a model $\tau$ of $g_1 \wedge g_2$ where $g_2$ is a $u$-node. Otherwise, let $u'$ be the parent of $u$. We have by induction that every model of $r$ is a model of $n' \land (r|n')$ for some $u'$-node $n'$ which is a descendant of $g_2$, hence of $r$. Now the models of $n'$ are models of $m_1 \wedge n$ where $n$ is a $u$-node. Hence $\tau|_{X_u}$ is a model of $n$, which is a $u$-node and a descendant of $r$, which proves the induction. 
\end{proof}

First, we assume that the inner nodes of $T$ have a left child and a right child, chosen arbitrarily. We first construct a DNNF $C'$ computing the same function as $C$ such that every $\vee$-node of $C'$ is a \emph{partition node}, that is, a node of the form $\bigvee_{i=1}^k p_i \wedge s_i$ where $p_i$ and $p_j$ have disjoint models for every $i < j \leq k$ and $\bigvee_{i=1}^k p_i = \top$. 

Circuit $C'$ contains the following gates:
\begin{itemize}
\item a gate $g_{r|n}$, computing $r|n$ for every node $r$ of $C$ and $n$ where $n$ is a right descendant of $r$,
\item a gate $h_r$ computing $r$ for every node $r$ of $C$
\end{itemize}
We construct gates $g_{r|n}$ and $h_r$ by induction on the distance between $n$ and $r$ and bottom up induction on~$r$.

\underline{Base case:} $\forall n, g_{n|n} = \top$. Moreover, if $n$ is an input node of $C$, then we let $h_n$ be an input node labeled in the same way as $n$.

\underline{Recursive case 1:}  Let $r$ and $n$ be two nodes such that $n$ is a right descendant of $r$. Assume that we have constructed in $C'$ a gate computing $g_{r'|n'}$ for every pair $n'$, $r'$ of nodes at distance strictly smaller than the distance between $n$ and $r$ and assume we have constructed $h_s$ for every descendant $s$ of $r$.

Let $(p_1,s_1),\dots,(p_k,s_k)$ be the set of nodes such that $(n,s_i)$ is an input of $p_i$. Observe that $s_i$ and $s_j$ are distinct nodes for $i \neq j$ because the pair $(n,s_i)$ can be the input of at most one $p_i$. Moreover, they have disjoint models. Furthermore, by \cref{lem:decomp}, we have: 
\[ r|n = \bigvee_{i=1}^k (p_i|n) \wedge ((r|n) | p_i) \]
which simplifies to
\[ r|n = \bigvee_{i=1}^k s_i \wedge (r|p_i). \]

Let $s_{k+1}, \dots, s_c$ be the nodes respecting the same vtree node as $s_1,\dots,s_k$ but which are not siblings of $n$. We compute $g_{r|n}$ as
\[\bigvee_{i=1}^k h_{s_i} \wedge g_{r|p_i}  \vee \bigvee_{i=k+1}^c h_{s_i} \wedge \bot. \]
which have all been precomputed by induction.  Moreover, the models of $s_i$ and $s_j$ are all disjoint. Since we also assumed $C$ to be full, we have $\bigvee_{i=1}^c s_i= \top$. Hence we compute $g_{r|n}$ using a partition node. 

\underline{Recursive case 2:} Let $r$ be a node such that $g_{r'|n}$ and $h_{r'}$ have been computed for every descendant $r'$ of $r$. Moreover, assume we have computed $g_{r|n}$ for every right descendant of $r$. Consider the rightmost descendants of $r$. They must be inputs of the circuit, and we assumed that all inputs are labeled by a literal. In particular, either $r$ has two descendants $n_1,n_0$ labeled respectively $x$ and $\neg x$, in which case, we let $r = (x \wedge r|x) \vee (\neg x \wedge r|\neg x) = (x \wedge g_{r|n_1}) \vee (\neg x \wedge g_{r|n_0})$. In this case, we have already constructed $g_{r|n_0}$ and $g_{r|n_1}$ and we have, indeed, a partition node computing $h_r$. 

Or $r$ only has one rightmost descendant $n$ labeled by literal $\ell$. In this case, we let $r = (\ell \wedge g_{r|n}) \vee (\neg \ell \wedge \bot)$ which is also a partition node computing $h_r$.

This concludes the induction: $C'$ is by construction DNNF with only partition nodes. Moreover, the size of $C'$ is quadratic in the size of $C$ by construction. Finally $C'$ contains a gate $h_{\out}$ computing the same function as $C$.

To show that $C'$ is an SDD, it remains to show that $C'$ respects a vtree $T'$. Given a vtree $T$, we define $\tilde{T}$ to be the vtree defined as follows:
\begin{itemize}
\item if $T$ is a leaf labeled by $x$, then $\tilde{T} = T$.
\item Otherwise, let $t_1, \dots, t_n$ be the rightmost path in $T$ and let $T_i$ be the left subtree of $t_i$ for $i<n$ and let $x$ be the variable labeling $t_n$. We define $\tilde T$ as the tree obtained by branching a leaf labeled by $x$ as the left child of the root. The right child of the root is the right path $t_{n-1}', \dots, t_{2}'$ such that the left subtree of $t_i'$ is $\tilde{T_i}$. The right subtree of $t'_{2}$ is $\tilde{T}_{1}$.
\end{itemize}

We can see that $C'$ respects $\tilde{T}$. 
Indeed, let $r$ be a $t$-node of $C$. It can be checked that in $\tilde{T}$, there is a node $\tilde{t}$ such that the left child of $\tilde{t}$ is a leaf labeled by variable $x$,  which is the label of the rightmost leaf below $t$ and the variables below the right subtree of $\tilde{t}$ contain $X_t \setminus \{x\}$. Hence $h_r$ respects $\tilde{t}$.

Now assume $n$ is a right descendant of $r$. Let $u$ be the node respected by $n$, $p$ the parent of $u$ and $s$ the sibling of $u$. By construction of $C'$, node $g_{r|n}$ of $C'$ splits the variables into $\var{s}$ and $\var{t} \setminus (\var{s} \cup \var{u}) = \var{t} \setminus \var{p}$. Now, in $\tilde{T}$, the path from $p$ to $t$ is reversed so there is a node $\tilde{p}$ such that the left subtree of $\tilde{p}$ is $\tilde{T_s}$ hence $\var{\tilde{p}} = \var{s}$ and the right subtree of $\tilde{p}$ contains $\var{t} \setminus \var{p}$. Hence $g_{r|n}$ respects $\tilde{p}$ in $\tilde{T}$ which proves that $C'$ respects $\tilde{T}$, that is, $C'$ is an SDD.

\end{proof}
\end{toappendix}

The resulting SDD, whose construction is given in the appendix, does not respect the same vtree as the original TDD. This is unavoidable. Indeed, consider the \emph{Multiplexer} function $\MUX_n(x_0,\dots,x_{k-1},y_0,\dots,y_{n-1})$ which has $k+n$ variables where $n=2^k$ and is satisfied if and only if $y_{[x]_2} = 1$ where $[x]_2 = \sum_{i=0}^{k-1} x_i \cdot 2^i$. Let $\pi$ be the order $(x_0,\dots,x_{k-1},y_0,\dots,y_{n-1})$. It is not hard to see that there is an OBDD respecting $\pi$ of size $O(n)$ computing $\MUX_n$~\cite[Theorem 4.3.2]{Wegener00}. Hence, there is an SDD of size $O(n)$ respecting $T_\pi$ computing $\MUX_n$. However, one can also prove that any OBDD respecting $\pi^{-1}$ and computing $\MUX_n$ must have size $2^{n}$. Indeed, we have one $Y$-subfunction per assignment of the $Y$ variables: if $\tau,\sigma \in 2^Y$ are distinct, then let $y_i$ be such that, wlog, $1 = \tau(y_i) \neq \sigma(y_i) = 0$. Then $\MUX_n[\tau] \neq \MUX_n[\sigma]$ because they differ on $\alpha_i$, the assignment of $X$ variables encoding $i$ in binary.
In other words, every SDD respecting $T_{\pi^{-1}}$ and computing $\MUX_n$ must have size $2^{n}$. But there is a TDD of size $O(n)$ computing $\MUX_n$ and respecting $T_{\pi^{-1}}$ by \cref{thm:obdd-tdd-linear}. 

We note however that the construction from \cref{thm:tdd<sdd} does not give a compressed, hence not canonical, SDD. It is not clear whether we can always build a polynomial-size canonical SDD equivalent to a given TDD. We leave open the question of comparing canonical SDD and TDD.

\section{Conclusion}

In this paper, we have introduced a new data structure for representing Boolean functions that offers advantages similar to OBDD but can handle bounded treewidth instances. The main advantage of these data structures over the existing ones such as deterministic DNNF or SDD is that they can be minimized into a canonical circuit for which the size and width can be easily understood. While SDDs also have canonical representations, those are not minimal representations, and the canonical representation may be exponentially larger than the minimal one~\cite{van2015role}. Our approach allows to recover compilation results in a clean and modular way.

Several research directions remain concerning TDD. First, it would be interesting to implement a bottom-up compiler with TDD as a target language and perform a comparison with OBDD and SDD compilers. To make TDD competitive, it might be necessary to study a variant that is non-smooth, i.e., allowing $t$-nodes to have as inputs $u$-nodes where $u$ is a descendant of $t$ in the vtree but not necessarily a child. While this can only lead to polynomial size gains, it could make a big difference in practice. Second, and related to this, the question of finding a good vtree in practice remains open. The SDD compiler uses local changes in the vtree to find a better one, and adapting it to TDD may be promising. Vtree changes for the related model of probabilistic circuits have also been studied~\cite{ZhangWAB25}. Generally, we think it would be useful to understand the complexity of transforming a TDD respecting vtree $T$ into an equivalent canonical TDD respecting vtree $T'$, measured in the input and output size.
Finally, an interesting application of TDD that we feel is worth exploring is extending bottom-up compilation to the setting of~\cite{ColnetSZ24} where a conjunction of constraints represented as OBDDs is compiled into a d-DNNF. When the incidence graph of this conjunction has bounded incidence treewidth, and the constraints can be represented by OBDDs with any order, then it is possible to construct an FPT size d-DNNF. It seems that if the constraints are all represented by TDDs using the same vtree, a bottom-up compilation could give similar and slightly more general results. 


\bibliography{main}

\end{document}